\def\env@cases{%
  \let\@ifnextchar\new@ifnextchar
  \left\lbrace
  \def\arraystretch{1}%
  \array{@{}l@{\quad}l@{}}}
\def\thm@space@setup{\thm@preskip=2.2pt
\thm@postskip=0pt}
\newtheorem{theorem}{Theorem}
\newtheorem{lemma}{Lemma}
\newtheorem{proposition}{Proposition}
\theoremstyle{definition}
\newtheorem{assumption}{Assumption}
\theoremstyle{remark}
\newtheorem{remark}{Remark}
\newcommand*{\addFileDependency}[1]{
  \typeout{(#1)}
  \@addtofilelist{#1}
  \IfFileExists{#1}{}{\typeout{No file #1.}}
}
\newcommand*{\myexternaldocument}[1]{%
    \externaldocument{#1}%
    \addFileDependency{#1.tex}%
    \addFileDependency{#1.aux}%
}
 \let\MYoriglatexcaption\caption
 \renewcommand{\caption}[2][\relax]{\MYoriglatexcaption[#2]{#2}}
\newcounter{appendixx}
\renewcommand{\theappendixx}{\Alph{appendixx}}
\begin{document}
\onecolumn
\pagenumbering{gobble}
\setlist[itemize]{leftmargin=*}

\twocolumn
\title{Parameterized MDPs and Reinforcement Learning Problems - A Maximum Entropy Principle Based Framework}
\author{Amber~Srivastava and~Srinivasa~M~Salapaka,~\IEEEmembership{Member,~IEEE}
\vspace{-0.8cm}
\thanks{The paper has been accepted for publication in IEEE Transactions on Cybernetics. The Institute of Electrical and Electronics Engineers, Incorporated (the "IEEE") holds all rights under copyright.}
\thanks{This work was supported by NSF grant ECCS (NRI) 18-30639, DOE award DE-EE0009125, and Dynamic Research Enterprise for Multidisciplinary Engineering Sciences (DREMES) - collaboration between Zhejiang University and the University of Illinois at Urbana-Champaign.}
\thanks{The authors are with the Mechanical Science and Engineering Department and Coordinated Science Laboratory, University of Illinois at Urbana-Champaign, IL, 61801 USA. E-mail: \{asrvstv6, salapaka\}@illinois.edu.}}
\maketitle
\begin{abstract}
We present a framework to address a class of sequential decision making problems. Our framework features learning the optimal control policy with robustness to noisy data, determining the unknown state and action parameters, and performing sensitivity analysis with respect to problem parameters. We consider two broad categories of sequential decision making problems modeled as infinite horizon Markov Decision Processes (MDPs) with (and without) an absorbing state. The central idea underlying our framework is to quantify exploration in terms of the Shannon Entropy of the trajectories under the MDP and determine the stochastic policy that maximizes it while guaranteeing a low value of the expected cost along a trajectory. This resulting policy enhances the quality of exploration early on in the learning process, and consequently allows faster convergence rates and robust solutions even in the presence of noisy data as demonstrated in our comparisons to popular algorithms such as Q-learning, Double Q-learning and entropy regularized Soft Q-learning. The framework extends to the class of parameterized MDP and RL problems, where states and actions are parameter dependent, and the objective is to determine the optimal parameters along with the corresponding optimal policy. Here, the associated cost function can possibly be non-convex with multiple poor local minima. Simulation results applied to a 5G small cell network problem demonstrate successful determination of communication routes and the small cell locations. We also obtain sensitivity measures to problem parameters and robustness to noisy environment data.
\end{abstract}
\section{Introduction}\label{sec: Intro}
\IEEEPARstart{M}{arkov} Decision Processes (MDPs) model sequential decision making problems which arise in many application areas such as robotics, sensor networks, economics, and manufacturing. These models are characterized by the state-evolution  dynamics $s_{t+1}=f(s_t,a_t)$, a control policy $\mu(a_t|s_t)$ that allocates an {\em action} $a_t$ from a control set to each state $s_t$, and a cost $c(s_t,a_t,s_{t+1})$ associated with the transition from $s_t$ to $s_{t+1}$. The goal in these applications is to determine the optimal control policy that results in a {\em path}, a sequence of actions and states, with minimum cumulative cost. There are many variants of this problem \cite{feinberg2012handbook}, where the dynamics can be defined over finite or infinite horizons; where the state-dynamics $f$ can  be stochastic; where the models for the state dynamics may be partially or completely unknown, and the cost function is not known a priori, albeit the cost at each step is revealed at the end of each transition. Some of the most common methodologies that address MDPs include dynamic programming, value and policy iterations \cite{bertsekas1996neuro}, linear programming \cite{hordijk1979linear,abbasi2014linear}, and Q-learning \cite{watkins1992q}.

In this article, we view MDPs and their variants as combinatorial optimization problems, and develop a framework based on maximum entropy principle (MEP) \cite{jaynes1957information} to address them. MEP has proved successful in addressing a variety of combinatorial optimization problems such as facility location problems  \cite{rose1991deterministic}, combinatorial drug discovery \cite{sharma2008scalable}, traveling salesman problem and its variants \cite{rose1991deterministic}, image processing \cite{yu2013maximal}, graph and markov chain aggregation \cite{xu2014aggregation}, and protein structure alignment \cite{chen2005protein}. 
MDPs, too, can be viewed as {\em combinatorial} optimization problems - due to the combinatorially large number of paths (sequence of consecutive states and actions) that it may take based on the control policy and its inherent stochasticity. In our MEP framework, we determine  a probability distribution defined on {\em the space of paths} \cite{ziebart2008maximum}, such that (a) it is the {\em fairest} distribution - the one with the maximum Shannon Entropy $H$, and  (b) it satisfies the constraint that the expected cumulative cost $J$ attains a prespecified feasible value $J_0$.  The framework results in an iterative scheme, an {\em annealing} scheme, where probability distributions are improved upon by successively lowering the prespecified values $J_0$. In fact, the lagrange multiplier $\beta$ corresponding to the cost constraint ($J=J_0$) in the unconstrained lagrangian is increased from small values (near 0) to large values to effect annealing. Higher values of multipliers correspond to lower values of the expected cost. We show that as multiplier value increases, the corresponding probability distributions become more localized, finally  converging to a deterministic policy.   

This framework is applicable to all the classes of MDPs and its variants described above. Our MEP based approach inherits the flexibility of algorithms such as deterministic annealing \cite{rose1991deterministic} developed in the context of combinatorial resource allocation, which include adding capacity, communication, and dynamic  constraints. The added advantage of our approach is that we can  draw close parallels to existing algorithms for MDPs and RL (e.g. Q-Learning) – thus enabling us to exploit their algorithmic insights. Below we highlight main contributions and advantages of our approach.

{\em Exploration and Unbiased Policy: } In the context of model-free RL setting, the algorithms interact with the {\em environment} via {\em agents} and rely upon the instantaneous cost (or reward) generated by the environment to learn the optimal policy. Some of the popular algorithms include Q-learning \cite{watkins1992q}, Double Q-learning \cite{hasselt2010double}, Soft Q-learning (entropy regularized Q-learning) \cite{fox2015taming,grau2018soft,peters2010relative,neu2017unified,asadi2017alternative,nachum2017bridging,dai2017sbeed} in discrete state and action spaces, and Trust Region Policy Optimization (TRPO) \cite{schulman2015trust}, and Soft Actor Critic (SAC) \cite{haarnoja2018soft} in continuous spaces. It is commonly known that for the above algorithms to perform well, all {\em relevant} states and actions should be explored. In fact, under the assumption that each state-action pair is visited multiple times during the learning process it is guaranteed that the above discrete space algorithms \cite{watkins1992q,hasselt2010double,fox2015taming,grau2018soft} will converge to the optimal policy. Thus, the adequate {\em exploration} of the state and action spaces becomes incumbent to the success of these algorithms in determining the optimal policy. Often the instantaneous cost is noisy \cite{fox2015taming} which hinders with the learning process and demands for an enhanced quality exploration.

In our MEP based approach, the Shannon Entropy of the probability distribution over the paths in the MDP explicitly characterizes the {\em exploration}.  The framework results in a {\em distribution over the paths} that is as {\em unbiased} as possible under the given cost constraint. The corresponding stochastic policy is maximally noncommittal to any particular path in the MDP that achieves the constraint; this results in better (unbiased) exploration. The policy starts from being entirely explorative, when multiplier value is small ($\beta\approx 0$),  and becomes increasingly {\em exploitative} as the multiplier value increases.

{\em Parameterized MDPs and RL: } These class of optimization problems are not even necessarily MDPs which contributes significantly to their inherent complexities. However, we model them in a specific way to retain the Markov property without any loss of generality, thereby making these problems tractable. Scenarios such as self organizing networks \cite{aguilar2015location}, 5G small cell network design \cite{siddique2015wireless,manganini2015policy}, supply chain networks, and last mile delivery problems \cite{9096570} pose a {\em parameterized} MDP with a two-fold objective of determining simultaneously  (a) the optimal control policy for the underlying stochastic process, and (b)  the unknown parameters that the state and action variables depend upon such that the cumulative cost is minimized. The latter objective is akin to facility location problem \cite{mahajan2009planar,abin2016querying,huang2017locally}, that is shown to be NP-hard \cite{mahajan2009planar}, and where the associated cost function (non-convex) is riddled with multiple poor local minima. \begin{wrapfigure}{r}{0.6\columnwidth}
\centering
\includegraphics[width=0.6\columnwidth]{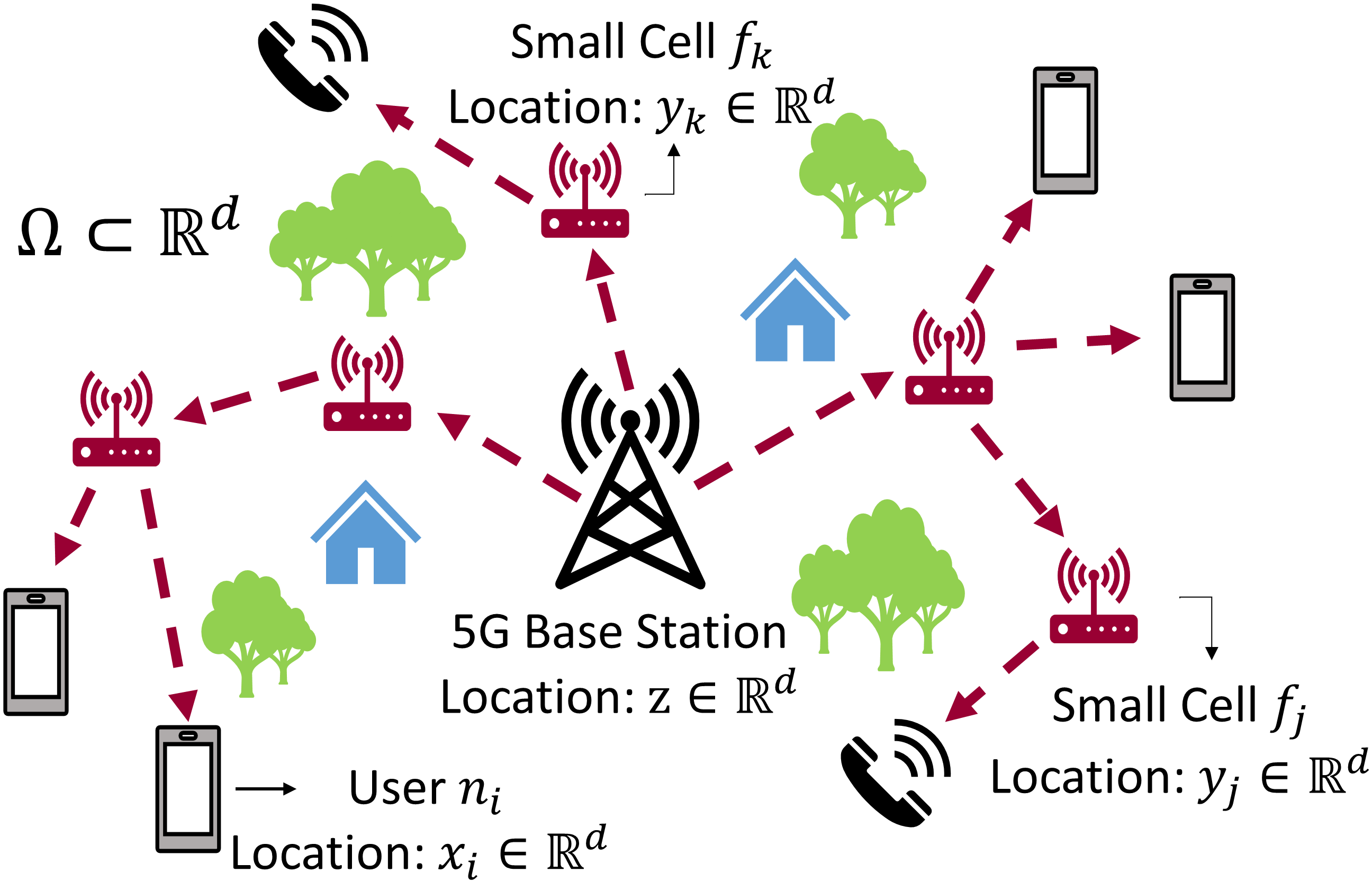}
\caption{Illustrates the 5G Small Cell Network. The objective is to determine the small cell location $\{y_j\in\mathbb{R}^d\}$ and the communication routes from the Base Station $\delta$ to each user $\{n_i\}$ via the network of the small cells.}
\label{fig: 5G_smallCell}
\vspace{-0.4cm}
\end{wrapfigure} For instance, Figure \ref{fig: 5G_smallCell} illustrates a 5G small cell network, where the objective is to simultaneously determine the locations of the small cells $\{f_j\}$ and design the communication paths (control policy) between the user nodes $\{n_i\}$ and base station $\delta$ via network of small cells. Here, the state space $\mathcal{S}$ of the underlying MDP is parameterized by the locations $\{y_j\}$ of small cells $\{f_j\}$.

{\em Algebraic structure and Sensitivity Analysis:} In our framework, maximization of Shannon entropy of the distribution over the paths under constraint on the cost function value results in an unconstrained Lagrangian - the {\em free-energy} function. This function is a {\em smooth} approximation of the cumulative cost function of the MDP, which  enables the use of calculus. We exploit this distinctive feature of our framework to determine the unknown state and action parameters in case of parameterized MDPs and perform sensitivity analysis for various problem parameters. Also, the framework easily accommodates stochastic models that describe uncertainties in the instantaneous cost and parameter values.

{\em Algorithmic Guarantees and Innovation:} For the classes of MDPs that we consider, our MEP-based framework results into non-trivial derivations of the recursive Bellman equation for the associated Lagrangian. We show that these Bellman operators are contraction maps and use their several properties to guarantee the convergence to the optimal policy and as well as to a local minima in the case of parameterized MDPs.

In the context of model-free RL, we provide comparisons with the benchmark algorithms Q, Double Q, and entropy regularized G-learning \cite{fox2015taming} (also referred to as Soft Q-learning). Our algorithms converge at a faster rate (as fast as $1.5$ times) than the benchmark algorithms across various values of discount factor, and even in the case of noisy environments. In the context of parameterized MDPs and RL, we address the small-cell network design problem in 5G communication. Here the parameters are the unknown locations of the small-cells and the control policy determines the routing of the communication packet. Upon comparison with the sequential method of first determining the unknown parameters (small cell locations) and then the control policy (equivalently, the communication paths), we show that our algorithms result into costs that are as low as $65\%$ of the former. The efficacy of our algorithms can be assessed from the fact that the solutions in the model-based and model-free cases are nearly the same. We also demonstrate sensitivity analysis, benefits of annealing  and considering entropy of distribution over the paths in our simulations on parameterized MDPs and RL. 

This paper is organized as follows. We briefly review the related work and MEP \cite{jaynes1957information} in the Section \ref{sec: MEP_Prelim}. In Section \ref{sec: gen_inst} and \ref{sec: InfiMDP} we develop MEP-based framework for MDPs. Section \ref{sec: ParMDP} builds upon the Section \ref{sec: gen_inst} to address the case of parameterized MDPs and RL problems. Simulations on a variety of scenarios are presented in Section \ref{sec: Simulations}. We discuss the generality of our framework, its capabilities, and future directions of the work in Section \ref{sec: Ana_Disc}. For the ease of reading, we provide a comprehensive list of symbols in the Section \ref{LOS} of the supplementary material.

\section{Preliminaries}\label{sec: MEP_Prelim}
{\em Related Work in Entropy Regularization: }Some of the previous works in RL literature \cite{fox2015taming, grau2018soft, peters2010relative, neu2017unified, asadi2017alternative, nachum2017bridging,dai2017sbeed,shi2019soft,xiang2019task} either use entropy as a regularization term ($-\log \mu(a_t|s_t)$) \cite{fox2015taming, grau2018soft} to the instantaneous cost function $c(s_t,a_t,s_{t+1})$ or maximize the entropy ($-\sum_a \mu(a|s)\log\mu(a|s)$) \cite{peters2010relative, neu2017unified,asadi2017alternative} associated {\em only} to the stochastic policy under constraints on the cost $J$. This results into benefits such as better exploration, overcoming the effect of noise $w_t$ in the instantaneous cost $c_t$ and obtaining faster convergence. However, the resulting stochastic policy and soft-max approximation of the value function $J$ is not in compliance with the Maximum Entropy Principle applied to the distribution over the paths of MDP. Thus, the resulting stochastic policy is biased in its exploration {\em over the paths of the MDP}. Our simulations demonstrate the benefit of {\em unbiased} exploration (in our framework) in terms of faster convergence and better performance in noisy environment in comparison to the entropy regularized benchmark algorithm.

{\em Related Work in Parameterized MDPs and RL: }The existing solution approaches \cite{ hordijk1979linear, bertsekas1996neuro,abbasi2014linear} can be extended to the parameterized MDPs by discretizing the parameter domain. However, the resulting problem is not necessarily an MDP as every transition from one state to another is dependent on the path (and the parameter values) taken till the current state. Other related approaches for parameterized MDPs are case specific; for  instance,  \cite{xia2015parameterized} presents action-based parameterization of state space with application to service rate control in closed Jackson networks, and \cite{hausknecht2015deep, masson2016reinforcement, wei2018hierarchical, xiong2018parametrized,narayanan2017event,ccilden2014toward} incorporate parameterized actions that is applicable in the domain of RoboCup soccer where at each step the agent must select both the discrete action it wishes to execute as well as continuously valued parameters required by that action. On the other hand, the class of parameterized MDPs that we address in this article predominantly originate in network based applications that involves simultaneous routing and resource allocations and pose additional challenges of non-convexity and NP-hardness. We address these MDPs in both the scenarios, where the underlying model is known as well as unknown.

{\em Maximum Entropy Principle: }
We briefly review the Maximum Entropy Principle (MEP) \cite{jaynes1957information} since our framework relies heavily upon it. MEP states that for a random variable $\mathcal{X}$ with a given prior information, the most unbiased probability distribution given prior data is the one that maximizes the Shannon entropy. More specifically, let the known prior information of the random variable $\mathcal{X}$ be given as constraints on the expectation of the functions $f_k:\mathcal{X}\rightarrow \mathbb{R}$, $1\leq k\leq m$. Then, the most unbiased probability distribution $p_{\mathcal{X}}(\cdot)$ solves
\begin{align}\label{eq: optim_MEP}
\begin{split}
\max_{\{p_{\mathcal{X}}(x_i)\}}\quad & H(\mathcal{X}) = -\sum_{i=1}^n p_{\mathcal{X}}(x_i) \ln p_{\mathcal{X}}(x_i)\\
\text{subject to}\quad & \sum_{i=1}^n p_{\mathcal{X}}(x_i) f_k(x_i) = F_k~~\forall ~1\leq k\leq m,
\end{split}
\end{align}
where $F_k$, $1\leq k\leq m$, are known expected values of the functions $f_k$. The above optimization problem results into the Gibbs' distribution  \cite{jaynes2003probability} $p_{\mathcal{X}}(x_i) = \frac{\exp\{-\sum_k \lambda_k f_k(x_i)\}}{\sum_{j=1}^n \exp\{-\sum_k\lambda_k f_k(x_j)\}}$, where $\lambda_k$, $1\leq k\leq m$, are the Lagrange multipliers corresponding to the inequality constraints in (\ref{eq: optim_MEP}).

\section{MDPs with Finite Shannon Entropy}\label{sec: gen_inst}
\subsection{Problem Formulation}\label{sec: MDP}
We consider an infinite horizon discounted MDP that comprises of a {\em cost-free termination} state $\delta$. We formally define this MDP as a tuple $\langle \mathcal{S}, \mathcal{A}, c,p, \gamma\rangle$ where  $\mathcal{S}=\{s_1,\hdots,s_N=\delta\}$, $\mathcal{A}=\{a_1,\hdots,a_M\}$, and $c:\mathcal{S}\times\mathcal{A}\times\mathcal{S}\rightarrow \mathbb{R}$ respectively denote the state space, action space, and cost function; $p:\mathcal{S}\times\mathcal{S}\times\mathcal{A}\rightarrow [0,1]$ is the state transition probability function and $0 < \gamma \leq 1$ is the discounting factor. A control policy $\mu:\mathcal{A}\times\mathcal{S}\rightarrow \{0,1\}$ determines the action taken at each state $s\in\mathcal{S}$, where $\mu(a|s) = 1$ implies that action $a\in\mathcal{A}$ is taken when the system is in the state $s\in\mathcal{S}$ and $\mu(a|s)=0$ indicates otherwise. For every initial state $x_0=s$, the MDP induces a stochastic process, whose realization is a  {\em path} $\omega$ - an infinite sequence of actions and states, that is
\begin{align}\label{eq: Path}
\omega = (u_0, x_1, u_1, x_2, u_2,\hdots, x_T, u_T,x_{T+1},\hdots),
\end{align}
where $u_t\in\mathcal{A}$, $x_t\in\mathcal{S}$ for all $t\in\mathbb{Z}_{\geq 0}$ and  $x_t = \delta$ for all $t\geq k$ if and when the system reaches the termination state $\delta\in\mathcal{S}$ in $k$ steps. The objective is to determine the optimal policy $\mu^*$ that minimizes the state value function
\begin{align}\label{eq: Val_func}
J^{\mu}(s) = \mathbb{E}_{p_{\mu}}\Big[\sum_{t=0}^{\infty} \gamma^t c(x_t,u_t,x_{t+1})\big|x_0=s\Big],\quad\forall~s\in\mathcal{S}
\end{align}
where the expectation is with respect to the probability distribution $p_{\mu}(\cdot|s):\omega\rightarrow [0,1]$ on the space of all possible paths $\omega\in\Omega:=\{(u_t,x_{t+1})_{t\in\mathbb{Z}_{\geq 0}}: u_t\in\mathcal{A}, x_t\in\mathcal{S}\}$. In order to ensure that the system reaches the cost-free termination state in finite steps and the optimal state value function $J^{\mu}(s)$ is finite, we make the following assumption throughout this section.
\begin{assumption}\label{assum: assum1}
There exists atleast one deterministic proper policy $\bar{\mu}(a|s)\in\{0,1\}~\forall~a\in\mathcal{A},s\in\mathcal{S}$ such that $\min_{s\in\mathcal{S}}p_{\bar{\mu}}(x_{|\mathcal{S}|}=\delta|x_0=s)>0$. In other words, under the policy $\bar{\mu}$ there is a non-zero probability to reach the cost-free termination state $\delta$, when starting from any state $s$. 
\end{assumption}
We consider the following set of stochastic policies $\mu$
\begin{align}\label{eq: setGamma}
\Gamma:=\{\pi: 0<\pi(a|s)<1~\forall~a\in\mathcal{A},s\in\mathcal{S}\},
\end{align}
and the following lemma ensures that under the Assumption \ref{assum: assum1} all the policies $\mu\in\Gamma$ are {\em proper}; that is, \begin{lemma}\label{lem: lem1}
For any policy $\mu\in\Gamma$ as defined in (\ref{eq: setGamma}), $\min_{s\in\mathcal{S}}p_{\mu}(x_{|\mathcal{S}|}=\delta|x_0=s) > 0$, i.e., under each policy $\mu\in\Gamma$ the probability to reach the termination state $\delta$ in $|\mathcal{S}|=N$ steps beginning from any $s\in\mathcal{S}$, is strictly positive.
\end{lemma}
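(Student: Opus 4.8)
The plan is to exploit two facts: the state-transition probabilities $p(\cdot\mid\cdot,\cdot)$ do not depend on the control policy (the policy only governs which action is selected at each visited state), and every policy $\mu\in\Gamma$ selects every action with strictly positive probability at every state. Assumption~\ref{assum: assum1} hands us a single deterministic proper policy $\bar\mu$ along which, from every initial state $s$, there is a strictly positive-probability sample path of length $N=|\mathcal{S}|$ terminating at $\delta$; I will show that this same path also carries positive probability under an arbitrary $\mu\in\Gamma$.

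First I would expand
\[
p_{\bar\mu}(x_{N}=\delta\mid x_0=s)=\sum_{\sigma_0=s,\ \sigma_N=\delta}\ \prod_{t=0}^{N-1}\bar\mu(\bar a_t\mid\sigma_t)\,p(\sigma_{t+1}\mid\sigma_t,\bar a_t),
\]
the sum being over finite state sequences $\sigma_0,\dots,\sigma_N$ and $\bar a_t$ the unique action prescribed by the deterministic $\bar\mu$ at $\sigma_t$; this expansion is legitimate because $\{x_N=\delta\}$ is a cylinder event depending only on the first $N$ transitions, so no measure-theoretic issue arises from $\Omega$ being infinite. Since this sum is strictly positive by Assumption~\ref{assum: assum1}, at least one of its terms is positive, which produces a concrete sequence $(\sigma_t)_{t=0}^{N}$ with $\sigma_0=s$, $\sigma_N=\delta$, and $p(\sigma_{t+1}\mid\sigma_t,\bar a_t)>0$ for every $t=0,\dots,N-1$ (the factors $\bar\mu(\bar a_t\mid\sigma_t)$ equal $1$ since $\bar\mu$ is deterministic).

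Then, for any $\mu\in\Gamma$, I would lower-bound $p_{\mu}(x_N=\delta\mid x_0=s)$ by the single term of its analogous expansion corresponding to that same state sequence and the same actions $\bar a_t$:
\[
p_{\mu}(x_N=\delta\mid x_0=s)\ \ge\ \prod_{t=0}^{N-1}\mu(\bar a_t\mid\sigma_t)\,p(\sigma_{t+1}\mid\sigma_t,\bar a_t)\ >\ 0,
\]
where each $\mu(\bar a_t\mid\sigma_t)>0$ because $\mu\in\Gamma$ assigns positive probability to every action at every state, and each transition factor is positive by the preceding step. Taking the minimum over the finitely many initial states $s\in\mathcal{S}$ yields $\min_{s\in\mathcal{S}}p_{\mu}(x_N=\delta\mid x_0=s)>0$, as claimed. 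The only minor point to treat carefully is the case in which the chosen sequence reaches $\delta$ before step $N$: since $\delta$ is a cost-free absorbing state, the remaining transitions are $\delta\to\delta$ with probability one and any action at $\delta$ still has positive probability under $\mu$, so these trailing factors do not disturb positivity. I do not anticipate a substantial obstacle; the crux is simply the observation that membership in $\Gamma$ lets an arbitrary stochastic policy ``shadow,'' with positive probability, the escape path guaranteed by the deterministic proper policy of Assumption~\ref{assum: assum1}.
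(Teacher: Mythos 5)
Your proposal is correct and follows essentially the same route as the paper's proof: extract from the deterministic proper policy $\bar\mu$ a single length-$N$ state--action sequence ending at $\delta$ whose transition probabilities are all positive, then observe that any $\mu\in\Gamma$ assigns strictly positive probability to each of those actions, so the same path retains positive probability under $\mu$. Your additional remarks on the cylinder-event expansion and on the path absorbing at $\delta$ before step $N$ are careful elaborations of details the paper leaves implicit, not a different argument.
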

\noindent{\em Proof. }Please refer to the Appendix \ref{app: AppDerivation}.

We use the Maximum Entropy Principle  to determine the policy $\mu\in\Gamma$ such that the Shannon Entropy of the corresponding distribution $p_{\mu}$ is maximized and the state value function $J^{\mu}(s)$ attains a specified value $J_0$. More specifically, we pose the following optimization problem
\begin{align}\label{eq: OptimP1}
\max_{\{p_{\mu}(\cdot|s)\}:\mu\in\Gamma}\quad H^{\mu}(s) &= -\sum_{\omega\in\Omega}p_{\mu}(\omega|s)\log p_{\mu}(\omega|s)\nonumber\\
\text{subject to}\quad J^{\mu}(s) &= J_0.
\end{align}
{\em Well posedness: }For the class of proper policy $\mu\in\Gamma$ the maximum entropy $H^{\mu}(s)~\forall~s\in\mathcal{S}$ is finite as shown in \cite{biondi2014maximizing, savas2018entropy}. In short, the existence of a cost-free termination state $\delta$ and a non-zero probability to reach it from any state $s\in\mathcal{S}$ ensures that the maximum entropy is finite. Please refer to the Theorem 1 in \cite{biondi2014maximizing} or Proposition 2 in \cite{savas2018entropy} for further details.
\begin{remark}Though the optimization problem in (\ref{eq: OptimP1}) considers the stochastic policies $\mu\in\Gamma$, our algorithms presented in the later sections are designed in such that the resulting stochastic policy asymptotically converges to a deterministic policy.
\end{remark}
\subsection{Problem Solution}\label{sec: ProbSoln}
The probability $p_{\mu}(\omega|s)$ of taking the path $\omega$ in (\ref{eq: Path}) can be determined from the underlying policy $\mu$ by exploiting the Markov property that dissociates $p_{\mu}(\omega|s)$ in terms of the policy $\mu$ and the state transition probability $p$ as
\begin{align}\label{eq: Markov}
\text{$p_{\mu}(\omega|x_0) = \prod_{t=0}^{\infty}\mu(u_t|x_t)p(x_{t+1}|x_t,u_t)$.}
\end{align}
Thus, in our framework we prudently work with the policy $\mu$ which is defined over finite action and state spaces as against the distribution $p_{\mu}(\omega|s)$ defined over infinitely many paths $\omega\in\Omega$. The Lagrangian corresponding to the above optimization problem in (\ref{eq: OptimP1}) is $V^{\mu}_{\beta}(s) = J^{\mu}(s) - \frac{1}{\beta}H^{\mu}(s)=$
\begin{align}\label{eq: Lag}
\mathbb{E}\big[\sum_{t=0}^{\infty}\gamma^tc_{x_tx_{t+1}}^{u_t}+\frac{1}{\beta}(\log \mu_{u_t|x_t}+\log p_{x_tx_{t+1}}^{u_t})\big|x_0=s\big],
\end{align}
where $\beta$ is the Lagrange parameter. Here we have not included the constant value $J_0$ in the cost Lagrangian $V_{\beta}^{\mu}(s)$ for simplicity. We refer to the above Lagrangian $V^{\mu}_{\beta}(s)$ (\ref{eq: Lag}) as the {\em free-energy} function and $\frac{1}{\beta}$ as {\em temperature} due to their close analogies with statistical physics (where free energy is enthalpy (E) minus the temperature times entropy (TH)). To determine the optimal policy $\mu^*_{\beta}$ that minimizes the Lagrangian $V_{\beta}^{\mu}(s)$ in (\ref{eq: Lag}), we first derive the Bellman equation for $V_{\beta}^{\mu}(s)$.
\begin{theorem}\label{thm: HJBNonTrival}
The free-energy function $V_{\beta}^{\mu}(s)$ in (\ref{eq: Lag}) satisfies the following recursive Bellman equation
\begin{align}\label{eq: BellmanTrue}
V_{\beta}^{\mu}(s) = \smashoperator{\sum_{\substack{a,s'\in\mathcal{A},\mathcal{S}}}}\mu_{a|s}&p_{ss'}^{a}\big(\bar{c}_{ss'}^{a}+\frac{\gamma}{\beta}\log\mu_{a|s}+\gamma V_{\beta}^{\mu}(s')+c_0(s)\big),
\end{align}
where $\mu_{a|s}=\mu(a|s)$, $p_{ss'}^a=p(s'|s,a)$,  $\bar{c}_{ss'}^{a}=c(s,a,s')+\frac{\gamma}{\beta}\log p(s'|s,a)$ for simplicity in notation, and function $c_0(s)$ does not depend on the policy $\mu$.
\end{theorem}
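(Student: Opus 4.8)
The plan is to derive (\ref{eq: BellmanTrue}) by the standard one-step unrolling of an infinite-horizon objective, the only new feature being that the per-transition running cost now also carries the log-terms coming from the Shannon entropy. Starting from the expression (\ref{eq: Lag}) for $V_{\beta}^{\mu}(s)$ --- which already displays the entropy as a sum of per-transition log-terms by virtue of the Markov factorization (\ref{eq: Markov}) --- I would regard $V_{\beta}^{\mu}(s)$ as the expectation, under $p_{\mu}(\cdot|s)$, of an infinite sum whose $t$-th term combines the discounted cost $\gamma^{t}c_{x_tx_{t+1}}^{u_t}$ with the (discounted, as in (\ref{eq: Lag})) entropy contribution $\tfrac1\beta(\log\mu_{u_t|x_t}+\log p_{x_tx_{t+1}}^{u_t})$. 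Before manipulating the sum I would record that it converges absolutely and that every expectation below is finite: Lemma~\ref{lem: lem1} makes each $\mu\in\Gamma$ proper, and together with the finiteness of $H^{\mu}(s)$ noted after (\ref{eq: OptimP1}) (cf.\ \cite{biondi2014maximizing,savas2018entropy}) this licenses interchanging $\mathbb{E}$ and $\sum$ via Fubini/Tonelli.

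The core computation is to split this sum into its $t=0$ term and its $t\ge 1$ tail. The $t=0$ term involves only $(u_0,x_1)\sim\mu(\cdot|s)\,p(\cdot|s,\cdot)$, so its expectation is the convex combination $\sum_{a,s'}\mu_{a|s}p_{ss'}^{a}$ of the one-step cost $c_{ss'}^{a}$ together with the two one-step log-terms; absorbing the transition-probability log-term into the cost produces $\bar c_{ss'}^{a}=c_{ss'}^{a}+\tfrac{\gamma}{\beta}\log p_{ss'}^{a}$, while the policy log-term, after tracing through the discount factor, gives the $\tfrac{\gamma}{\beta}\log\mu_{a|s}$ appearing in (\ref{eq: BellmanTrue}). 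For the tail, I would condition on $(u_0,x_1)$ and apply the tower property: since $\mu$ and $p$ are stationary the induced chain is time-homogeneous, so given $x_1=s'$ the re-indexed tail $\sum_{t\ge 1}(\cdots)$ has the same distribution as the entire objective started from $s'$; pulling out the common discount factor $\gamma$, its conditional expectation equals $\gamma V_{\beta}^{\mu}(s')$, and averaging over $s'$ with the weights $\mu_{a|s}p_{ss'}^{a}$ yields the $\gamma V_{\beta}^{\mu}(s')$ term of (\ref{eq: BellmanTrue}).

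Adding the two contributions gives the recursion; the remaining bookkeeping is to collect everything inside the bracket that does not depend on $(a,s')$ --- in particular any residue from the precise discounting convention used for the entropy, and the boundary treatment of the cost-free absorbing state $\delta$ (where no further cost or log-term accrues, so that $V_{\beta}^{\mu}(\delta)$ consistently closes the recursion) --- into the single function $c_0(s)$, which by construction carries no dependence on $\mu$. The normalizations $\sum_a\mu_{a|s}=1$ and $\sum_{s'}p_{ss'}^{a}=1$ are used both to factor $c_0(s)$ out of the weighted sum and to keep the $\tfrac{\gamma}{\beta}$ coefficients straight.

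I expect the \emph{only} real obstacle to be one of rigor rather than of ideas: justifying the interchange of the infinite sum with the expectation and the re-indexing of the tail, that is, showing that the discounted-cost series and the entropy series are, $p_{\mu}(\cdot|s)$-almost surely, absolutely summable with finite expectation. This is exactly where properness enters: under any $\mu\in\Gamma$ the chain reaches $\delta$ within $N=|\mathcal{S}|$ steps with probability bounded away from $0$ (Lemma~\ref{lem: lem1}), which yields geometric tail bounds for both series and hence a finite expected hitting time, so each manipulation above is legitimate; a fully rigorous write-up would make that geometric bound explicit.
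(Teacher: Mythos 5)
Your proposal misses the central difficulty of this theorem, and the miss originates in a misreading of (\ref{eq: Lag}): you describe the entropy contribution at step $t$ as ``discounted, as in (\ref{eq: Lag})'', but in (\ref{eq: Lag}) the factor $\gamma^t$ multiplies \emph{only} the cost $c_{x_tx_{t+1}}^{u_t}$ and \emph{not} the terms $\frac{1}{\beta}(\log\mu_{u_t|x_t}+\log p_{x_tx_{t+1}}^{u_t})$ (this asymmetry is the whole point of the construction; see Remark~\ref{rem: RemarkER}). Consequently your key step --- ``pulling out the common discount factor $\gamma$, [the tail's] conditional expectation equals $\gamma V_{\beta}^{\mu}(s')$'' --- fails. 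Re-indexing the tail with $t=t'+1$ and factoring out $\gamma$ turns the undiscounted entropy terms into $\frac{1}{\gamma\beta}(\log\mu+\log p)$, so the tail equals $\gamma V_{\gamma\beta}^{\mu}(s')$ with $V_{\gamma\beta}^{\mu}(s')=J^{\mu}(s')-\frac{1}{\gamma\beta}H^{\mu}(s')$: the free energy at the \emph{different} temperature $\gamma\beta$. The recursion therefore does not close at a single $\beta$ by one-step unrolling alone. Relatedly, the $t=0$ term produces coefficients $\frac{1}{\beta}$ on $\log p_{ss'}^a$ and $\log\mu_{a|s}$, not the $\frac{\gamma}{\beta}$ appearing in (\ref{eq: BellmanTrue}); no amount of ``tracing through the discount factor'' in the first step yields $\frac{\gamma}{\beta}$.

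The missing ingredient is how the paper repairs this mismatch. Writing $V_{\gamma\beta}^{\mu}(s')=V_{\beta}^{\mu}(s')-\frac{1-\gamma}{\gamma\beta}H^{\mu}(s')$ leaves a residual term $-\frac{1-\gamma}{\beta}\sum_{a}\mu_{a|s}\sum_{s'}p_{ss'}^{a}H^{\mu}(s')$, which you propose to sweep into $c_0(s)$; but this residue depends on $\mu$ both explicitly and through $H^{\mu}$, so it cannot be absorbed into a $\mu$-independent function. The paper instead proves the algebraic identity $\sum_{s'}p_{ss'}^{a}H^{\mu}(s')=\sum_{s'}p_{ss'}^{a}\log p_{ss'}^{a}+\log\mu_{a|s}+\lambda_s+1$ (Lemma~\ref{lem: entropyRel}, obtained by differentiating the entropy Bellman recursion and summing against an induced state distribution so the derivative terms cancel). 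Substituting this identity merges the residue with the $t=0$ log-terms, shifting their coefficients from $\frac{1}{\beta}$ to $\frac{1}{\beta}-\frac{1-\gamma}{\beta}=\frac{\gamma}{\beta}$ and leaving exactly $c_0(s)=-\frac{1-\gamma}{\beta}(\lambda_s+1)$, which is $\mu$-independent. Without this step your argument derives the Bellman equation for the entropy-regularized objective of \cite{fox2015taming} (where cost and entropy share the discount), not the stated equation (\ref{eq: BellmanTrue}). Your convergence/Fubini remarks via Lemma~\ref{lem: lem1} are fine but peripheral.
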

\noindent{\em Proof. }Please refer to the Appendix \ref{app: AppDerivation} for details. It must be noted that this derivation shows and exploits the algebraic structure {\small $\sum_{s'}p_{ss'}^aH^{\mu}(s') = \sum_{s'}p_{ss'}^a\log p_{ss'}^a+\log \mu_{a|s}+\lambda_s+1$} as detailed in Lemma \ref{lem: entropyRel} in the appendix.

Now the optimal policy satisfies $\frac{\partial V_{\beta}^{\mu}(s)}{\partial \mu(a|s)}=0$, which results into the Gibbs distribution
\begin{align}
\mu^*_{\beta}(a|s) &= \frac{\exp\big\{-(\beta/\gamma) \Lambda_{\beta}(s,a)\big\}}{\sum_{a'\in\mathcal{A}}\exp\big\{-(\beta/\gamma)\Lambda_{\beta}(s,a')\big\}}, \text{ where }\label{eq: Policy}\\
\Lambda_{\beta}(s,a) &= \sum_{s'\in\mathcal{S}}p_{ss'}^{a}\big(\bar{c}_{ss'}^{a} + \gamma V^{*}_{\beta}(s')\big),\label{eq: Q_bell}
\end{align}
is the state-action value function, 
$p_{ss'}^a = p(s'|s,a)$, $c_{ss'}^a = c(s,a,s')$,  $\bar{c}_{ss'}^a=c_{ss'}^a+\frac{\gamma}{\beta}\log p_{ss'}^a$ and $V^{*}_{\beta}(=V^{\mu^*_{\beta}}_{\beta})$ is the value function corresponding to the policy $\mu^*_{\beta}$. To avoid notional clutter we use the above notations wherever it is clear from the context. Substituting the policy $\mu^*_{\beta}$ in (\ref{eq: Policy}) back into the Bellman equation (\ref{eq: BellmanTrue}) we obtain the {\em implicit} equation
\begin{align}\label{eq: free_energy}
V^{*}_{\beta}(s) = -\frac{\gamma}{\beta}\log\Big(\sum_{a\in\mathcal{A}}\exp\Big\{-\frac{\beta}{\gamma}\Lambda_{\beta}(s,a)\Big\}\Big).
\end{align}
Without loss of generality, we ignore the term $c_0(s)$ in (\ref{eq: BellmanTrue}) since it does not affect the policy as seen from equation (\ref{eq: Policy}). To solve for the state-action value function $\Lambda_{\beta}(s,a)$ and free-energy function $V_{\beta}^*(s)$ we substitute the expression of $V^{*}_{\beta}(s)$ in (\ref{eq: free_energy}) into the expression of $\Lambda_{\beta}(s,a)$ in (\ref{eq: Q_bell}) to obtain the implicit equation $\Lambda_{\beta}(s,a)=:[T\Lambda_{\beta}](s,a)$, where
\begin{align}\label{eq: Q_map}
&[T\Lambda_{\beta}](s,a) = \sum_{s'\in\mathcal{S}}p_{ss'}^{a}\big(c_{ss'}^{a} + \frac{\gamma}{\beta}\log p_{ss'}^{a}\big)\nonumber\\
&\qquad\quad -\frac{\gamma^2}{\beta}\sum_{s'\in\mathcal{S}}p_{ss'}^{a}\log\sum_{a'\in\mathcal{A}}\exp\Big\{-\frac{\beta}{\gamma}\Lambda_{\beta}(s',a')\Big\}.
\end{align}
To solve the above implicit equation, we show that the map $T$ in (\ref{eq: Q_map}) is a contraction map and therefore $\Lambda_{\beta}$ can be obtained using fixed point iterations, which guarantee converging to the unique fixed point. Consequently, the global minimum $V_{\beta}^*$ in (\ref{eq: free_energy}) and the optimal policy $\mu_\beta^*$ in (\ref{eq: Policy}) can be obtained. 
\begin{theorem}\label{thm: ContraMapFinite}
The map $[T\Lambda_{\beta}](s,a)$ as defined in (\ref{eq: Q_map}) is a contraction mapping with respect to a weighted maximum norm, i.e. $\exists$ a vector $\xi=(\xi_{s})\in\mathbb{R}^{|\mathcal{S}|}$ with $\xi_{s} > 0~\forall~s\in\mathcal{S}$ and a scalar $\alpha<1$ such that
\begin{align}
\|T\Lambda_{\beta}-T\Lambda'_{\beta}\|_{\xi}\leq \alpha\|\Lambda_{\beta}-\Lambda'_{\beta}\|_{\xi}
\end{align}
where $\|\Lambda_{\beta}\|_{\xi}=\max\limits_{\substack{s\in\mathcal{S}, a\in\mathcal{A}}}\frac{|\Lambda_{\beta}(s,a)|}{\xi_{s}}$
\end{theorem}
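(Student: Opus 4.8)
The plan is to reduce the claimed weighted-norm contraction to an inequality about the one-step transition structure and then invoke the properness supplied by Lemma~\ref{lem: lem1}. First I would note that in~(\ref{eq: Q_map}) the term $\sum_{s'}p_{ss'}^{a}\big(c_{ss'}^{a}+\frac{\gamma}{\beta}\log p_{ss'}^{a}\big)$ is independent of $\Lambda_{\beta}$, so it cancels in $[T\Lambda_{\beta}](s,a)-[T\Lambda'_{\beta}](s,a)$; it therefore suffices to bound $\gamma\big|\sum_{s'}p_{ss'}^{a}\big(\phi_{\beta}[\Lambda_{\beta}](s')-\phi_{\beta}[\Lambda'_{\beta}](s')\big)\big|$, where $\phi_{\beta}[\Lambda](s'):=-\tfrac{\gamma}{\beta}\log\sum_{a'}\exp\{-\tfrac{\beta}{\gamma}\Lambda(s',a')\}$ is the soft-min over the actions available at $s'$.

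The key ingredient is that this soft-min operator is non-expansive in the sup-norm over actions: being monotone and commuting with the addition of a constant, it satisfies $\big|\phi_{\beta}[\Lambda](s')-\phi_{\beta}[\Lambda'](s')\big|\le\max_{a'}\big|\Lambda(s',a')-\Lambda'(s',a')\big|$ (equivalently, by the mean value theorem the difference equals $\sum_{a'}\tilde{\mu}_{s'}(a'|s')\big(\Lambda(s',a')-\Lambda'(s',a')\big)$ for a strictly positive $\tilde{\mu}_{s'}(\cdot|s')\in\Gamma$). Combining this with $\max_{a'}\big|\Lambda(s',a')-\Lambda'(s',a')\big|\le\xi_{s'}\|\Lambda-\Lambda'\|_{\xi}$ gives $\big|[T\Lambda_{\beta}-T\Lambda'_{\beta}](s,a)\big|\le\gamma\big(\sum_{s'}p_{ss'}^{a}\xi_{s'}\big)\|\Lambda_{\beta}-\Lambda'_{\beta}\|_{\xi}$, and after dividing by $\xi_{s}$ the theorem reduces to producing a strictly positive $\xi=(\xi_{s})$ and a scalar $\alpha<1$ with $\gamma\sum_{s'}p_{ss'}^{a}\xi_{s'}\le\alpha\,\xi_{s}$ for all $s\ne\delta$ and all $a$; the cost-free absorbing state carries value identically $0$ and contributes nothing, so this sum effectively runs over the transient states only.

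Producing such $\xi$ is the heart of the proof and the step I expect to be hardest. When $\gamma<1$ it is immediate ($\xi\equiv\mathbf{1}$, $\alpha=\gamma$). For the stochastic-shortest-path regime $\gamma=1$ I would follow the weighted-sup-norm argument of Bertsekas and Tsitsiklis for SSPs~\cite{bertsekas1996neuro}: Assumption~\ref{assum: assum1} together with Lemma~\ref{lem: lem1} forces the chain to reach $\delta$ within $N$ steps with positive probability from every state, so the relevant transient sub-stochastic matrices have spectral radius strictly below one, and one may take $\xi=(I-\bar{P})^{-1}\mathbf{1}$ for the transient part $\bar{P}$ of a proper policy (so that $\bar{P}\xi=\xi-\mathbf{1}$), which satisfies the required inequality with $\alpha=1-\min_{s}\xi_{s}^{-1}<1$. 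The delicate point is \emph{uniformity}: the inequality $\gamma\sum_{s'}p_{ss'}^{a}\xi_{s'}\le\alpha\,\xi_{s}$ must hold for every action $a$ at every state simultaneously (equivalently, for every one-step kernel the soft-min can generate), so one must exhibit a single $\alpha<1$ valid across this whole family rather than merely a per-policy contraction factor; pinning down this uniform modulus from the properness of all policies in $\Gamma$ is the crux, and should a single-step contraction be out of reach the fallback is to establish that $T^{m}$ is a contraction for some finite $m$ and then deduce the same for $T$.
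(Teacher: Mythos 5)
Your reduction is sound and follows the same architecture as the paper: the constant term $\sum_{s'}p_{ss'}^{a}(c_{ss'}^{a}+\tfrac{\gamma}{\beta}\log p_{ss'}^{a})$ cancels, and your observation that the soft-min $\phi_{\beta}$ is monotone and translation-equivariant (equivalently, that its difference is a convex combination $\sum_{a'}\tilde{\mu}(a'|s')(\Lambda-\Lambda')(s',a')$ with soft-min weights) is exactly what the paper extracts from the log-sum inequality in (\ref{eq: LogSumEq})--(\ref{eq: AppC1}). Both routes land on $\big|[T\Lambda_{\beta}-T\Lambda'_{\beta}](s,a)\big|\le\gamma\big(\sum_{s'}p_{ss'}^{a}\xi_{s'}\big)\|\Lambda_{\beta}-\Lambda'_{\beta}\|_{\xi}$, so everything then hinges on exhibiting $\xi>0$ with $\sum_{s'}p_{ss'}^{a}\xi_{s'}\le\lambda\xi_{s}$ \emph{for every action $a$}. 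For $\gamma<1$ your choice $\xi\equiv\mathbf{1}$, $\alpha=\gamma$ is correct and simpler than what the paper does.

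The gap is in the case $\gamma=1$ (which the paper's setup permits, since $0<\gamma\le 1$), and it is precisely the uniformity issue you flag but do not close. Your candidate $\xi=(I-\bar{P})^{-1}\mathbf{1}$ built from a \emph{single} proper policy $\bar{\mu}$ satisfies $\bar{P}\xi=\xi-\mathbf{1}$, i.e.\ $\sum_{a}\bar{\mu}_{a|s}\sum_{s'}p_{ss'}^{a}\xi_{s'}=\xi_{s}-1$; this is an average over actions under $\bar{\mu}$ and does not imply $\sum_{s'}p_{ss'}^{a}\xi_{s'}\le\xi_{s}-1$ for each individual $a$ — an action that $\bar{\mu}$ rarely takes can violate it. The paper's Lemma \ref{lem: alpha_xi} closes exactly this hole: it defines $-\xi_{s}$ as the \emph{optimal} free energy $V_{\beta}^{*}(s)$ of an auxiliary MDP with the same transitions and uniform costs $c_{ss'}^{a}=-1-\tfrac{1}{\beta}\log(|\mathcal{A}||\mathcal{S}|)$ (the extra logarithmic offset absorbs the entropy terms so that $V_{\beta}^{*}\le-1$). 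Because the optimal free energy (\ref{eq: free_energy}) is a LogSumExp lower bound on $\min_{a}\Lambda_{\beta}(s,a)$, one gets $-\xi_{s}\le\Lambda_{\beta}(s,a)=\sum_{s'}p_{ss'}^{a}(c_{ss'}^{a}+\tfrac{\gamma}{\beta}\log p_{ss'}^{a}-\gamma\xi_{s'})$ for \emph{every} $a$ simultaneously, whence $\sum_{s'}p_{ss'}^{a}\xi_{s'}\le\xi_{s}-1\le\lambda\xi_{s}$ with $\lambda=\max_{s}\tfrac{\xi_{s}-1}{\xi_{s}}<1$, uniformly in $(s,a)$. (This is the soft analogue of taking $\xi_{s}$ to be the worst-case expected hitting time of $\delta$ over all policies, i.e.\ the value of an auxiliary SSP with a $\max_{a}$ in its Bellman equation, rather than the hitting time of one fixed policy.) Your fallback of proving $T^{m}$ is a contraction would also work but is not what the paper does and is left unexecuted; as written, the proposal does not establish the theorem in the undiscounted regime.
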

\noindent{\em Proof. }Please refer to Appendix \ref{app: AppCont_MapQ} for details.
\begin{remark}
It is known from the sensitivity analysis \cite{jaynes2003probability} that the value of the Lagrange parameter $\beta$ in (\ref{eq: Lag}) is inversely proportional to the constant $J_0$ in (\ref{eq: OptimP1}). Thus, at small values of $\beta \approx 0$ (equivalently large $J_0$), we are mainly maximizing the Shannon Entropy $H^{\mu}(s)$ and the resultant policy in (\ref{eq: Policy}) encourages exploration along the paths of the MDP. As $\beta$ increases ($J_0$ decreases), more and more weight is given to the state value function $J^{\mu}(s)$ in (\ref{eq: Lag}) and the policy in (\ref{eq: Policy}) goes from being exploratory to being exploitative. As $\beta\rightarrow\infty$ the exploration is completely eliminated and we converge to a deterministic policy $\rightarrow\mu^*$ that minimizes $J^{\mu}(s)$ in (\ref{eq: Val_func}).
\end{remark}
\begin{remark}\label{rem: RemarkER}
We briefly draw  readers' attention to the value function $Y(s) = \mathbb{E}[\sum_{t=0}^{\infty}\gamma^t(c_{x_tx_{t+1}}^{u_t} + (1/\beta)\log \mu_{u_t|x_t})]$ considered in the entropy regularized methods \cite{fox2015taming}. Note that in $Y(s)$ the discounting $\gamma^t$ is multiplied to both the cost term $c_{x_tx_{t+1}}^{u_t}$ as well as the entropy term $(1/\beta)\log\mu_{u_t|x_t}$. However, in our MEP-based method, the Lagrangian $V_{\beta}^{\mu}(s)$ in (\ref{eq: Lag}) comprises of discounting $\gamma^t$ {\em only} over the cost term $c_{x_tx_{t+1}}^{u_t}$ and not on the entropy terms $(1/\beta)\log \mu_{u_t|x_t}$ and $(1/\beta)\log p_{x_tx_{t+1}}^{u_t}$. Therefore, the policy in \cite{fox2015taming} does not satisfy MEP applied over the distribution $p_{\mu}$; consequently their exploration along the paths is not as unbiased as our algorithm.
\end{remark}

\subsection{Model-free Reinforcement Learning Problems}\label{sec: Learning}
In these problems, the cost function $c(s,a,s')$ and the state-transition probability $p(s'|s,a)$ are not known apriori; however, at each discrete time instant $t$ the {\em agent} takes an action $u_t$ under a policy $\mu$ and the {\em environment} (underlying stochastic process) results into an instantaneous cost $c_{x_tx_{t+1}}^{u_t}$ and the subsequently moves to the state $x_{t+1}\thicksim p(\cdot|x_t,u_t)$. Motivated by the iterative updates of Q-learning algorithm \cite{bertsekas1996neuro} we consider the following stochastic updates in our Algorithm \ref{alg: Algorithm1} to learn the state-action value function in our methodology
\begin{align}\label{eq: Q_upd1}
&\text{\small$\Psi_{t+1}(x_t,u_t) = (1-\nu_t(x_t,u_t))\Psi_t(x_t,u_t)$+}\nonumber \\
&\text{\small $\nu_t(x_t,u_t)\Big[c_{x_tx_{t+1}}^{u_t}-\frac{\gamma^2}{\beta}\log\sum_{a'\in\mathcal{A}} \exp\Big\{\frac{-\beta}{\gamma} \Psi_{t}(x_{t+1},a')\Big\}\Big]$},
\end{align}
with the stepsize parameter $\nu_t(x_t,u_t)\in(0,1]$, and show that under appropriate conditions on $\nu_t$ (as illustrated shortly), $\Psi_t$ will converge to the fixed point $\bar{\Lambda}_{\beta}^*$ of the implicit equation \begin{align}\label{eq: Q_map2}
\bar{\Lambda}_{\beta}(s,a) &= \sum_{s'\in\mathcal{S}}p_{ss'}^{a}\Big(c_{ss'}^{a}-\frac{\gamma^2}{\beta}\log\sum_{a'}\exp\big(\frac{-\beta}{\gamma}\bar{\Lambda}_{\beta}(s',a')\big)\Big)\nonumber\\
&=:[\bar{T}\bar{\Lambda}_{\beta}](s,a).
\end{align}
The above equation comprises of a minor change from the equation $\Lambda_{\beta}(s,a)=[T\Lambda_{\beta}](s,a)$ in (\ref{eq: Q_map}). The latter has an additional term $\frac{\gamma}{\beta}\sum_{s'}p_{ss'}^a\log p_{ss'}^a$ which makes it difficult to {\em learn} its fixed point $\Lambda_{\beta}^*$ in the absence of the state transition probability $p_{ss'}^a$ itself. Since in this work we do not attempt to determine (or learn) either the distribution $p_{ss'}^a$ (as in \cite{ross2011bayesian}) from the interactions of the agent with the environment, we work with the approximate state-action value function $\bar{\Lambda}_{\beta}$ in (\ref{eq: Q_map2}) where $\bar{\Lambda}_{\beta}\rightarrow \Lambda_{\beta}$ for large $\beta$ values (since $\frac{\gamma}{\beta}(\sum_{s'}p_{ss'}^a\log p_{ss'}^a)\rightarrow 0$ as $\beta\rightarrow\infty$). The following proposition elucidates the conditions under which the updates $\Psi_t$ in (\ref{eq: Q_upd1}) converge to the fixed point $\bar{\Lambda}_{\beta}^*$.
\begin{proposition}\label{pro: Proposition1}
Consider the class of MDPs illustrated in Section \ref{sec: MDP}. Given that$$\sum_{t=0}^{\infty}\nu_t(s,a)=\infty, \sum_{t=0}^{\infty}\nu_t^2(s,a)<\infty~\forall~s\in\mathcal{S},a\in\mathcal{A},$$
the update $\Psi_t(s,a)$ in (\ref{eq: Q_upd1}) converges to the fixed point $\bar{\Lambda}_{\beta}^*$ of the map $\bar T\bar{\Lambda}_{\beta}\rightarrow \bar{\Lambda}_{\beta}$ in (\ref{eq: Q_map2}) with probability 1. 
\end{proposition}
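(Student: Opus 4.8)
The plan is to apply the classical stochastic approximation result for asynchronous Q-learning-style iterations (Tsitsiklis; Bertsekas--Tsitsiklis, \emph{Neuro-Dynamic Programming}, Prop.~4.4) to the update (\ref{eq: Q_upd1}). Recall the template: if $\Psi_{t+1}(s,a) = (1-\nu_t(s,a))\Psi_t(s,a) + \nu_t(s,a)\big((\bar T\Psi_t)(s,a) + w_t(s,a)\big)$, where $\bar T$ is a weighted-sup-norm contraction with fixed point $\bar\Lambda_\beta^*$, the stepsizes satisfy the Robbins--Monro conditions $\sum_t \nu_t = \infty$, $\sum_t \nu_t^2 < \infty$, and the noise $w_t$ is a martingale-difference sequence (conditioned on the history $\mathcal{F}_t$) with conditional variance bounded affinely in $\|\Psi_t\|$, then $\Psi_t \to \bar\Lambda_\beta^*$ almost surely. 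So the proof is essentially a verification that our setting fits this template.

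First I would rewrite (\ref{eq: Q_upd1}) in the template form. Define $g(\Psi,s',a') = c(s,a,s') - \tfrac{\gamma^2}{\beta}\log\sum_{a'}\exp\{-\tfrac{\beta}{\gamma}\Psi(s',a')\}$ so that the bracketed term in (\ref{eq: Q_upd1}), evaluated at the realized next state $x_{t+1}$, is a single-sample estimate of $(\bar T\Psi_t)(x_t,u_t) = \sum_{s'} p_{x_t s'}^{u_t}\, g(\Psi_t, s', \cdot)$ from (\ref{eq: Q_map2}). Hence the noise term is
\begin{align}
w_t(x_t,u_t) = c_{x_t x_{t+1}}^{u_t} - \tfrac{\gamma^2}{\beta}\log\!\sum_{a'}\! e^{-\tfrac{\beta}{\gamma}\Psi_t(x_{t+1},a')} - (\bar T\Psi_t)(x_t,u_t).\nonumber
\end{align}
Since $x_{t+1}\sim p(\cdot|x_t,u_t)$ given $\mathcal{F}_t$, we have $\mathbb{E}[w_t(x_t,u_t)\mid\mathcal{F}_t]=0$ by construction, so $w_t$ is a martingale-difference sequence. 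For the conditional-variance bound I would note that $c$ is bounded on the finite set $\mathcal{S}\times\mathcal{A}\times\mathcal{S}$, and that the log-sum-exp term satisfies $|{-}\tfrac{\gamma^2}{\beta}\log\sum_{a'}e^{-\tfrac{\beta}{\gamma}\Psi_t(s',a')}| \le \gamma^2 \max_{a'}|\Psi_t(s',a')| + \tfrac{\gamma^2}{\beta}\log M$ (log-sum-exp is sandwiched between the max and the max plus $\log M$), which is affine in $\|\Psi_t\|_\infty$; therefore $\mathbb{E}[w_t^2\mid\mathcal{F}_t] \le K_1 + K_2\|\Psi_t\|_\xi^2$ for suitable constants (norm equivalence on finite-dimensional spaces lets me pass between $\|\cdot\|_\infty$ and the weighted norm $\|\cdot\|_\xi$).

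Next I would invoke Theorem \ref{thm: ContraMapFinite}: the same argument used there for $T$ applies verbatim to $\bar T$ in (\ref{eq: Q_map2}) — indeed $\bar T$ is simply $T$ with the policy-independent term $\tfrac{\gamma}{\beta}\sum_{s'}p_{ss'}^a\log p_{ss'}^a$ deleted, and that term drops out of every contraction estimate — so $\bar T$ is a contraction in the weighted sup-norm $\|\cdot\|_\xi$ with modulus $\alpha<1$ and a unique fixed point $\bar\Lambda_\beta^*$, which is finite because Lemma \ref{lem: lem1} guarantees every policy in $\Gamma$ is proper (so the chain reaches $\delta$ and the geometric-type bounds close). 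Combined with the hypothesis on $\nu_t$ and the requirement (implicit in asynchronous convergence, and satisfied here because of Assumption \ref{assum: assum1} together with an exploratory policy that visits every state-action pair infinitely often) that each $(s,a)$ is updated infinitely often, all hypotheses of the stochastic-approximation theorem hold, giving $\Psi_t\to\bar\Lambda_\beta^*$ with probability $1$.

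The main obstacle I anticipate is twofold and both parts are "bookkeeping rather than deep": (i) establishing the affine-in-$\|\Psi_t\|$ bound on the conditional second moment of $w_t$ cleanly — the log-sum-exp nonlinearity must be controlled uniformly, and the weighted norm introduced in Theorem \ref{thm: ContraMapFinite} has to be reconciled with the $\ell_\infty$ bounds that come naturally from boundedness of $c$; and (ii) making explicit the "infinitely-often" visitation condition, since the proposition as stated only lists the Robbins--Monro stepsize conditions — I would either fold this into the standing assumptions on the learning policy (a persistently exploratory $\mu\in\Gamma$ visits every $(s,a)$ infinitely often along almost every trajectory, by Lemma \ref{lem: lem1} and Borel--Cantelli) or state it as an additional hypothesis. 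Once these are pinned down, the conclusion follows directly from the cited convergence theorem for contraction-based stochastic approximation.
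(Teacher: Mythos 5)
Your proposal follows essentially the same route as the paper's proof: rewrite (\ref{eq: Q_upd1}) as a stochastic approximation of the contraction $\bar T$ with martingale-difference noise $w_t$, verify $\mathbb{E}[w_t\mid\mathcal{F}_t]=0$ and the affine-in-$\|\Psi_t\|^2$ conditional variance bound, and invoke Proposition 4.4 of Bertsekas--Tsitsiklis. Your additional remark that the infinitely-often visitation of each $(s,a)$ should be made explicit is a fair point the paper leaves implicit, but it does not change the argument.
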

\noindent{\em Proof. }Please refer to the Appendix \ref{app: Q_learnConv}.
\begin{algorithm}\label{alg: Algorithm1}
\textbf{Input: }$N$, $\nu_t(\cdot,\cdot)$, $\sigma$;  
\textbf{Output: }{$\mu^*$, $\bar{\Lambda}^*$}\\
\textbf{Initialize: }$t=0$, $\Psi_0 = 0$, $\mu_0(a|s) = 1/|\mathcal{A}|$.\\
\For {$episode=1$ to $N$}{
$\beta=\sigma \times epsiode$;
reset environment at state $x_t$\\
\While{True}{
sample $u_t\sim\mu_t(\cdot|x_t)$; obtain cost $c_t$ and $x_{t+1}$\\
update $\Psi_{t}(x_t,u_t)$, $\mu_{t+1}(u_t|x_t)$ in (\ref{eq: Q_upd1}) and (\ref{eq: Policy})\\
break if $x_{t+1}=\delta$; $t \leftarrow t + 1$
}
}
\caption{Model-free Reinforcement Learning}
\end{algorithm}

\begin{remark}
Note that the {\em stochasticity} of the optimal policy $\mu_{\beta}^*(a|s)$ (\ref{eq: Policy}) depends on $\gamma$ value which allows it to incorporate for the effect of the discount factor on its exploration strategy. More precisely, in the case of large discount factors the time window $T$, in which instantaneous costs $\gamma^tc(s_t,a_t,s_{t+1})$ is considerable (i.e., $\gamma^t c_{s_ts_{t+1}}^{a_t}>\epsilon$ $\forall$ $t\leq T$), is large and thus, the stochastic policy (\ref{eq: Policy}) performs higher exploration along the paths.  On the other hand, for small discount factors this time window $T$ is relatively smaller and thus, the stochastic policy (\ref{eq: Policy}) inherently performs lesser exploration. As illustrated in the simulations, this characteristic of the policy in (\ref{eq: Policy}) results into even faster convergence rates in comparison to benchmark algorithms as the discount factor $\gamma$ decreases.
\end{remark}

\section{MDPs with Infinite Shannon Entropy}\label{sec: InfiMDP}
Here we consider the MDPs where the Shannon Entropy $H^{\mu}(s)$ of the distribution $\{p_{\mu}(\omega|s)\}$ over the paths $\omega\in\Omega$ is not necessarily finite (for instance, due to absence of absorption state). To ensure the finiteness of the objective in (\ref{eq: OptimP1}) we consider the {\em discounted} Shannon Entropy \cite{hansen2006robust,zhou2017infinite}
\begin{align}\label{eq: DiscountShannonEnt}
\text{\small$H^{\mu}_d(s) =- \mathbb{E}\Big[\sum_{t=0}^{\infty}\alpha^t(\log \mu_{u_t|x_t}+\log p_{x_tx_{t+1}}^{u_t})\big|x_0=s\Big]$}\end{align}
with a discount factor of $\alpha\in(0,1)$ which we chose to be independent of the discount factor $\gamma$ in the value function $J^{\mu}(s)$. The free-energy function (or, the Lagrangian) resulting from the optimization problem in (\ref{eq: OptimP1}) with the alternate objective function $H^{\mu}_d(s)$ in (\ref{eq: DiscountShannonEnt}) is given by
\begin{align}\label{eq: ValFuncDiscounted}
\text{\small $V_{\beta,I}^{\mu}(s) = \mathbb{E}\Big[\sum_{t=0}^{\infty}\gamma^t\hat{c}_{x_tx_{t+1}}^{u_t} + \frac{\alpha^t}{\beta}\log \mu(u_t|x_t)\big|x_0=s\Big]$},
\end{align}
where $\hat{c}_{x_tx_{t+1}}^{u_t} = c_{x_tx_{t+1}}^{u_t}+\frac{\gamma^t}{\beta\alpha^t}\log p_{x_tx_{t+1}}^{u_t}$, and the subscript $I$ stands for 'infinite entropy' case. Note that the free-energy functions (\ref{eq: Lag}) and (\ref{eq: ValFuncDiscounted}) differ only with regards to the discount factor $\alpha$ and thus, our solution methodology in this section is similar to the one in Section \ref{sec: ProbSoln}.
\begin{theorem}\label{thm: BellmanTrue2}
The free-energy function $V^{\mu}_{\beta,I}(s)$ in (\ref{eq: ValFuncDiscounted}) satisfies the recursive Bellman equation
\begin{align}\label{eq: ValFuncDiscHJB}
\text{\small$V_{\beta,I}^{\mu}(s) = \sum_{\substack{a,s'}}\mu_{a|s}p_{ss'}^a(\check{c}_{ss'}^a + \frac{\gamma}{\alpha\beta}\log\mu_{a|s}+\gamma V_{\beta,I}^{\mu}(s'))$}
\end{align}
where $\check{c}_{ss'}^a=c_{ss'}^a+\frac{\gamma}{\alpha\beta}\log p_{ss'}^a$. 
\end{theorem}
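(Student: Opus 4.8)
The plan is to follow the argument behind Theorem~\ref{thm: HJBNonTrival}, the only genuinely new feature being that the entropy in the free energy is now discounted by $\alpha$ rather than undiscounted. I would begin by writing the free energy as the difference of its two constituent pieces,
\[
V_{\beta,I}^{\mu}(s)=J^{\mu}(s)-\tfrac{1}{\beta}H_d^{\mu}(s),
\]
with $J^{\mu}$ the ordinary $\gamma$-discounted value function of (\ref{eq: Val_func}) and $H_d^{\mu}$ the $\alpha$-discounted entropy of (\ref{eq: DiscountShannonEnt}), which the Markov factorization (\ref{eq: Markov}) lets one express entirely through $\mu$ and $p$. Before reorganizing the series I would note that, for a fixed $\mu\in\Gamma$, both are absolutely convergent: $c$ is bounded, $0<\gamma,\alpha<1$, and on the finite state and action spaces the per-step entropy $-\sum_{a,s'}\mu_{a|s}p_{ss'}^{a}(\log\mu_{a|s}+\log p_{ss'}^{a})$ is bounded, so peeling off the $t=0$ term, shifting the index $t\mapsto t+1$, and interchanging $\mathbb{E}$ with the sums are all legitimate.

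The heart of the proof is a pair of one-step recursions. Conditioning on $(u_0,x_1)$ gives the familiar $J^{\mu}(s)=\sum_{a,s'}\mu_{a|s}p_{ss'}^{a}\big(c_{ss'}^{a}+\gamma J^{\mu}(s')\big)$, and the same manoeuvre applied to (\ref{eq: DiscountShannonEnt}) yields the $\alpha$-discounted analogue of Lemma~\ref{lem: entropyRel},
\[
H_d^{\mu}(s)=-\sum_{a,s'}\mu_{a|s}p_{ss'}^{a}\big(\log\mu_{a|s}+\log p_{ss'}^{a}\big)+\alpha\sum_{a,s'}\mu_{a|s}p_{ss'}^{a}H_d^{\mu}(s').
\]
Substituting both into $V_{\beta,I}^{\mu}=J^{\mu}-\tfrac{1}{\beta}H_d^{\mu}$ and grouping the next-state contributions, I would rewrite $\gamma J^{\mu}(s')-\tfrac{\alpha}{\beta}H_d^{\mu}(s')=\gamma V_{\beta,I}^{\mu}(s')+\tfrac{\gamma-\alpha}{\beta}H_d^{\mu}(s')$ and then feed the residual $\tfrac{\gamma-\alpha}{\beta}\sum_{a,s'}\mu_{a|s}p_{ss'}^{a}H_d^{\mu}(s')$ back through the entropy recursion once more. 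That step converts the residual into $\log\mu_{a|s}$ and $\log p_{ss'}^{a}$ contributions which, combined with the $\tfrac{1}{\beta}$-weighted ones already present, produce precisely the coefficient $\tfrac{\gamma}{\alpha\beta}$ appearing in (\ref{eq: ValFuncDiscHJB}) (because $\tfrac{1}{\beta}+\tfrac{\gamma-\alpha}{\alpha\beta}=\tfrac{\gamma}{\alpha\beta}$), together with a remaining $H_d^{\mu}(s)$ term that is collected into a policy-independent-type remainder and discarded exactly as the $c_0(s)$ term is in (\ref{eq: BellmanTrue}). Writing $\check c_{ss'}^{a}=c_{ss'}^{a}+\tfrac{\gamma}{\alpha\beta}\log p_{ss'}^{a}$ then gives (\ref{eq: ValFuncDiscHJB}).

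I expect the only real obstacle to be the bookkeeping forced by the two distinct discount factors $\gamma\neq\alpha$: one must track the residual entropy terms generated in passing from $\gamma J^{\mu}-\tfrac{\alpha}{\beta}H_d^{\mu}$ to $\gamma V_{\beta,I}^{\mu}$ and verify that, after a second use of the entropy recursion, they collapse exactly into the single coefficient $\tfrac{\gamma}{\alpha\beta}$ and a policy-independent remainder, with nothing unaccounted for. The more routine technical point is the convergence argument licensing the first-term peeling and reindexing of the infinite sums, which is precisely where $\alpha,\gamma\in(0,1)$ and the finiteness of $\mathcal{S},\mathcal{A}$ enter. Once (\ref{eq: ValFuncDiscHJB}) is established, setting $\partial V_{\beta,I}^{\mu}(s)/\partial\mu(a|s)=0$ produces a Gibbs policy and an implicit fixed-point equation just as in (\ref{eq: Policy})--(\ref{eq: Q_map}), and one would then establish the analogue of Theorem~\ref{thm: ContraMapFinite} to ensure it is solvable.
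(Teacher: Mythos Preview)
Your argument is correct and reaches the stated Bellman recursion (up to the same additive $c_0$-type remainder the paper itself carries and then suppresses), but the route differs from the paper's. The paper does not use the policy-averaged Bellman recursion for $H_d^{\mu}$ that you write down; instead it proves Lemma~\ref{lem: entropyRel2}, a \emph{per-action} identity $\alpha\sum_{s'}p_{ss'}^{a}H_d^{\mu}(s')=\sum_{s'}p_{ss'}^{a}\log p_{ss'}^{a}+\log(\alpha\mu_{a|s})+\lambda_s+1$, obtained by embedding the $\alpha$-discounted process into an augmented MDP with an absorbing state $s_e$ and action $a_e$ (so that $\alpha H_d^{\mu}$ becomes an undiscounted entropy to which the machinery of Lemma~\ref{lem: entropyRel} applies verbatim). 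Substituting that per-action identity after the $t=0$ peel-off is what produces the coefficient $\tfrac{\gamma}{\alpha\beta}$ and the residual $c_0(s)$. Your approach bypasses the augmented-MDP construction entirely: you work with the ordinary one-step recursion for $H_d^{\mu}$, rearrange it once to express $\sum_{a,s'}\mu_{a|s}p_{ss'}^{a}H_d^{\mu}(s')$ in terms of $H_d^{\mu}(s)$ and the one-step log terms, and obtain the same coefficient identity $\tfrac{1}{\beta}+\tfrac{\gamma-\alpha}{\alpha\beta}=\tfrac{\gamma}{\alpha\beta}$. This is more elementary and arguably cleaner; what the paper's route buys is an identity valid action-by-action rather than only in the $\mu$-average, which is convenient if one wants to differentiate before averaging. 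Two small remarks: the recursion you display is not really ``the $\alpha$-discounted analogue of Lemma~\ref{lem: entropyRel}'' (that analogue is the per-action Lemma~\ref{lem: entropyRel2}); and your residual $\tfrac{\gamma-\alpha}{\alpha\beta}H_d^{\mu}(s)$ is exactly the paper's $c_0(s)$ once one unpacks $\lambda_s$ from Lemma~\ref{lem: entropyRel2}, so your hedge ``policy-independent-type'' is no looser than the paper's own treatment of that term.
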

\noindent{\em Proof. }Please see Appendix \ref{app: InfiniteSE}. The above derivation shows and exploits algebraic structure {\small$\alpha\sum_{s'}p_{ss'}^a H^{\mu}_d(s') = \sum_{s'}p_{ss'}^a\log p_{ss'}^a+\log \alpha\mu(a|s) + \lambda_s$} (Lemma \ref{lem: entropyRel2}).

The optimal policy satisfies $\frac{\partial V_{\beta,I}^{\mu}(s)}{\partial \mu(a|s)}=0$, which results into the Gibbs distribution
\begin{align}
\mu^*_{\beta,I}(a|s) &= \frac{\exp\big\{-\frac{\beta\alpha}{\gamma}\Phi_{\beta}(s,a)\big\}}{\sum_{a'\in\mathcal{A}}\exp\big\{-\frac{\beta\alpha}{\gamma}\Phi_{\beta}(s,a')\big\}}, \text{ where}\label{eq: PolicyDiscEnt}\\
\Phi_{\beta}(s,a) &= \sum_{s'\in\mathcal{S}}p_{ss'}^a(\check{c}_{ss'}^a + \gamma V^{*}_{\beta,I}(s')),\label{eq: SADiscEnt}
\end{align}
is the corresponding state-action value function. Substituting the $\mu^*_{\beta,I}$ in (\ref{eq: PolicyDiscEnt}) in the Bellman equation (\ref{eq: ValFuncDiscHJB}) results into the following optimal free-energy function $V_{\beta,I}^*(s)(:=V_{\beta,I}^{\mu^*_{\beta,I}}(s))$.
\begin{align}\label{eq: FEDiscEnt}
V_{\beta,I}^{*}(s) &= -\frac{\gamma}{\alpha\beta}\log\sum_{a'\in\mathcal{A}}\exp\Big(\frac{-\alpha\beta}{\gamma}\Phi_{\beta}(s,a')\Big)
\end{align}
\begin{remark}
The subsequent steps to learn the optimal policy $\mu^*_{\beta,I}$ in (\ref{eq: PolicyDiscEnt}) are similar to the steps demonstrated in the Section \ref{sec: Learning}. We forego the similar analysis here.
\end{remark}
\begin{remark}
When $\alpha=\gamma$ the policy $\mu^*_{\beta,I}$ in (\ref{eq: PolicyDiscEnt}), state-action value function $\Phi_{\beta}$ in (\ref{eq: SADiscEnt}) and the free-energy function $V^*_{\beta,I}$ in (\ref{eq: FEDiscEnt}) corresponds to the similar expressions that are obtained in the Entropy regularized methods \cite{fox2015taming}. However, in this paper we do not require that $\alpha = \gamma$. On the other hand, we propose that $\alpha$ should take up large values. In fact, our simulations in the Section \ref{sec: Simulations} demonstrate better convergence rates that are obtained when $\gamma< \alpha=(1-\epsilon)$ as compared to when $\gamma=\alpha$.
\end{remark}

\section{Parameterized MDPs}\label{sec: ParMDP}
\subsection{Problem Formulation}\label{sec: ParMDPProbForm}
As stated in Section \ref{sec: Intro}, many application areas such as small cell networks (Figure \ref{fig: 5G_smallCell}) pose a parmaterized MDP that requires simultaneously determining the (a) optimal policy $\mu^*$, and (b) the unknown state and action parameters $\zeta=\{\zeta_s\} $ and $\eta=\{\eta_a\}$ such that the state value function 
\begin{align}\label{eq: ParValFunc}
J^{\mu}_{\zeta\eta}(s) = \mathbb{E}_{p_{\mu}}\Big[\sum_{t=0}^{\infty}\gamma^t c\big(x_t(\zeta),u_t(\eta),x_{t+1}(\zeta)\big)\Big|x_0=s\Big]
\end{align}
is minimized $\forall$ $s\in\mathcal{S}$ where $x_t(\zeta)$ denotes the state $x_t\in\mathcal{S}$ with the associated parameter $\zeta_{x_t}$ and $u_t(\eta)$ denotes the action $u_t\in\mathcal{A}$ with the associated action parameter value $\eta_{u_t}$. As in Section \ref{sec: MDP}, we assume that the parameterized MDPs exhibit atleast one deterministic proper policy (Assumption \ref{assum: assum1}) to ensure the finiteness of the value function $J^{\mu}_{\zeta\eta}(s)$ and the Shannon Entropy $H^{\mu}(s)$ of the MDP for all $\mu\in\Gamma$. We further assume that the state-transition probability $\{p_{ss'}^a\}$ is independent of the state and action parameters $\zeta,\eta$.
\subsection{Problem Solution}\label{sec: ParaProbSoln}
This problem was solved in Section \ref{sec: ProbSoln}, where the states and actions were not parameterized, or equivalently can be viewed as if parameters $\zeta$, $\eta$ were known and fixed. For the parameterized case, we apply the same solution methodology, which results in the same optimal policy $\mu^*_{\beta,\zeta\eta}$ as in (\ref{eq: Policy}) as well as the corresponding free-energy function $V_{\beta,\zeta\eta}^*(s)$ in (\ref{eq: free_energy}) except that now they are characterized by the parameters $\zeta$, $\eta$. To determine the optimal (local) parameters $\zeta$, $\eta$, we set {\small $\sum_{s'\in\mathcal{S}}\frac{\partial V_{\beta,\zeta\eta}^{*}(s')}{\partial \zeta_s}=0$} $\forall$ $s$, and {\small $\sum_{s'\in\mathcal{S}}\frac{\partial V_{\beta,\zeta\eta}^{*}(s')}{\partial\eta_a}=0$} $\forall$ $a$, which we implement by using the gradient descent steps
\begin{align}\label{eq: gard_desc}
\zeta_s^+ = \zeta_s - \eta\sum_{s'\in\mathcal{S}}G_{\zeta_s}^{\beta}(s'),~
\eta_a^+ = \eta_a - \bar{\eta}\sum_{s'\in\mathcal{S}}G_{\eta_a}^{\beta}(s').
\end{align}
Here {\small$G_{\zeta_s}^{\beta}(s'):=\frac{\partial V_{\beta,\zeta\eta}^*(s')}{\partial \zeta_s}$} and {\small$G_{\eta_a}^{\beta}(s'):=\frac{\partial V_{\beta,\zeta\eta}^*(s')}{\partial \eta_a}$}. The derivatives $G_{\zeta_s}^{\beta}$ and $G_{\eta_a}^{\beta}$ are assumed to be bounded (see Proposition \ref{pro: Proposition2}). We compute these derivatives as {\small$G_{\zeta_s}^{\beta}(s')=\sum_{a'}\mu_{a'|s'}K_{\zeta_s}^{\beta}(s',a')$} and {\small$G_{\eta_a}^{\beta}(s')=\sum_{a'}\mu_{a'|s'}L_{\eta_a}^{\beta}(s',a')$} $\forall$ $s'\in\mathcal{S}$ where $K_{\zeta_s}^{\beta}(s',a')$ and $L_{\eta_a}^{\beta}(s',a')$ are the fixed points of their corresponding Bellman equations $K_{\zeta_s}^{\beta}(s',a')=[T_1K_{\zeta_s}^{\beta}](s,a)$ and $L_{\eta_a}^{\beta}(s',a')=[T_2L_{\eta_a}^{\beta}](s',a')$ where 
\begin{align}\label{eq: V_zeta_j}
\begin{split}
[T_1K_{\zeta_s}^{\beta}](s',a')&=\sum_{\substack{s''}}p_{s's''}^{a'}\Big[\frac{\partial c_{s's''}^{a'}}{\partial \zeta_s} + \gamma G_{\zeta_s}^{\beta}(s'')\Big],\\
[T_2L_{\eta_a}^{\beta}](s',a')&=\sum_{\substack{s''}}p_{s's''}^{a'}\Big[\frac{\partial c_{s's''}^{a'}}{\partial \eta_a} + \gamma G_{\eta_a}^{\beta}(s'')\Big].
\end{split}
\end{align}
Note that in the above equations we have suppressed the dependence of the instantaneous cost function $c_{s's''}^{a'}$ on the parameters $\zeta$ and $\eta$ to avoid notational clutter.
\begin{theorem}\label{thm: ParaDerivatives}
The operators $[T_1K_{\zeta_s}^{\beta}](s',a')$ and $[T_2L_{\eta_a}^{\beta}](s',a')$ defined in (\ref{eq: V_zeta_j}) are contraction maps with respect to a weighted maximum norm $\|\cdot\|_{\xi}$, where $\|X\|_{\xi}=\max_{s',a'}\frac{X(s',a')}{\xi_{s'}}$ and $\xi\in\mathbb{R}^{|\mathcal{S}|}$ is a vector of positive components $\xi_s$.
\end{theorem}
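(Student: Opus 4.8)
The plan is to observe that, for each fixed source index $s$ (and likewise each $a$), the operator $T_1$ in (\ref{eq: V_zeta_j}) is \emph{affine}, not merely Lipschitz: writing $G_{\zeta_s}^\beta(s'')=\sum_{a''}\mu_{a''|s''}K_{\zeta_s}^\beta(s'',a'')$, one has
\begin{align*}
[T_1K_{\zeta_s}^\beta](s',a')=b_{\zeta_s}(s',a')+\gamma\sum_{s'',a''}p_{s's''}^{a'}\,\mu_{a''|s''}\,K_{\zeta_s}^\beta(s'',a''),
\end{align*}
with fixed forcing term $b_{\zeta_s}(s',a')=\sum_{s''}p_{s's''}^{a'}\,\partial c_{s's''}^{a'}/\partial\zeta_s$, which is finite because the cost derivatives are bounded (cf.\ Proposition~\ref{pro: Proposition2}). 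Hence $T_1K-T_1K'=\gamma\,P_\mu\,(K-K')$, where $P_\mu$ is the row-stochastic matrix on state--action pairs with entries $P_\mu\big((s',a'),(s'',a'')\big)=p_{s's''}^{a'}\mu_{a''|s''}$. Since $\delta$ is absorbing and cost-free, $b_{\zeta_s}(\delta,\cdot)=0$ and $K(\delta,\cdot)$ stays $0$, so the pairs indexed by $\delta$ decouple and it suffices to argue on the transient pairs. The operator $T_2$ has exactly the same linear part; only the forcing term changes to $\sum_{s''}p_{s's''}^{a'}\,\partial c_{s's''}^{a'}/\partial\eta_a$.

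Next I would supply the contraction modulus. If $\gamma<1$, the claim is immediate with the uniform weight $\xi\equiv\mathbf{1}$ and $\alpha=\gamma$, since $\gamma\sum_{s'',a''}p_{s's''}^{a'}\mu_{a''|s''}=\gamma$. The substantive case is $\gamma=1$ (stochastic shortest path), and here I would invoke Lemma~\ref{lem: lem1}: because $\mu\in\Gamma$ is proper, from \emph{any} state the $\mu$-chain reaches $\delta$ within $N=|\mathcal{S}|$ steps with probability at least some $\rho>0$. Conditioned on the first, externally indexed transition, the evolution generated by iterating $T_1$ is precisely this $\mu$-chain, so the transient part of $P_\mu^{\,N+1}$ has all row sums bounded by $1-\rho$; consequently its spectral radius is strictly below $1$. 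A standard stochastic-shortest-path fact (as in \cite{bertsekas1996neuro}, and mirroring the proof of Theorem~\ref{thm: ContraMapFinite} for the operator (\ref{eq: Q_map})) then yields a positive vector $\xi\in\mathbb{R}^{|\mathcal{S}|}$ and $\alpha<1$ with $\gamma\sum_{s''}p_{s's''}^{a'}\xi_{s''}\le\alpha\,\xi_{s'}$ for all $s',a'$; combined with $\sum_{a''}\mu_{a''|s''}=1$ this gives $\|T_1K-T_1K'\|_\xi\le\alpha\|K-K'\|_\xi$ in the weighted max norm $\|X\|_\xi=\max_{s',a'}|X(s',a')|/\xi_{s'}$, and the identical estimate for $T_2$. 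Crucially the same $\xi$, which depends only on $p$, $\mu$ and $\gamma$, serves simultaneously for every source index $s$ and $a$, because the linear part of $T_1,T_2$ does not involve these indices.

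With the contraction in hand, Banach's fixed-point theorem gives existence and uniqueness of $K_{\zeta_s}^\beta$ and $L_{\eta_a}^\beta$, together with geometric convergence of the fixed-point iterations — which is exactly what the gradient-descent updates (\ref{eq: gard_desc}) rely on through $G_{\zeta_s}^\beta=\sum_{a'}\mu_{a'|s'}K_{\zeta_s}^\beta$ and $G_{\eta_a}^\beta=\sum_{a'}\mu_{a'|s'}L_{\eta_a}^\beta$. The main obstacle lies entirely in the $\gamma=1$ regime: there the contraction is not obvious and rests on translating properness of $\mu\in\Gamma$ (Lemma~\ref{lem: lem1}) into the spectral-radius bound on $P_\mu$ and into a state-indexed, Lyapunov-type weight $\xi$; once that is in place, checking that $b_{\zeta_s},b_{\eta_a}$ are well defined, that the state $\delta$ decouples, and that the weighted-norm inequality goes through is routine and parallels Theorem~\ref{thm: ContraMapFinite}.
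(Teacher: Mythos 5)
Your argument is correct and follows essentially the same route as the paper: exploit the affine structure to bound $[T_1K-T_1K'](s',a')$ by $\gamma\sum_{s'',a''}p_{s's''}^{a'}\mu_{a''|s''}|K-K'|(s'',a'')$, then apply the weighted max-norm estimate with the same properness-derived vector $\xi$ used in Theorem \ref{thm: ContraMapFinite} (the paper obtains $\xi$ from Lemma \ref{lem: alpha_xi} via an auxiliary MDP's free energy rather than your spectral-radius phrasing, but the content is identical, and the paper's proof is literally a reduction to steps (\ref{eq: AppC1})--(\ref{eq: AppC5})). Your explicit treatment of the $\gamma=1$ case and the decoupling of the $\delta$-indexed rows is somewhat more careful than the paper's two-line reduction, but it is not a different proof.
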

\noindent{\em Proof. }Please refer the Appendix \ref{app: Q_learnConv} for details.

As previously stated in Section \ref{sec: Intro}, the state value function $J^{\mu}_{\zeta\eta}(\cdot)$ in (\ref{eq: ParValFunc}) is generally non-convex function of the parameters $\zeta$, $\eta$ and riddled with multiple poor local minima with the resulting optimization problem being possibly NP-hard \cite{mahajan2009planar}. In our algorithm for parameterized MDPs we anneal $\beta$ from $\beta_{\min}$ to $\beta_{\max}$, similar to our approach for non-parameterized MDPs in Section \ref{sec: ProbSoln}, where the solution from the current $\beta$ iteration is used to initialize the subsequent $\beta$ iteration. However, in addition to facilitating a steady transition from an exploratory policy to an exploitative policy, annealing facilitates a gradual homotopy from the convex negative Shannon entropy function to the non-convex state value function $J^{\mu}_{\zeta\eta}$ which prevents our algorithm from getting stuck in a poor local minimum. The underlying idea of our heuristic is to track the optimal as the initial convex function deforms to the actual non-convex cost. Also, minimizing the Lagrangian $V_{\beta}^*(s)$ at $\beta=\beta_{\min}\approx 0$ determines the global minimum thereby making our algorithm insensitive to initialization. Algorithm \ref{alg: Algorithm2} illustrates  steps to determine policy and parameters for a parameterized MDP.
\begin{algorithm}\label{alg: Algorithm2}
\textbf{Input: } $\beta_{\min}$, $\beta_{\max}$, $\tau$;  \textbf{Output: } $\mu^*$, $\zeta$ and $\eta$.\\
\textbf{Initialize:} $\beta = \beta_{\min}$, $\mu_{a|s} = \frac{1}{|\mathcal{A}|}$, and  $\zeta,\eta$ to $0$\\
\While{$\beta\leq \beta_{\max}$}{
\While{True}{
\While{until convergence}{
update $\Lambda_{\beta}$,$\mu_{\beta}$,$G_{\zeta_s}^{\beta}$,$G_{\eta_a}^{\beta}$ in (\ref{eq: Q_bell}), (\ref{eq: Policy}) and (\ref{eq: V_zeta_j})}
update $\zeta,\eta$ in (\ref{eq: gard_desc})
\textbf{if} $\|G_{\zeta_s}\|,\|G_{\eta_a}\|<\epsilon$, break
}
$\beta \leftarrow \tau\beta$\\
}
\caption{Parameterized Markov Decision Process}
\end{algorithm}

\subsection{Parameterized Reinforcement Learning}\label{sec: ParRL}
In many applications, formulated as parameterized MDPs, the explicit knowledge of the cost function $c_{ss'}^a$, its dependence on the parameters $\zeta$, $\eta$, and the state-transition probabilities $\{p_{ss'}^a\}$  are not known. However, for each action $a$ the environment results into an instantaneous cost based on its current $x_t$, next state $x_{t+1}$ and the parameter $\zeta$, $\eta$ values which can subsequently be used to simultaneously learn the policy $\mu^*_{\beta,\zeta\eta}$ and the unknown state and action parameters $\zeta$, $\eta$ via stochastic iterative updates. At each $\beta$ iteration in our learning algorithm, we employ the stochastic iterative updates in (\ref{eq: Q_upd1}) to determine the optimal policy $\mu^*_{\beta,\zeta\eta}$ for given $\zeta$, $\eta$ values and subsequently employ the stochastic iterative updates
\begin{align}\label{eq: derivative_updates}
\text{\small $K_{\zeta_s}^{t+1}(x_t,u_t)$}&\text{\small$=(1-\nu_t(x_t,u_t))K_{\zeta_s}^t(x_t,u_t)+$}\nonumber\\
&\quad \text{\small $\nu_t(x_t,u_t)\Big[\frac{\partial c_{x_tx_{t+1}}^{u_t}}{\partial \zeta_s}+\gamma G_{\zeta_s}^t(x_{t+1})\Big],$}
\end{align}
where $G_{\zeta_s}^t(x_{t+1})=\sum_a\mu_{a|x_{t+1}}K_{\zeta_s}^t(x_{t+1},a)$ to learn the derivative $G_{\zeta_s}^{\beta*}(\cdot)$. Similar updates are used to learn $G_{\eta_a}^{\beta*}(\cdot)$. The parameter values $\zeta$, $\eta$ are then updated using the gradient descent step in (\ref{eq: gard_desc}). The following proposition formalizes the convergence of the updates in (\ref{eq: derivative_updates}) to the fixed point $G_{\zeta_s}^{\beta*}$.
\begin{proposition}\label{pro: Proposition2}
For the class of parameterized MDPs considered in Section \ref{sec: ParMDPProbForm} given that
\begin{enumerate}
\item$\sum_{t=0}^{\infty}\nu_t(s,a)=\infty$, $\sum_{t=0}^{\infty}\nu_t^2(s,a)<\infty$ $\forall s\in\mathcal{S}$, $a\in\mathcal{A}$,
\item$\exists$ $B>0$ such that $\Big|\frac{\partial c(s',a',s'')}{\partial \zeta_s}\Big|\leq B$ $\forall~s,s',a',s''$,
\item$\exists$ $C>0$ such that $\Big|\frac{\partial c(s',a',s'')}{\partial \eta_a}\Big|\leq C$ $\forall~a,s',a',s''$,
\end{enumerate}
the updates in (\ref{eq: derivative_updates}) converge to the unique fixed point $G_{\zeta_s}^{\beta*}(s')$ of the map $T_1:G_{\zeta_s}\rightarrow G_{\zeta_s}$ in (\ref{eq: V_zeta_j}).
\end{proposition}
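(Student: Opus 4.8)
The plan is to recognize the update (\ref{eq: derivative_updates}) as an instance of the asynchronous stochastic-approximation scheme for contraction mappings, exactly as in the proof of Proposition \ref{pro: Proposition1}. First I would rewrite the iteration in the canonical form
\begin{align*}
K_{\zeta_s}^{t+1}(x_t,u_t) = (1-\nu_t(x_t,u_t))K_{\zeta_s}^t(x_t,u_t) + \nu_t(x_t,u_t)\big([T_1K_{\zeta_s}^t](x_t,u_t) + w_t\big),
\end{align*}
where, by the definition of $T_1$ in (\ref{eq: V_zeta_j}), the noise term is
\begin{align*}
w_t = \Big(\tfrac{\partial c_{x_tx_{t+1}}^{u_t}}{\partial \zeta_s} + \gamma G_{\zeta_s}^t(x_{t+1})\Big) - \sum_{s''}p_{x_ts''}^{u_t}\Big(\tfrac{\partial c_{x_ts''}^{u_t}}{\partial \zeta_s} + \gamma G_{\zeta_s}^t(s'')\Big).
\end{align*}
Since $x_{t+1}\sim p(\cdot\,|\,x_t,u_t)$, conditioning on the history $\mathcal{F}_t$ up to time $t$ gives $\mathbb{E}[w_t\,|\,\mathcal{F}_t]=0$, i.e.\ $w_t$ is a martingale-difference noise with the appropriate measurability.

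Second, I would verify the conditional-variance growth bound required by the stochastic-approximation theorem. Using assumption (2), $|\partial c_{ss'}^a/\partial\zeta_s|\le B$, together with the fact that $G_{\zeta_s}^t(s'')=\sum_a\mu_{a|s''}K_{\zeta_s}^t(s'',a)$ is a convex combination of the entries of $K_{\zeta_s}^t$ (so $|G_{\zeta_s}^t(s'')|\le \max_{s,a}|K_{\zeta_s}^t(s,a)|$), the sampled quantity inside $w_t$ is bounded in absolute value by $B+\gamma\max_{s,a}|K_{\zeta_s}^t(s,a)|$. Hence $\mathbb{E}[w_t^2\,|\,\mathcal{F}_t]\le A_1 + A_2\|K_{\zeta_s}^t\|_{\xi}^2$ for suitable constants $A_1,A_2$, using that all norms on the finite-dimensional space $\mathbb{R}^{|\mathcal{S}||\mathcal{A}|}$ are equivalent. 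This is precisely the noise condition demanded by the asynchronous stochastic-approximation result of \cite{bertsekas1996neuro}.

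Third, I would assemble the pieces. Theorem \ref{thm: ParaDerivatives} already establishes that $T_1$ is a contraction in the weighted max-norm $\|\cdot\|_{\xi}$, so by the Banach fixed-point theorem it has the unique fixed point $G_{\zeta_s}^{\beta*}$. The Robbins--Monro conditions in (1) hold by hypothesis, and every state--action pair is visited infinitely often because $\mu\in\Gamma$ has strictly positive entries and the induced chain is proper (Lemma \ref{lem: lem1}). The cited theorem then yields $K_{\zeta_s}^t\to G_{\zeta_s}^{\beta*}$ with probability $1$, and consequently $G_{\zeta_s}^t(s')=\sum_a\mu_{a|s'}K_{\zeta_s}^t(s',a)\to G_{\zeta_s}^{\beta*}(s')$. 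Replacing assumption (2) by (3) and $\zeta_s$ by $\eta_a$ gives the analogous convergence of the $L_{\eta_a}$ updates verbatim.

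The main obstacle I anticipate is making the variance bound of the second step fully rigorous while the policy $\mu$ is itself being updated (via $\Psi_t$) in the full algorithm: one must either argue on a faster timescale or, as the algorithm description does, treat $\mu=\mu^*_{\beta,\zeta\eta}$ as fixed when running the $K$-updates, in which case $T_1$ reduces to an affine contraction and the boundedness bookkeeping is routine. A secondary point is to confirm that the invoked stochastic-approximation theorem requires only that $T_1$ be a weighted-sup-norm contraction plus the martingale/growth conditions on $w_t$ --- it does not require $T_1$ to be of the usual Bellman (min-over-actions) form --- which is exactly our setting.
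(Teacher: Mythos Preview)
Your proposal is correct and follows essentially the same approach as the paper: rewrite the update in the canonical stochastic-approximation form $K^{t+1}=(1-\nu_t)K^t+\nu_t(T_1K^t+w_t)$, verify $\mathbb{E}[w_t|\mathcal{F}_t]=0$ and the variance bound $\mathbb{E}[w_t^2|\mathcal{F}_t]\le A_1+A_2\|K^t\|^2$ using the boundedness assumptions on the cost derivatives, and then invoke the contraction of $T_1$ (Theorem~\ref{thm: ParaDerivatives}) together with Proposition~4.4 of \cite{bertsekas1996neuro}. The paper's proof is in fact terser than yours---it simply says the argument is identical to that of Proposition~\ref{pro: Proposition1} with the derivative bounds supplying the variance estimate---so your write-up is a faithful and more explicit version of the same argument.
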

\noindent{\em Proof. }Please refer to the Appendix \ref{app: Q_learnConv} for details.
\begin{algorithm}\label{alg: Algorithm3}
\textbf{Input: } $\beta_{\min}$, $\beta_{\max}$, $\tau$, $T$, $\nu_t$;  \textbf{Output: } $\mu^*$, $\zeta$, $\eta$\\
\textbf{Initialize: }$\beta = \beta_{\min}$, $\mu_{t}=\frac{1}{|\mathcal{A}|}$, and $\zeta,\eta,G_{\zeta}^{\beta},G_{\eta}^{\beta}$, $K_{\zeta}^{\beta},L_{\eta}^{\beta}$, $\bar{\Lambda}_{\beta}$ to $0$.\\
\While{$\beta\leq \beta_{\max}$}{Use Algorithm \ref{alg: Algorithm1} to obtain $\mu^*_{\beta,\zeta\eta}$ at given $\zeta$, $\eta$, $\beta$.\\
    Consider {\em env1}($\zeta$,$\eta$), {\em env2}($\zeta',\eta'$); set $\zeta'=\zeta$, $\eta'=\eta$\\
    \While{$\{\zeta_s\}$, $\{\eta_a\}$ converge}{
    \For{$\forall s\in\mathcal{S}$}{
        \For{$episode = 1$ to $T$}{
            reset {\em env1, env2} at state $x_t$, \\
            \While{True}{
                sample action $u_t\sim\mu^*(\cdot|x_t)$.\\
                \textit{env1:} obtain $c_t$, $x_{t+1}$.\\
                \textit{env2:} set {\small$\textstyle\zeta_s'=\zeta_s+\Delta\zeta_s$}, get $c_t'$, $x_{t+1}$.\\
                find {\small$\textstyle G_{\zeta_s}^{t+1}(x_{t})$} with $\scriptstyle\frac{\partial c_{x_tx_{t+1}}^{u_t}}{\partial \zeta_s}\approx\frac{c'_t-c_t}{\Delta\zeta_s}$.\\
                break if $x_{t+1}=\delta$; $t\leftarrow t+1$.
            }
        }
    }
    Similarly learn $G_{\eta_a}^{\beta}$. Update $\{\zeta_s\}$, $\{\eta_a\}$ in (\ref{eq: gard_desc}).}
    $\beta \leftarrow \tau\beta$\\
}
\caption{Parameterized Reinforcement Learning}
\end{algorithm}

{\em Algorithmic Details: } Please refer to the Algorithm \ref{alg: Algorithm3} for a complete implementation. Unlike the scenario in Section \ref{sec: Learning} where the agent acts upon the environment by taking an action $u_t\in\mathcal{A}$ and learns {\em only} the policy $\mu^*$, here the agent interacts with the environment by (a) taking an action $u_t\in\mathcal{A}$ and {\em also} providing (b) estimated parameter $\zeta$, $\eta$ values to the environment; subsequently, the environment results into an instantaneous cost and the next state. In our Algorithm \ref{alg: Algorithm3}, we first learn the policy $\mu_{\beta}^*$ at a given value of the parameters $\zeta$, $\eta$ using the iterations (\ref{eq: Q_upd1}) and then learn the fixed points $G_{\zeta_s}^{\beta*}$, $G_{\zeta_a}^{\beta*}$ using the iterations in (\ref{eq: derivative_updates}) to update the parameters $\zeta$, $\eta$ using (\ref{eq: gard_desc}). Note that the iterations (\ref{eq: derivative_updates}) require the derivatives $\partial c(s',a',s'')/\partial \zeta_s$ and $\partial c(s',a',s'')/\partial \eta_a$ which we determine using the instantaneous costs resulting from two {\em $\epsilon$-distinct environments} and finite difference method. Here the $\epsilon$-distinct environments represent the same underlying MDP but are distinct only in one of the parameter values. However, if two $\epsilon$-distinct environments are not feasible one can work with a single environment where the algorithm stores the instantaneous costs and the corresponding parameter values upon each interaction with the environment. 
\begin{remark}
Parameterized MDPs with infinite Shannon entropy $H^{\mu}$ can be analogously addressed using above methodology. 
\end{remark}
\begin{remark}\label{rem: Remark8}
The MDPs addressed in Section \ref{sec: gen_inst}, \ref{sec: InfiMDP}, and \ref{sec: ParMDP} consider different variants of the discounted infinite horizon problems. MDPs in Section \ref{sec: gen_inst} address the class of sequential problems that have a non-zero probability of reaching a cost-free termination state (i.e., a finite Shannon entropy value). MDPs considered in Section \ref{sec: InfiMDP} need not reach a termination state (possibly infinite value of Shannon entropy), and the underlying sequential decision problem continues for the length of horizon determined by the discounting factor $\gamma$. Parameterized MDPs in Section \ref{sec: ParMDP} can have finite or infinite Shannon entropy, but they comprise of states and actions that have an unknown parameter associated to them. 
\end{remark}

\section{Simulations}\label{sec: Simulations}
We broadly classify our simulations into two categories. Firstly, in the model-free RL setting we demonstrate our Algorithm \ref{alg: Algorithm1} to determine the control policy \begin{wrapfigure}{r}{0.41\columnwidth}
    \centering
    \includegraphics[width=0.41\columnwidth]{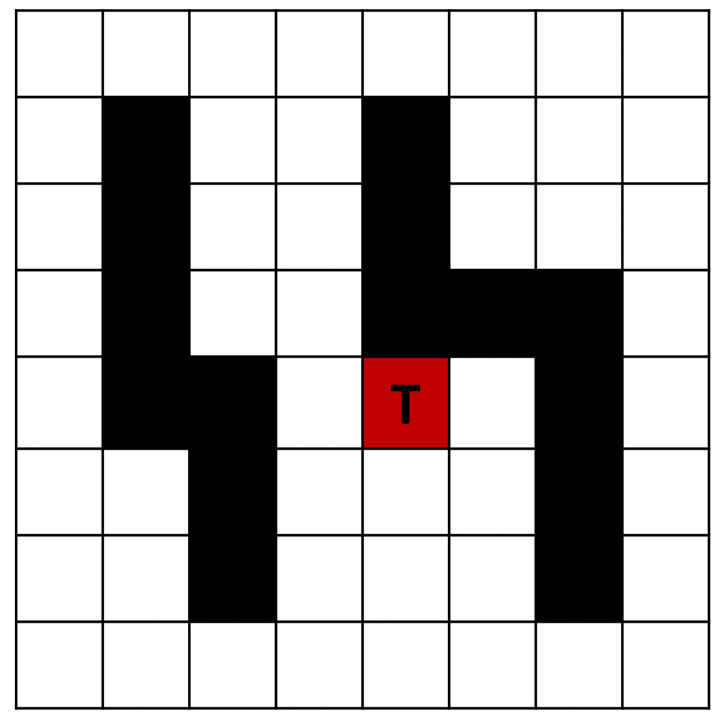}\vspace{-0.3cm}
    \caption{Gridworld environment.}
    \label{fig: DC}\vspace{-0.3cm}
\end{wrapfigure}  $\mu^*$ for the finite and infinite Shannon entropy variants of the Gridworld environment in Figure \ref{fig: DC}. Each cell in the Gridworld denotes a state. The cells colored black are invalid states. An agent can choose to move vertically, horizontal, diagonally or stay at the current cell. Each action is followed by a probability to slip in the neighbouring states (probability of 0.05 to slip in each of the vertical and horizontal directions, and probability of 0.025 to slip in each of the diagonal directions - cumulative $p(slip)\approx 0.3$). For the finite entropy case - each step incurs a unit cost. The process terminates when the agent reaches the terminal state $\mathbf{T}$. For the infinite entropy case - each step incurs a unit reward. Secondly, in the parameterized MDPs and RL setting we demonstrate our Algorithms \ref{alg: Algorithm2} and \ref{alg: Algorithm3} in designing a 5G small cell network. This involves simultaneously determining the locations of the small cells in the network as well as the optimal routing path of the communication packets from the base station to the users.

We compare our MEP-based Algorithm \ref{alg: Algorithm1} with the benchmark algorithms Entropy Regularized (ER) G-learning (also referred to as Soft Q-learning) \cite{fox2015taming}, Q-learning \cite{watkins1992q} and Double Q-learning \cite{hasselt2010double}. Note that our choice of the benchmark algorithm G-learning (or, entropy regularized Soft Q) presented in \cite{fox2015taming}) is based on its commonality to our framework as discussed in the Section \ref{sec: ProbSoln}, and the choice of algorithms Q-learning and Double Q-learning is based on their widespread utility in the literature. Also, note that the work done in \cite{fox2015taming} already establishes the efficacy of the G-learning algorithm over the following algorithms in literature Q-learning, Double Q-learning, $\Psi$-learning \cite{rawlik2010approximate}, Speedy Q-learning \cite{ghavamzadeh2011speedy}, and the consistent Bellman Operator $\mathcal{T}_C$ of \cite{bellemare2016increasing}. Below we highlight features and advantages of our MEP-based Algorithm \ref{alg: Algorithm1}.

\begin{figure*}
\centering
\includegraphics[width=0.95\textwidth]{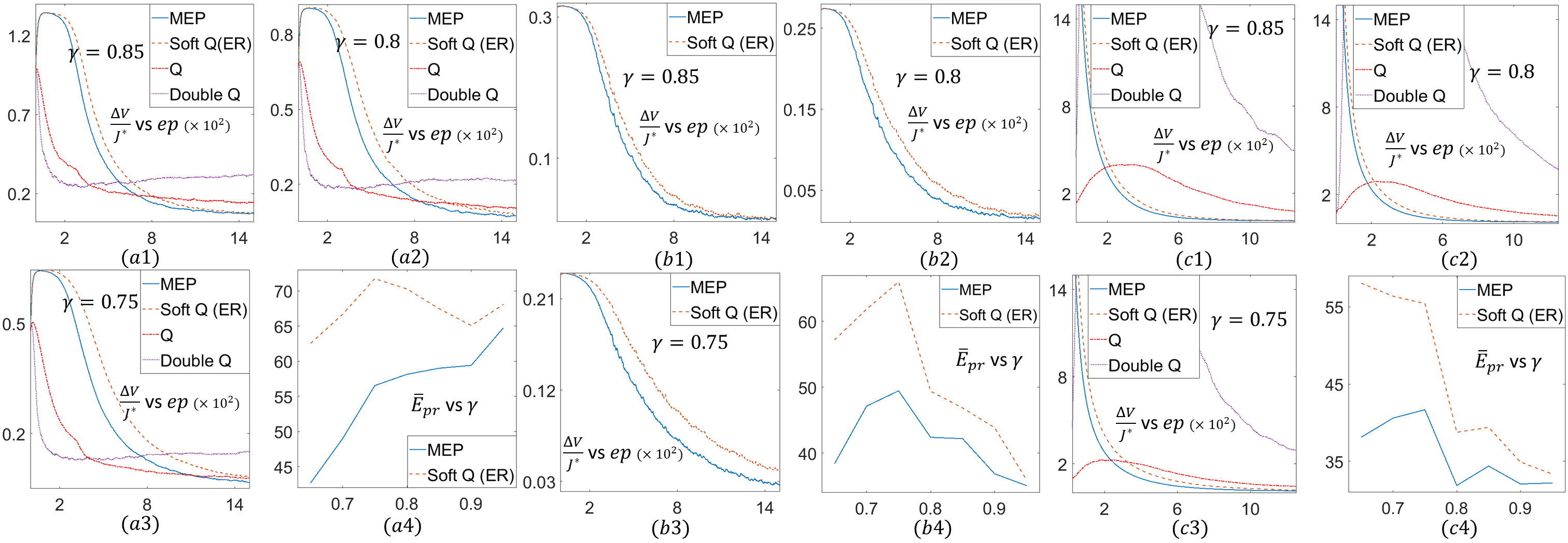}\vspace{-0.3cm}
\caption{Performance of MEP-based algorithm: Illustrations on Gridworld Environment in Figure \ref{fig: DC}. (a1)-(a3) Finite Entropy variant:  Illustrates faster convergence of Algorithm \ref{alg: Algorithm1} (MEP) at different $\gamma$ values. (a4) Demonstrates faster rates of convergence of Algorithm \ref{alg: Algorithm1} (MEP) for $\gamma$ values ranging from $0.65$ to $0.95$. (b1)-(b3) {\em Infinite Entropy Variant} : Demonstrates faster convergence of Algorithm \ref{alg: Algorithm1} (MEP) to $J^*$. (b4) Illustrates the consistent faster convergence rates of MEP with $\gamma$ ranging from $0.65$ to $0.95$. (c1)-(c4) Finite Entropy Version (with added Gaussian noise) : Similar observations as in (a1)-(a4) with significantly higher instability in learning with Double Q algorithm.}
\label{fig: DC_Simulation}\vspace{-0.4cm}
\end{figure*}

{\em Faster Convergence to Optimal $J^*$: } Figures \ref{fig: DC_Simulation}(a1)-(a3) (finite entropy variant of Gridworld) and Figures \ref{fig: DC_Simulation} (b1)-(b3) (infinite entropy variant of Gridworld) illustrate the faster convergence of our MEP-based Algorithm \ref{alg: Algorithm1} for different discount factor $\gamma$ values. Here, at each episode the percentage error $\Delta V/J^*$ between the value function $V_{\beta}^{\mu}$ corresponding to {\em learned} policy $\mu=\mu(ep)$ in the episode $ep$, and the optimal value function $J^*$ is given by
\begin{align}
\text{\small$\frac{\Delta V(ep)}{J^*} = \frac{1}{N}\sum_{i=1}^N\sum_{s\in\mathcal{S}}\frac{\big|V_{\beta,i}^{\mu(ep)}(s)-J^*(s)\big|}{J^*(s)}$},
\end{align}
where $N$ denotes the total experimental runs and $i$ indexes the value function $V_{\beta,i}^{\mu}$ for each run. As observed in Figures \ref{fig: DC_Simulation}(a1)-(a3), and Figures \ref{fig: DC_Simulation}(b1)-(b3), our Algorithm \ref{alg: Algorithm1} converges even faster as the discount factor $\gamma$ decreases. We characterize the faster convergence rates also in terms of the convergence time - more precisely the  {\em percentage} $\bar{E}_{pr}$ of total episodes taken for the learning error $\Delta V/J^*$ to reach within $5\%$ of the best (see  Figures \ref{fig: DC_Simulation}(a4) and \ref{fig: DC_Simulation}(b4)). As is observed in the figures, the performance of our (MEP-based) algorithm in comparison to entropy regularized G learning is better across all values ($0.65$ to $0.95$) of discount factor $\gamma$. Note that the performance of Algorithm \ref{alg: Algorithm1} gets even better with decreasing $\gamma$ values where the smaller discount factor values occur in instances such as the context of recommendation systems \cite{zheng2018drn}, and teaching RL-agents using human-generated rewards \cite{knox2012reinforcement}.

{\em Robustness to noise in data: }Figures \ref{fig: DC_Simulation}(c1)-(c4) demonstrate robustness to noisy environments; here the instantaneous cost $c(s,a,s')$ in the finite horizon variant of Gridworld is noisy. For the purpose of simulations, we add Gaussian noise $\mathcal{N}(0,\sigma^2)$ with $\sigma=1$ for vertical and horizontal actions, and $\sigma=0.5$ for diagonal movements. Here, at each episode we compare the percentage error $\Delta V/J^*$ in the {\em learned} value functions $V_{\beta}$ (corresponding to the state-action value estimate in (\ref{eq: Q_upd1})) of the respective algorithms. Similar to our observations and conclusions in Figures \ref{fig: DC_Simulation}(a1)-(a3) and Figures \ref{fig: DC_Simulation}(b1)-(b3) we see faster convergence of our MEP-based algorithm over the benchmark algorithms in Figures \ref{fig: DC_Simulation}(c1)-(c3) in the case of noisy environment. Also, Figure \ref{fig: DC_Simulation}(c4) demonstrates that across all discount factor values ($0.65$ to $0.95$), Algorithm \ref{alg: Algorithm1} converges faster than the entropy regularized Soft Q learning.

{\em Simultaneously determining the unknown parameters and policy in Parameterized MDPs: } We design the 5G Small Cell Network (see Figure \ref{fig: 5G_smallCell}) both when the underlying model ($c_{ss'}^a$ and $p_{ss'}^a$) is known (using Algorithm \ref{alg: Algorithm2}) and as well as unknown (using Algorithm \ref{alg: Algorithm3}). In our simulations we randomly distribute $46$ user nodes $\{n_i\}$ at $\{x_i\}$ and the base station $\delta$ at $z$ in the domain $\Omega\subset\mathbb{R}^2$ as shown in Figure \ref{fig: 5G_SC}(a). The objective is to determine the locations $\{y_j\}_{j=1}^5$ (parameters) of the small cells $\{f_j\}_{j=1}^5$ and determine the corresponding communication routes (policy). Here, the state space of the underlying MDP is $\mathcal{S}=\{n_1,\hdots,n_{46},f_1,\hdots,f_{5}\}$ where the locations $y_1,\hdots,y_5$ of the small cells are the unknown parameters $\{\zeta_s\}$ of the MDP, the action space is $\mathcal{A}=\{f_1,\hdots,f_5\}$, and the cost function $c(s,a,s')=\|\rho(s)-\rho(s')\|_2^2$ where $\rho(\cdot)$ denotes the spatial location of the respective states. The objective is to simultaneously determine the parameters (unknown small cell locations) and the control policy (communication routes in the 5G network). We consider two cases where (a) $p_{ss'}^a$ is deterministic, i.e. an action $a$ at the state $s$ results into $s'=a$ with probability $1$, and (b) $p_{ss'}^a$ is probabilistic such that action $a$ at the state $s$ results into $s'=a$ with probability $0.9$ or to the state $s'=f_1$ with probability $0.1$. Additionally, due to absence of prior work in literature on network design problems modeled as parameterized MDPs, we compare our results only with the solution resulting from a straightforward sequential methodology (as shown in Figure \ref{fig: 5G_SC}(a)) where we first partition the user nodes into $5$ distinct clusters to allocate a small cells in each cluster, and then determine optimal routes in the network.

\begin{figure*}
\centering
\includegraphics[width=0.93\textwidth]{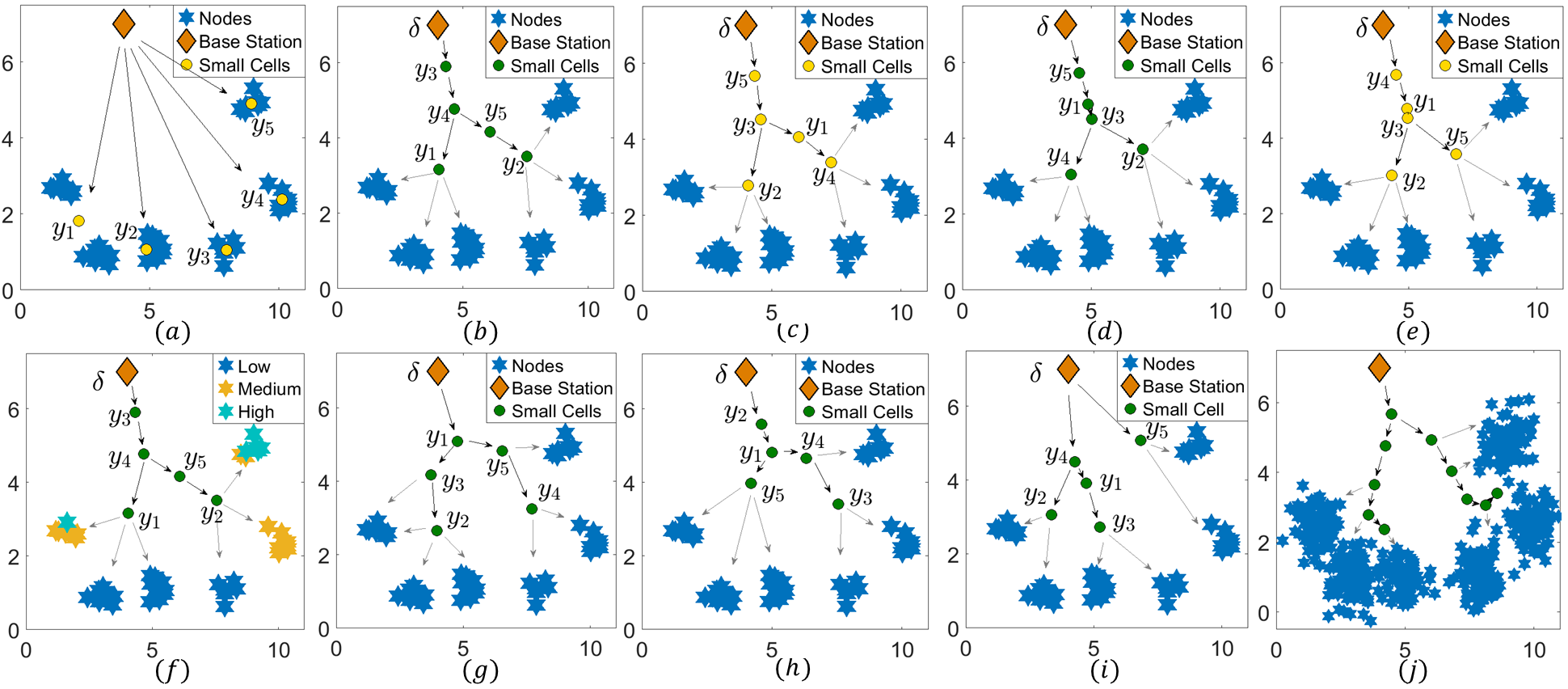}\vspace{-0.3cm}
\caption{Parameterized MDPs and RL - Design of 5G small cell network. State space $\mathcal{S}=\{\{n_i\},\{f_j\},\delta\}$ comprises of the user nodes $\{n_i\}$, small cells $\{f_j\}$, and base station $\delta$. The unknown parameters $\zeta_s$ denote the locations $\{y_j\}$ of the small cells. Action space comprises of the small-cells $\mathcal{A}=\{f_j\}$. Based on our modelling of the network there are no unknown action parameters $\{\eta_a\}$/
(a) Illustrates small cell locations $\{y_j\}$ and communication routes determined using a straightforward sequential methodology. (b)-(c) demonstrate small cells at $\{y_j\}$ and communication routes (as illustrated by arrows) resulting from policy obtained from Algorithm \ref{alg: Algorithm2} (model-based) and Algorithm \ref{alg: Algorithm3} (model-free), respectively when the $p_{ss'}^a$ is deterministic. (d)-(e) solutions obtained using Algorithm \ref{alg: Algorithm2} and Algorithm \ref{alg: Algorithm3}, respectively when $p_{ss'}^a$ is probabilistic. (f) sensitivity analysis of the solutions with respect to user node locations $\{x_i\}$. (g)-(h) Network design obtained when considering entropy of the distribution over the control actions and paths of the MDP, respectively. (i) Network design obtained without annealing in Algorithm \ref{alg: Algorithm2}. (j) Simulation on a larger dataset (user base increased by more than $10$ times).}
\label{fig: 5G_SC}\vspace{-0.4cm}
\end{figure*}

{\em Deterministic $p(s,a,s')$: }Figure \ref{fig: 5G_SC}(b) illustrates the allocation of small cells and the corresponding communication routes (resulting from optimal policy $\mu^*$) as determined by the Algorithm \ref{alg: Algorithm2}. Here, the network is designed to minimize the cumulative cost of communication from each user node and small cell. As denoted in the Figure, the route $\delta\rightarrow y_3\rightarrow y_4\rightarrow y_1\rightarrow n_i$ carries the communication packet from the base station $\delta$ to the respective user nodes $n_i$ as indicated by the grey arrow from $y_1$. The cost incurred here is approximately $180\%$ lesser than that in Figure \ref{fig: 5G_SC}(a) clearly indicating the advantage obtained from simultaneously determining the parameters and policy over a sequential methodology. In the model-free RL setting where the functions $c(s,a)$, $p_{ss'}^a$, and the locations $\{x_i\}$ of the user nodes $\{n_i\}$ are not known, we employ our Algorithm \ref{alg: Algorithm3} to determine the small cell locations $\{y_j\}_{j=1}^5$ and as well as the optimal policy $\{\mu^*(a|s)\}$ as demonstrated in the Figure \ref{fig: 5G_SC}(c). It is evident from Figures \ref{fig: 5G_SC}(b) and (c) that the solutions obtained when the model is completely known and unknown are approximately same. In fact, the solutions obtained differ only by $1.9\%$ in terms of the total cost $\sum_{s\in\mathcal{S}}J^{\mu}_{\zeta\eta}(s)$ (\ref{eq: ParValFunc}) incurred, clearly indicating the efficacy of our model-free learning Algorithm \ref{alg: Algorithm3}. 

{\em Probabilistic $p(s,a,s')$: }Figure \ref{fig: 5G_SC}(d) illustrates the solution as obtained by our Algorithm \ref{alg: Algorithm2} when the underlying model ($c(s,a)$, $p_{ss'}^a$, and $\{x_i\}$) is known. As before, here the network is designed to minimize the cumulative cost of communication from each user node and small cell. The cost associated to the network design is approximately $127\%$ lesser than in Figure \ref{fig: 5G_SC}(a). Figure \ref{fig: 5G_SC}(e) illustrates the solution as obtained by the Algorithm \ref{alg: Algorithm3} for the model-free case ($c(s,a)$, $p_{ss'}^a$, and $\{x_i\}$ are unknown). Similar to the above scenario, the solutions obtained for this case using the Algorithms \ref{alg: Algorithm2} and \ref{alg: Algorithm3} are also approximately the same and differ only by $0.3\%$ in terms of the total cost $\sum_{s}J^{\mu}_{\zeta\eta}(s)$ incurred; thereby, substantiating the efficacy of our proposed model-free learning Algorithm \ref{alg: Algorithm3}.

{\em Sensitivity Analysis: } Our algorithms enable categorizing the user nodes $\{n_i\}$ in Figure \ref{fig: 5G_SC}(b) into the categories of (i) low, (ii) medium, and (iii) high sensitiveness such that the final solution is least susceptible to the user nodes in (i) and most susceptible to the nodes in (iii). Note that the above sensitivity analysis requires to compute the derivative $\sum_{s'}\partial V_{\beta}^{\mu}(s')/\partial \zeta_{s}$, and we determine it by solving for the fixed point of the Bellman equation in (\ref{eq: V_zeta_j}). The derivative $\sum_{s'}{\partial V_{\beta}^{\mu}(s')}/{\partial \zeta_{s}}$ computed at $\beta\rightarrow\infty$ is a measure of sensitivity of the solution to the cost function $\sum_s J^{\mu}_{\zeta\eta}(s)$ in (\ref{eq: ParValFunc}) since $V_{\beta}^{\mu}$ in (\ref{eq: free_energy}) is a smooth approximation of $J^{\mu}_{\zeta\eta}(s)$ in (\ref{eq: ParValFunc}) and $V_{\beta}^{\mu}\rightarrow J^{\mu}_{\zeta\eta}(s)$ as $\beta\rightarrow \infty$. A similar analysis for Figure \ref{fig: 5G_SC}(c)-(e) can be done if the locations $\{x_i\}$ of the user nodes $\{n_i\}$ are known to the {\em agent}. The sensitivity of the final solution to the locations $\{y_j\}$, $z$ of the small cells and the base station can also be determined in a similar manner.

{\em Entropy over paths versus entropy of the policy: }  We demonstrate the benefit of maximizing the entropy of the distribution $\{p_{\mu}(\omega|s)\}$ over the paths of an MDP as compared to the distribution $\{\mu(a|s)\}$ over the control actions. Figure \ref{fig: 5G_SC}(g) demonstrates the 5G network obtained by considering the distribution over the control policy, and the Figure \ref{fig: 5G_SC}(h) illustrates the network obtained by considering the distribution over the entire paths. The network cost incurred in the Figure \ref{fig: 5G_SC}(h) is $5\%$ less than the cost incurred in Figure \ref{fig: 5G_SC}(g). Here, we have considered the above demonstrated probabilistic $p_{ss'}^a$ scenario and minimized the cumulative communication cost incurred only from the user nodes.

{\em Avoiding poor local minima and large scale setups: } As noted in the Section \ref{sec: ParaProbSoln}, annealing $\beta$ from a small value $\beta_{\min}(\approx 0)$ to a large value $\beta_{\max}$ prevents the algorithm from getting stuck at a poor local minima. Figure \ref{fig: 5G_SC}(i) demonstrates the network design obtained where the Algorithm \ref{alg: Algorithm2} does not anneal $\beta$, and iteratively solves the optimization problem at $\beta=\beta_{\max}$. The resulting network incurs a $11\%$ higher cost in comparison to the network obtained in \ref{fig: 5G_SC}(h) where the Algorithm \ref{alg: Algorithm2} anneals $\beta$ from a small to a large value. Figure \ref{fig: 5G_SC}(j) demonstrate the 5G network design obtained using Algorithm \ref{alg: Algorithm2} when the user nodes are increased by around $12$ times ($610$), and the allocated small cells are doubled to $10$.

\section{Analysis and Discussion}\label{sec: Ana_Disc}
\subsubsection{Mutual Information Minimization} The optimization problem (\ref{eq: OptimP1}) maximizes the Shannon entropy $H^{\mu}(s)$ under a given constraint on the value function $J^{\mu}$. We can similarly pose and solve the mutual information minimization problem that requires to determine the distribution $p_{\mu^*}(\mathcal{P}|s)$ (with control policy $\mu^*$) over the paths of the MDP that is close to some given prior distribution $q(\mathcal{P}|s)$ \cite{peters2010relative,grau2018soft}. Here, the objective is to minimize the KL-divergence $D_{KL}(p_{\mu}\|q))$ under the constraint $J=J_0$ (as in (\ref{eq: OptimP1})).
\subsubsection{Non-dependence on choice of $J_0$ in (\ref{eq: OptimP1})}
In our framework we do not explicitly determine and work with the value of $J_0$. Instead we work with the Lagrange parameter $\beta$ in the Lagrangian $V^{\mu}_{\beta}(s)$ in (\ref{eq: Lag}) corresponding to the optimization problem (\ref{eq: OptimP1}). It is known from the sensitivity analysis \cite{jaynes1957information} that the small values of $\beta$ correspond to large values of $J_0$, and large values of $\beta$ correspond to small values of $J_0$. Thus, in our algorithms we solve the optimization problem (\ref{eq: OptimP1}) beginning at small values of $\beta=\beta_{\min}\approx 0$ (that corresponds to some feasible large $J_0$), and anneal it to a large value $\beta_{\max}$ (that corresponds to a small $J_0$ value) at which the stochastic policy $\mu$ in (\ref{eq: Policy}) converges to either $0$ or $1$. Also at $\beta\approx 0$, the stochastic policy $\mu_{\beta}^*$ in (\ref{eq: Policy}) follows a uniform distribution, which implicitly fixes the value of $J_0$. Therefore, the initial value of $J_0$ in the proposed algorithms are fixed and are not required to be pre-specified.
\subsubsection{Computational complexity} Our MEP-based Algorithm \ref{alg: Algorithm1} performs exactly the same number of computations as the Soft Q-learning algorithm \cite{fox2015taming} for each {\em epoch} (or, iteration) within an episode. In comparison to the Q and Double Q learning algorithms, our proposed algorithm, apart from performing the additional minor computations of explicitly determining $\mu^*$ in (\ref{eq: Policy}), exhibits the similar number of computational steps.
\subsubsection{Scheduling $\beta$ and Phase Transition} In our Algorithm \ref{alg: Algorithm1}, we follow a linear schedule $\beta_k=\sigma k$ ($\sigma>0$) as suggested in the benchmark algorithm \cite{fox2015taming} to anneal the parameter $\beta$. In the case of parameterized MDPs (Algorithm \ref{alg: Algorithm2}, and \ref{alg: Algorithm3}) we geometrically anneal $\beta$ (i.e. $\beta_{k+1}=\tau\beta_k$, $\tau>1$) from a small value $\beta_{\min}$ to a large value $\beta_{\max}$ at which the control policy $\mu_{\beta}^*$ converges to either $0$ or $1$. Several other MEP-based algorithms (that address problems akin to parameterized MDPs) such as Deterministic Annealing \cite{rose1991deterministic}, incorporate geometric annealing of $\beta$. The underlying idea in \cite{rose1991deterministic} is that the solution undergoes significant changes only at certain {\em critical} $\beta_{cr}$ ({\em phase transition}) and shows insignificant changes between two consecutive critical $\beta_{cr}$'s. Thus, for all practical purposes geometric annealing of $\beta$ works well. Similar to \cite{rose1991deterministic} our Algorithms \ref{alg: Algorithm2} and \ref{alg: Algorithm3} also undergo the phase transition and we are working on its analytical expression.
\subsubsection{Capacity and Exclusion Constraints} Certain parameterized MDPs may pose capacity or dynamical constraints over its parameters. For instance, each small cell $f_j$ allocated in the Figure \ref{fig: 5G_SC} can be constrained in capacity to cater to maximum $c_j$ fraction of user nodes in the network. Our framework allows to model such a constraint as $q_{\mu}(f_j):=\sum_{a,n_i}\mu(a|n_i)p(f_j|a,n_i)\leq c_j$ where $q_{\mu}(f_j)$ measures fraction of user nodes $\{n_i\}$ that connect to $f_j$. In another scenario, the locations $\{x_i\}$ of the user nodes could be dynamically varying as $\dot{x}_i=f(x,t)$. The resulting policy $\mu^*_{\beta}$ and small cells $\{y_j\}$ will also be time varying. We treat the free-energy function $V^{\mu}_{\beta}$ in (\ref{eq: free_energy}) as a {\em control-Lyapunov} function and determine time varying $\mu^*_{\beta}$ and $\{y_j\}$ such that $\dot{V}^{\mu}_{\beta}\leq 0$.
\subsubsection{Uncertainty in Parameters} Many application areas comprise of states and actions where the associated parameters are uncertain with a known distribution over the set of their possible values. For instance, a user nodes $n_i$ in Figure \ref{fig: 5G_SC} may have an associated uncertainty in its location $x_i$ owing to measurement errors. Our proposed framework easily incorporates such uncertainties in parameter values. For example, the above uncertainty will result into replacing $c(n_i,s',a)$ with $c'(n_i,s',a)=\sum_{x_i\in X_i}p(x_i|n_i)c(n_i,s',a)$ where $p(x_i|n_i)$ is the distribution over the set $X_i$ of location $x_i$. The subsequent solution approach remains the same as in Section \ref{sec: ParaProbSoln}.
\begin{appendices}
\section*{APPENDICES}
\noindent\appendixx{\label{app: AppDerivation}\bf Proof of Lemma \ref{lem: lem1}:} Let $\bar{x}_0=s$. By Assumption \ref{assum: assum1} $\exists$ a path $\omega=(\bar{u}_0,\bar{x}_1,\hdots,\bar{x}_N=\delta)$ such that $p_{\bar{\mu}}(\omega|x_0=s)>0$ which implies $p(x_{k+1}=\bar{x}_{k+1}|x_k=\bar{x}_k,u_k=\bar{u}_k)>0$ by (\ref{eq: Markov}). Then, probability $p_{\mu}(\omega|x_0=s)$ of taking path $\omega$ under the stochastic policy $\mu\in\Gamma$ in (\ref{eq: setGamma}) is also positive.\\
\noindent{\bf Proof of Theorem \ref{thm: HJBNonTrival}:} The following Lemma is needed
\begin{lemma}\label{lem: entropyRel}
The Shannon Entropy $H^{\mu}(\cdot)$ corresponding to the MDP illustrated in Section \ref{sec: MDP} satisfies the algebraic expression $\sum_{s'}p_{ss'}^aH^{\mu}(s') = \sum_{s'}p_{ss'}^a\log p_{ss'}^a+\log \mu_{a|s}+\lambda_s+1$. 
\end{lemma}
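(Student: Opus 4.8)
The plan is to turn the definition of the path entropy $H^{\mu}$ into a one-step (Bellman-type) recursion and then read the claimed identity off the first-order optimality condition that characterizes the entropy-maximizing policy in our framework.

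\emph{Step 1: a recursion for $H^{\mu}$.} Using the Markov factorization (\ref{eq: Markov}), I split any path $\omega$ emanating from $s$ into its first action $u_{0}=a$, its first transition $x_{1}=s'$, and the residual path $\omega'$ started at $s'$, so that $\log p_{\mu}(\omega|s)=\log\mu_{a|s}+\log p_{ss'}^{a}+\log p_{\mu}(\omega'|s')$. Substituting into $H^{\mu}(s)=-\sum_{\omega}p_{\mu}(\omega|s)\log p_{\mu}(\omega|s)$, summing the residual path out of the first two terms (each partial sum being $1$) and recognizing $H^{\mu}(s')$ in the third, I obtain
\[
H^{\mu}(s)=\sum_{a}\mu_{a|s}\Big[-\log\mu_{a|s}-\sum_{s'}p_{ss'}^{a}\log p_{ss'}^{a}+\sum_{s'}p_{ss'}^{a}H^{\mu}(s')\Big].
\]
These rearrangements of the series are legitimate because, for $\mu\in\Gamma$, $H^{\mu}(s)$ is finite (the well-posedness noted after (\ref{eq: OptimP1}), which rests on Assumption \ref{assum: assum1}). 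If $B_{a}$ denotes the bracketed quantity, then trivially $\sum_{s'}p_{ss'}^{a}H^{\mu}(s')=\sum_{s'}p_{ss'}^{a}\log p_{ss'}^{a}+\log\mu_{a|s}+B_{a}$, so the lemma is equivalent to the statement that $B_{a}=\lambda_{s}+1$ is independent of $a$.

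\emph{Step 2: stationarity of the policy, and rearrangement.} In our maximum-entropy framework $\mu(\cdot|s)$ is the distribution maximizing the path entropy; by the dynamic-programming principle its restriction to state $s$ maximizes the right-hand side of the recursion over the simplex $\{\mu_{a|s}>0,\ \sum_{a}\mu_{a|s}=1\}$ with the continuation entropies $\{H^{\mu}(s')\}$ at their optimal values. Introducing a multiplier $\lambda_{s}$ for the normalization constraint and setting $\partial/\partial\mu_{a|s}=0$ gives, for every $a$,
\[
-\log\mu_{a|s}-1-\sum_{s'}p_{ss'}^{a}\log p_{ss'}^{a}+\sum_{s'}p_{ss'}^{a}H^{\mu}(s')=\lambda_{s},
\]
i.e. $B_{a}=\lambda_{s}+1$; rearranging this is exactly the asserted identity. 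As a consistency check, multiplying the identity by $\mu_{a|s}$, summing over $a$ and using the recursion of Step 1 yields $H^{\mu}(s)=\lambda_{s}+1$, so $\lambda_{s}=H^{\mu}(s)-1$ depends only on $s$, which is what makes the term $c_{0}(s)$ appearing in Theorem \ref{thm: HJBNonTrival} independent of the policy.

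\emph{Main obstacle.} The delicate point is the decoupling invoked in Step 2: that the global maximum-entropy problem over the whole path space $\Omega$ may be solved state-by-state, so that $\mu(\cdot|s)$ can be optimized while the continuation entropies $H^{\mu}(s')$ are held fixed (the subtlety being that paths issued from $s'$ may revisit $s$). Making this rigorous requires the separability of the path-entropy functional along the policy and, for the first-order conditions to genuinely characterize the maximizer, the finiteness afforded by Assumption \ref{assum: assum1} — a cost-free absorbing state reachable with positive probability — without which the recursion of Step 1 need not even close.
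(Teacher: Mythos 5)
Your Step 1 --- splitting off the first transition via (\ref{eq: Markov}) to obtain $H^{\mu}(s)=\sum_{a}\mu_{a|s}\big[-\log\mu_{a|s}-\sum_{s'}p_{ss'}^{a}\log p_{ss'}^{a}+\sum_{s'}p_{ss'}^{a}H^{\mu}(s')\big]$ --- is correct and is exactly the recursion the paper's proof starts from. The gap is in Step 2. You obtain the identity as a first-order optimality condition of a per-state entropy maximization, so your argument can only establish it at one particular policy; your own consistency check makes this visible, since $\lambda_s=H^{\mu}(s)-1$ turns the identity into $\log\mu_{a|s}=\sum_{s'}p_{ss'}^{a}\big(H^{\mu}(s')-\log p_{ss'}^{a}\big)-H^{\mu}(s)$, an explicit fixed-point condition that a generic $\mu\in\Gamma$ does not satisfy. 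But Lemma \ref{lem: entropyRel} is invoked inside the proof of Theorem \ref{thm: HJBNonTrival} to rewrite $\sum_{s'}p_{ss'}^{a}H^{\mu}(s')$ for an \emph{arbitrary} policy: the Bellman equation (\ref{eq: BellmanTrue}) is asserted for every $\mu\in\Gamma$, and only afterwards is $\partial V_{\beta}^{\mu}(s)/\partial\mu(a|s)=0$ imposed to derive (\ref{eq: Policy}). With your version of the lemma that derivation becomes circular. Moreover, even granting optimality, you impose stationarity of the wrong functional: the policy of this framework is a critical point of $V_{\beta}^{\mu}=J^{\mu}-H^{\mu}/\beta$, whose first-order condition carries the cost terms, not of the path entropy alone. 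Finally, the decoupling you yourself flag as the main obstacle --- optimizing $\mu(\cdot|s)$ while freezing the continuation entropies $H^{\mu}(s')$, even though paths issued from $s'$ may revisit $s$ --- is acknowledged but not resolved.

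The paper's route is genuinely different and never invokes optimality of $\mu$. It appends the zero term $\lambda_{s'}(\sum_{a}\mu_{a|s'}-1)$ to the recursion, differentiates with respect to $\mu_{a|s}$ to get the linear recursion (\ref{eq: lab1}) for the derivatives $\partial H^{\mu}(s')/\partial\mu_{a|s}$, in which $\rho(s,a)-\lambda_{s}$ (with $\rho(s,a)=-\sum_{s''}p_{ss''}^{a}(\log p_{ss''}^{a}-H^{\mu}(s''))-\log\mu_{a|s}-1$) appears only as the forcing term at $s'=s$; it then eliminates the unknown derivatives by weighting with $p_{\mu}(s')$ and summing over $s'$, so that the derivative sums on the two sides cancel and what survives is the algebraic relation $\rho(s,a)=\lambda_{s}$ for every $\mu\in\Gamma$, with $\lambda_{s}$ entering as the multiplier of the normalization constraint rather than as a value fixed by a maximization. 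To repair your proof you would need an elimination mechanism of this kind for the recursive terms, not an appeal to stationarity of the policy.
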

\begin{proof}
$H^{\mu}(\cdot)$ in (\ref{eq: OptimP1}) satisfies the recursive Bellman equation
\begin{align}
\text{\small$H^{\mu}(s') = \sum_{a's''}\mu_{a'|s'}p_{s's''}^{a'}\big[-\log p_{s's''}^{a'}- \log \mu_{a'|s'} + H^{\mu}(s'')\big]$}\nonumber 
\end{align}
On the right side of the above Bellman equation, we subtract a zero term $\lambda_{s'}(\sum_{a}\mu_{a|s'}-1)$ that accounts for normalization constraint $\sum_a\mu_{a|s'}=1$ and $\lambda_{s'}$ is some constant. Taking the derivative of the resulting expression we obtain
\begin{align}\label{eq: lab1}
\text{\small
$\frac{\partial H^{\mu}(s')}{\partial \mu_{a|s}} = \rho(s,a)\delta_{ss'}+ \sum_{a',s''}\mu_{a'|s'}p_{s's''}^{a'}\frac{\partial H^{\mu}(s'')}{\partial \mu_{a|s}}$}-\lambda_{s'}\delta_{ss'},
\end{align}
where $\rho(s,a) = -\sum_{s''}p_{ss''}^{a}(\log p_{ss''}^{a}-H^{\mu}(s''))-\log\mu_{a|s}-1$. The subsequent steps in the proof involve algebraic manipulations and makes use of the quantity $p_{\mu}(s'):=\sum_{s}p_{\mu}(s'|s)$ where $p_{\mu}(s'|s) = \sum_{a}p_{ss'}^a\mu_{a|s}$. Under the trivial assumption that for each state $s'$ there exists a state-action pair $(s,a)$ such that the probability of the system to enter the state $s'$ upon taking action $a$ in the state $s$ is non-zero ( i.e. $p_{ss'}^a>0$) we have that $p_{\mu}(s')>0$. Now, we multiply equation (\ref{eq: lab1}) by $p_{\mu}(s')$ and add over all $s'\in\mathcal{S}$ to obtain
\begin{align*}
&\text{\small
$\sum_{s'}p_{\mu}(s')\frac{\partial H^{\mu}(s')}{\partial \mu_{a|s}} = p_{\mu}(s')\rho(s,a)$}\\
&\qquad\qquad\text{\small$+ \sum_{s''}p_{\mu}(s'')\frac{\partial H^{\mu}(s'')}{\partial \mu_{a|s}}-p_{\mu}(s)\lambda_s$},
\end{align*}
where $p_{\mu}(s'') = \sum_{s'}p_{\mu}(s')p_{\mu}(s''|s')$. The derivative terms on both sides cancel to give $p_{\mu}(s')\rho(s,a)-\lambda_s=0$ which implies $\sum_{s'}p_{ss'}^{a}H^{\mu}(s') = \sum_{s'}p_{ss'}^a\log p_{ss'}^a+\log\mu_{a|s}+\lambda_s+1$. 
\end{proof}
Now consider the free energy function $V_{\beta}^{\mu}(s)$ in (\ref{eq: Lag}) and separate out the $t=0$ term in its infinite summation to obtain
\begin{align}\label{eq: recur1}
&\text{\small$V^{\mu}_{\beta}(s) = \mathbb{E}\Big[\sum_{t=0}^{\infty}\gamma^t c_{x_tx_{t+1}}^{u_t} + \frac{1}{\beta}\log p_{x_{t}x_{t+1}}^{u_{t}}$} \nonumber\\
&\qquad \text{\small$+ \frac{1}{\beta} \log \mu(u_t|x_t)\Big|x_0=s\Big]$}\nonumber\\
&\text{\small$V^{\mu}_{\beta}(s) = \sum_{s',a}\mu(a|s)p_{ss'}^a\big(c_{ss'}^a + \frac{1}{\beta}\log p_{ss'}^a + \frac{1}{\beta}\log \mu(a|s)\big)$}\nonumber\\
&\text{\small$+\mathbb{E}\Big[\sum_{t=1}^{\infty}\gamma^t c_{x_tx_{t+1}}^{u_t}+\frac{1}{\beta}\log p_{x_{t'}x_{t'+1}}^{u_{t'}} + \frac{1}{\beta} \log \mu(u_t|x_t)\Big|x_0=s\Big]$}\nonumber\\
&\text{let}\qquad t=t'+1,\qquad u'_{t'}=u_{t'+1},\qquad x'_{t'} = x_{t'+1}\nonumber\\
&\text{\small$\Rightarrow V_{\beta}^{\mu}(s)
= \sum_{s',a}\mu(a|s)p_{ss'}^a\big(c_{ss'}^a + \frac{1}{\beta}\log p_{ss'}^a + \frac{1}{\beta}\log \mu(a|s)\big)$}\nonumber\\
&\text{\small$+ \mathbb{E}\Big[\sum_{t'=0}^{\infty}\gamma^{t'+1}c_{x'_{t'}x'_{t'+1}}^{u'_{t'}}
+ \frac{1}{\beta}\log p_{x'_{t'}x'_{t'+1}}^{u'_{t'}}$}\nonumber\\
&\qquad+\text{\small$\frac{1}{\beta}\log \mu(u'_{t'}|x'_{t'})\Big|x_0=s\Big]$}\nonumber\\
&\text{\small$= \sum_{s',a}\mu(a|s)p_{ss'}^a\big(c_{ss'}^a + \frac{1}{\beta}\log p_{ss'}^a + \frac{1}{\beta}\log \mu(a|s)\big)$}\nonumber\\
&\qquad\text{\small$+ \gamma\mathbb{E}\Big[\sum_{t'=0}^{\infty}\gamma^{t'}c_{x'_{t'}x'_{t'+1}}^{u'_{t'}}+ \frac{1}{\gamma\beta}\log p_{x'_{t'}x'_{t'+1}}^{u'_{t'}}$}\nonumber\\
&\qquad\text{\small$+\frac{1}{\gamma\beta}\log \mu(u_{t'}'|x_{t'}')\Big|x_0=s\Big]$}\nonumber\\
&\text{\small$= \sum_{s',a}\mu(a|s)p_{ss'}^a\big(c_{ss'}^a +\frac{1}{\beta}\log p_{ss'}^a + \frac{1}{\beta}\log \mu(a|s)\big)$}\nonumber\\
&\qquad \text{\small$+ \gamma\mathbb{E}\Big[\mathbb{E}\Big[\sum_{t'=0}^{\infty}\gamma^{t'}c_{x'_{t'}x'_{t'+1}}^{u'_{t'}}+\frac{1}{\gamma\beta}\log p_{x'_{t'}x'_{t'+1}}^{a'_{t'}}$}\nonumber\\
&\text{\small$\qquad + \frac{1}{\gamma\beta}\log \mu(a_{t'}'|x_{t'}')\Big|x_0'=s'\Big]\Big|x_0=s\Big]$}\nonumber\\
&\text{\small$= \sum_{s',a}\mu(a|s)p_{ss'}^a\big(c_{ss'}^a +\frac{1}{\beta}\log p_{ss'}^a + \frac{1}{\beta}\log \mu(a|s)\big)$}\nonumber\\
&\qquad \text{\small$+ \gamma\sum_{a,s'}\mu(a|s)p_{ss'}^aV^{\mu}_{\gamma\beta}(s')$}\nonumber\\
&\text{\small$\Rightarrow V_{\beta}^{\mu}(s)= \sum_{a,s'}\mu_{a|s}p_{ss'}^{a}\Big[c_{ss'}^a+\frac{1}{\beta}\log p_{ss'}^a$}\nonumber\\
&\qquad \qquad \text{\small$+ \frac{1}{\beta}\log\mu_{a|s}+\gamma V_{\gamma\beta}^{\mu}(s')\Big],$}
\end{align}
where {\small$V_{\gamma\beta}^{\mu}(s') := J^\mu(s') -\frac{1}{\gamma\beta}H^{\mu}(s')$. Now we relate $V_{\gamma\beta}^{\mu}(s')$ with $V_{\beta}^{\mu}(s')$ by adding and subtracting $-\frac{1}{\beta}H^\mu(s')$ to $V_{\gamma\beta}^\mu (s')$. We obtain $V_{\gamma\beta}^{\mu}(s')= V_{\beta}^{\mu}(s') - \frac{1-\gamma}{\gamma\beta}H(s')$}. Substituting $V_{\gamma\beta}(s')$ and the algebraic expression obtained in Lemma \ref{lem: entropyRel} in the above equation (\ref{eq: recur1}) we obtain
{\small
\begin{align}
V^{\mu}_{\beta}(s) &= \sum_{s',a}\mu(a|s)p_{ss'}^a\big(c_{ss'}^a + \frac{1}{\beta}\log p_{ss'}^a +  \frac{1}{\beta}\log \mu(a|s)\big)\nonumber \\
&+ \gamma\sum_{a,s'}\mu(a|s)p_{ss'}^aV_{\beta}^{\mu}(s')  -\frac{1-\gamma}{\beta}\sum_{a,s'}\mu(a|s)p_{ss'}^a H^{\mu}(s')\nonumber
\end{align}
\begin{align}
&\Rightarrow V_{\beta}^{\mu}(s) = \sum_{s',a}\mu(a|s)p_{ss'}^a\big(c_{ss'}^a + \frac{1}{\beta}\log p_{ss'}^a +  \frac{1}{\beta}\log \mu(a|s)\big)\nonumber \\
&+ \gamma\sum_{a,s'}\mu(a|s)p_{ss'}^aV_{\beta}^{\mu}(s')-\frac{1-\gamma}{\beta}\sum_a \mu(a|s)\sum_{s'}p_{ss'}^aH^{\mu}(s') \nonumber
\end{align}
\begin{align}
&\Rightarrow V_{\beta}^{\mu}(s) = \sum_{s',a}\mu(a|s)p_{ss'}^a\big(c_{ss'}^a +\frac{1}{\beta}\log p_{ss'}^a +  \frac{1}{\beta}\log \mu(a|s)\big)\nonumber\\
&+ \gamma\sum_{a,s'}\mu(a|s)p_{ss'}^aV_{\beta}^{\mu}(s')\nonumber\\
&-\frac{1-\gamma}{\beta}\sum_a \mu(a|s)\Big(\sum_{s'}p_{ss'}^a\log p_{ss'}^a + \log\mu(a|s) + \lambda_s+1\Big)\nonumber
\end{align}
\begin{align}
\Rightarrow V_{\beta}^{\mu}(s) &= \sum_{a,s'}\mu(a|s)p_{ss'}^a\big(c_{ss'}^a + \frac{\gamma}{\beta}\log p_{ss'}^a + \frac{\gamma}{\beta}\log \mu(a|s)\big)\nonumber\\
&+ \gamma\sum_{a,s'}\mu(a|s)p_{ss'}^a V_{\beta}^{\mu}(s')+c_0(s)
\end{align}}
where $c_0(s)=-\frac{1-\gamma}{\beta}(\lambda_s+1)$ does not depend on the policy $\mu$. Therefore, since the control policy $\mu^*(a|s)$ is determined by taking critical points of $V_\beta^\mu(s)$, it is independent of $c_0(s)$ as shown in the following subsection.

\subsection{Independence of policy on $c_0$}

The Bellman equation is given by 
\begin{align}\label{eq: recurBell}
V_{\beta}^{\mu}(s) &= \sum_{a,s'}\mu(a|s)p_{ss'}^a\big(c_{ss'}^a + \frac{\gamma}{\beta}\log p_{ss'}^a + \frac{\gamma}{\beta}\log \mu(a|s)\big)\nonumber\\
&+ \gamma\sum_{a,s'}\mu(a|s)p_{ss'}^a V_{\beta}^{\mu}(s')+c_0(s)
\end{align}
The optimal control policy $\mu_\beta^*(a|s)$ obtained upon taking the derivative of the recursive Bellman equation (and also accounting for $\sum_{a}\mu(a|s)=1$) is given by 
\begin{align}
\mu_{\beta}^*(a|s) &= \frac{\exp\{-\frac{\beta}{\gamma} \bar{\Lambda}^*_{\beta}(s,a)\}}{\sum_{a'}\exp\{-\frac{\beta}{\gamma} \bar{\Lambda}^*_{\beta}(s,a')\}},\quad \text{where}\label{eq: OptimalCtrl1}\\
\bar{\Lambda}_{\beta}^*(s,a)&=\sum_{s'}p_{ss'}^a\big(c_{ss'}^a+\frac{\gamma}{\beta}\log p_{ss'}^a + \gamma V_{\beta}^{*}(s')\big)\label{eq: OptimalCtrl2},
\end{align}

Substituting the optimal policy $\mu_{\beta}^*(a|s)$ (\ref{eq: OptimalCtrl1}) into the recursive Bellman in (\ref{eq: recurBell}) we obtain 
\begin{align}\label{eq: optrecurLag}
V_{\beta}^*(s) = -\frac{\gamma}{\beta}\log \sum_{a}\exp\Big\{-\frac{\beta}{\gamma}\bar{\Lambda}_{\beta}^*(s,a)\Big\} + c_0(s).
\end{align}
Substituting the above equation (\ref{eq: optrecurLag}) into the state-action value function in (\ref{eq: OptimalCtrl2}) we obtain the following map
\begin{align}\label{eq: mapLambda}
&\bar{\Lambda}_{\beta}^*(s,a) = \sum_{s'}p_{ss'}^a\Big[\big(c_{ss'}^a + \frac{\gamma}{\beta}\log p_{ss'}^a\big)\nonumber\\
&- \frac{\gamma^2}{\beta}\log\sum_{a}\exp\Big\{-\frac{\beta}{\gamma}\bar{\Lambda}_{\beta}^*(s,a)\Big\}\Big]+ c_0(s)=: [T_1\bar{\Lambda}_{\beta}](s,a),
\end{align}
where the proof that the map $T_1:\bar{\Lambda}_{\beta}\rightarrow \bar{\Lambda}_{\beta}$ is a contraction map is analogous to the proof of Theorem \ref{thm: ContraMapFinite}.\\

The state-action value $\Lambda_{\beta}^*(s,a)$ obtained by disregarding $c_0(s)$ in (\ref{eq: optrecurLag}) satisfies the recursive equation
\begin{align}\label{eq: mapLambda1}
&\Lambda_{\beta}^*(s,a) = \sum_{s'}p_{ss'}^a\Big[\big(c_{ss'}^a + \frac{\gamma}{\beta}\log p_{ss'}^a\big)\nonumber\\
&- \frac{\gamma^2}{\beta}\log\sum_{a}\exp\Big\{-\frac{\beta}{\gamma}\Lambda_{\beta}^*(s,a)\Big\}\Big]=:[T\Lambda_{\beta}](s,a),
\end{align}
where the map $T:\Lambda_{\beta}\rightarrow \Lambda_{\beta}$ has been shown to be a contraction map in the Theorem \ref{thm: ContraMapFinite}.\\

{\em Remark: }Note that $[T_1x](s,a)=[Tx](s,a)+c_0(s)$. Therefore, $T_1$ is a contraction since $T$ is contraction and has a unique fixed point.

\textbf{Claim:} The optimal control policy $\mu_{\beta}^*(a|s)$ in (\ref{eq: OptimalCtrl1}) is same if computed using either $\bar{\Lambda}_{\beta}^*(s,a)$ or $\Lambda_{\beta}^*(s,a)$. This is so because the fixed points $\bar{\Lambda}_{\beta}^*(s,a)$ and $\Lambda_{\beta}^*(s,a)$ of the contraction maps $T_1:\bar{\Lambda}_{\beta}\rightarrow \bar{\Lambda}_{\beta}$ and $T:\Lambda_{\beta}\rightarrow\Lambda_{\beta}$, respectively, differ by $\frac{1}{1-\gamma}c_0(s)$, i.e. $\bar{\Lambda}_{\beta}^*(s,a) = \Lambda_{\beta}^*(s,a) + \frac{1}{1-\gamma}c_0(s)$. If we plug the above relation into the control policy in (\ref{eq: OptimalCtrl1}) we obtain
\begin{align}
\mu_{\beta}^*(a|s) &= \frac{\exp\{-\frac{\beta}{\gamma} {\Lambda}^*_{\beta}(s,a)-\frac{\beta/\gamma}{1-\gamma}c_0(s)\}}{\sum_{a'}\exp\{-\frac{\beta}{\gamma} {\Lambda}^*_{\beta}(s,a')-\frac{\beta/\gamma}{1-\gamma}c_0(s)\}}\nonumber\\
\Rightarrow \mu_{\beta}^*(a|s) &= \frac{\exp\{-\frac{\beta}{\gamma} {\Lambda}^*_{\beta}(s,a)\}}{\sum_{a'}\exp\{-\frac{\beta}{\gamma} {\Lambda}^*_{\beta}(s,a')\}}\nonumber
\end{align}
Thus, indicating that we do not need to take $c_0(s)$ into account while computing the optimal policy.

{\bf Proof for $\bar{\Lambda}_{\beta}^*(s,a) = \Lambda_{\beta}^*(s,a) + \frac{1}{1-\gamma}c_0(s)$: } This can be checked by substituting this expression into the definition of the map $[T_1\bar{\Lambda}_{\beta}]$ in (\ref{eq: mapLambda}); we get
\begin{align}
&T_1\bar{\Lambda}_{\beta}^*(s,a) = \sum_{s'}p_{ss'}^a\Big[\big(c_{ss'}^a + \frac{\gamma}{\beta}\log p_{ss'}^a\big)\nonumber\\
&- \frac{\gamma^2}{\beta}\log\sum_{a}\exp\Big\{-\frac{\beta}{\gamma}\Lambda_{\beta}^*(s,a)-\frac{\beta/\gamma}{1-\gamma}c_0(s)\Big\}\Big] + c_0(s)\nonumber
\end{align}
\begin{align}
&T_1\bar{\Lambda}_{\beta}^*(s,a) = \sum_{s'}p_{ss'}^a\Big[\big(c_{ss'}^a + \frac{\gamma}{\beta}\log p_{ss'}^a\big)\nonumber\\
&- \frac{\gamma^2}{\beta}\log\sum_{a}\exp\Big\{-\frac{\beta}{\gamma}\Lambda_{\beta}^*(s,a)\Big\}\Big]+\frac{\gamma}{1-\gamma}c_0(s) + c_0(s)\nonumber\\
&T_1\bar{\Lambda}_{\beta}^*(s,a) =\sum_{s'}p_{ss'}^a\Big[\big(c_{ss'}^a + \frac{\gamma}{\beta}\log p_{ss'}^a\big)\nonumber\\- &\frac{\gamma^2}{\beta}\log\sum_{a}\exp\Big\{-\frac{\beta}{\gamma}\Lambda_{\beta}^*(s,a)\Big\}\Big]+\frac{1}{1-\gamma}c_0(s)\nonumber\\
&T_1\bar{\Lambda}_{\beta}^*(s,a) = T{\Lambda}_{\beta}^*(s,a) + \frac{1}{1-\gamma}c_0(s)\nonumber\\
&\bar{\Lambda}_{\beta}^*(s,a) = \Lambda_{\beta}^*(s,a) + \frac{1}{1-\gamma}c_0(s)\nonumber
\end{align}
Hence proved.

\noindent\appendixx{\label{app: AppCont_MapQ}\bf Proof of Theorem \ref{thm: ContraMapFinite}: }Following lemma is used.
\begin{lemma}\label{lem: alpha_xi}
For every policy $\mu\in\Gamma$ defined in (\ref{eq: setGamma}) there exists a vector $\xi=(\xi_s)\in\mathbb{R}_{+}^{|\mathcal{S}|}$ with positive components and a scalar $\lambda < 1$ such that $\sum_{s'}p_{ss'}^a\xi_{s'}\leq \lambda\xi_s$ for all $s\in\mathcal{S}$ and $a\in\mathcal{A}$.
\end{lemma}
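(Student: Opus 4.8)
The plan is to exhibit a single weight vector $\xi$ that works simultaneously for every $\mu\in\Gamma$ — in fact for every individual action $a$ — by taking $\xi$ to be essentially the \emph{maximal expected time to absorption} at $\delta$. Throughout I treat $\delta$ as a genuine terminating state carrying no outgoing probability mass (consistent with it being cost-free termination), so that the inequality at $s=\delta$ reads $0\le\lambda\xi_\delta$ and is trivial for any $\xi_\delta>0$. For each transient state $s$ define $w_s:=\sup_{\pi}\mathbb{E}_{\pi}\big[\tau_\delta\mid x_0=s\big]$, where $\tau_\delta=\inf\{t\ge 0:x_t=\delta\}$ and the supremum runs over all policies; by the standard optimality theory for such finite problems the supremum is attained by a stationary deterministic policy, and $w$ is the maximal solution of the dynamic-programming recursion $w_s=1+\max_{a\in\mathcal{A}}\sum_{s'}p_{ss'}^a w_{s'}$ with $w_\delta=0$. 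Since $x_0=s\ne\delta$ already contributes one unit to $\tau_\delta$, we have $w_s\ge 1$, so all these numbers are strictly positive.

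Granting the finiteness $W:=\max_s w_s<\infty$ (the one substantive point, discussed below), set $\xi_s:=w_s$ for transient $s$, $\xi_\delta:=\tfrac12$, and $\lambda:=1-\tfrac{1}{2W}<1$. Then for every transient $s$ and every action $a$, the recursion gives $\sum_{s'}p_{ss'}^a\xi_{s'}=\sum_{s'\ \mathrm{transient}}p_{ss'}^a w_{s'}+p_{s\delta}^a\xi_\delta\le (w_s-1)+\tfrac12\le \xi_s-\tfrac{1}{2W}\xi_s=\lambda\xi_s$, using $\sum_{s'\ \mathrm{transient}}p_{ss'}^a w_{s'}\le w_s-1$ from the Bellman recursion, $p_{s\delta}^a\le 1$, and $\xi_s\le W$; combined with the trivial case $s=\delta$ this is exactly the asserted inequality. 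Note that the constructed $\xi$ and $\lambda$ do not depend on which $\mu\in\Gamma$ (or which action) one started from, so the ``for every $\mu\in\Gamma$'' quantifier is honoured vacuously; the lemma is really a statement about the transition kernel alone, obtained because Lemma \ref{lem: lem1} guarantees the relevant absorption property.

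The heart of the matter — and where I expect the real work — is the uniform finiteness $W<\infty$. Pointwise absorption of individual trajectories is not enough; one needs a bound on the probability of \emph{non}-absorption that is uniform over the starting state and over the action choices realising the suprema in the $w_s$. The plan is: (i) use Assumption \ref{assum: assum1} and the Markov factorisation (\ref{eq: Markov}), as in the proof of Lemma \ref{lem: lem1}, to produce from every starting state a path of length at most $m:=|\mathcal{S}|$ that reaches $\delta$ with positive probability, and then — exploiting the finiteness of $\mathcal{S}$ and $\mathcal{A}$, which makes the collection of policies that matter here effectively finite — upgrade this into a single $\rho\in[0,1)$ such that from \emph{any} state the probability of not having visited $\delta$ within $m$ steps is at most $\rho$; (ii) chain this bound over consecutive length-$m$ blocks to get $P_\pi(\tau_\delta>km\mid x_0=s)\le\rho^{k}$ uniformly in $\pi$, $s$, $k$; (iii) conclude $w_s=\sum_{t\ge 0}P_\pi(\tau_\delta>t\mid x_0=s)\le m\sum_{k\ge 0}\rho^{k}=m/(1-\rho)<\infty$, hence $W\le m/(1-\rho)$. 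Once Step (i) — the passage from state-by-state positive-probability routes to a single uniform $\rho<1$ — is secured, everything else is the routine bookkeeping above; securing that uniform bound from the proper-policy structure is the crux of the argument.
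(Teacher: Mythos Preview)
Your proof is correct and takes a genuinely different route from the paper. The paper constructs $\xi$ by setting up an auxiliary MDP with the \emph{same} transition kernel but constant negative step costs $c_{ss'}^a=-1-\tfrac{1}{\beta}\log(|\mathcal{A}||\mathcal{S}|)$, and then defines $\xi_s:=-V_\beta^*(s)$ via its own optimal free-energy (\ref{eq: free_energy}); the LogSumExp lower bound gives $-\xi_s\le\Lambda_\beta(s,a)$ for every $a$, which unwinds to $\sum_{s'}p_{ss'}^a\xi_{s'}\le\xi_s-1$ and hence $\lambda=\max_s(\xi_s-1)/\xi_s<1$. Your construction instead uses the \emph{hard} Bellman equation for the maximal expected absorption time, $w_s=1+\max_a\sum_{s'}p_{ss'}^a w_{s'}$, and reads off the same structural inequality $\sum_{s'}p_{ss'}^a\xi_{s'}\le\xi_s-\text{const}$ directly. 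Your argument is more elementary---it is the classical stochastic-shortest-path construction and does not touch any of the soft-value machinery---whereas the paper's version has the aesthetic advantage of staying within the free-energy formalism already developed. Structurally the two are close cousins: the paper's $\xi$ is a soft-min relaxation of your $w$.

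One caution on your Step~(i): Assumption~\ref{assum: assum1} and Lemma~\ref{lem: lem1} give positive $m$-step absorption probability for every \emph{strictly stochastic} policy in $\Gamma$, but your supremum in $w_s$ runs over \emph{all} policies, including deterministic ones, and your uniform $\rho<1$ must hold for those too. Existence of a single proper deterministic policy does not force every deterministic policy to be proper. This is not so much a flaw in your plan as a hidden hypothesis of the lemma itself: the conclusion $\sum_{s'}p_{ss'}^a\xi_{s'}\le\lambda\xi_s$ for \emph{every} $a$ already fails if some state-action pair leads into a recurrent class avoiding $\delta$ (e.g.\ $p_{ss}^a=1$ forces $\lambda\ge1$). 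So both your argument and the paper's implicitly need that from every state-action pair $\delta$ is reachable with positive probability; you should state this explicitly when executing Step~(i) rather than trying to extract it from Assumption~\ref{assum: assum1} alone.
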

\begin{proof}
Consider a new MDP with state transition probabilities similar to the original MDP and the transition costs $c_{ss'}^a = -1-\frac{1}{\beta}\log(|\mathcal{A}||\mathcal{S}|)$ except when $s=\delta$. 
Thus, the free-energy function $V_{\beta}^{\mu}(s)$ in (\ref{eq: Lag}) for the new MDP is less than or equal to $-1$. We define $-\xi_s\triangleq V_{\beta}^*(s)$ (as given in \ref{eq: free_energy})) and use LogSumExp \cite{kobayashi2007mathematics} inequality to obtain $-\xi_s\leq\min_{a}\Lambda_{\beta}(s,a)\leq \Lambda_{\beta}(s,a) ~\forall~a\in\mathcal{A}$ where $\Lambda_{\beta}(s,a)$ is the state action value function in (\ref{eq: Q_map}). Thus, $-\xi_s\leq \sum_{s'}p_{ss'}^{a}\big(c_{ss'}^a + \frac{\gamma}{\beta}\log p_{ss'}^a-\gamma\xi_{s'}\big)$ and upon substituting $c_{ss'}^a$ we obtain $-\xi_s\leq -1-\gamma\sum_{s'}p_{ss'}^a\xi_{s'}\leq -1 - \sum_{s'}p_{ss'}^a\xi_{s'}$.
\begin{align*}
\text{\small
$\Rightarrow \sum_{s'\in\mathcal{S}}p_{ss'}^a\xi_{s'}\leq \xi_{s}-1 \leq \Big[\max_{s}\frac{\xi_s-1}{\xi_s}\Big]\xi_s=:\lambda\xi_s$}.
\end{align*}
Since $V_{\beta}^*(s)\leq-1$ $\Rightarrow$ $\xi_s-1\geq0$ and thus $\lambda < 1$. 
\end{proof}
Next we show that $T:\Lambda_{\beta}\rightarrow \Lambda_{\beta}$ in (\ref{eq: Q_map}) is a contraction map. For any $\hat{\Lambda}_{\beta}$ and $\check{\Lambda}_{\beta}$ we have that $[T\hat\Lambda_{\beta}-T\check\Lambda_{\beta}](s,a)$
\begin{align}\label{eq: LogSumEq}
&\text{\small$=-\frac{\gamma^2}{\beta}\sum_{s'\in\mathcal{S}}p_{ss'}^{a}\log\frac{\sum_{a}
\exp{\big(-\frac{\beta}{\gamma}\hat{\Lambda}_{\beta}(s',a)\big)}}{\sum_{a'}\exp{\big(-\frac{\beta}{\gamma}\check{\Lambda}_{\beta}(s',a')\big)}}$}\nonumber\\
&\text{\small$\geq\gamma \sum_{s',a'}p_{ss'}^a\hat\mu_{a'|s'}(\hat\Lambda_{\beta}(s',a')-\check\Lambda_{\beta}(s',a')) =:\gamma\Delta_{\hat\mu}$},
\end{align}
where we use the Log sum inequality to obtain (\ref{eq: LogSumEq}), and $\hat\mu_{a|s}$ is the stochastic policy in (\ref{eq: Policy}) corresponding to $\hat\Lambda_{\beta}(s,a)$.  Similarly, we obtain $[T\check{\Lambda}_{\beta} - T\hat{\Lambda}_{\beta}](s,a)\geq-\gamma\sum_{s',a'}p_{ss'}^{a}\check{\mu}_{a'|s'}(\hat{\Lambda}_{\beta}(s',a')-\check{\Lambda}_{\beta}(s',a'))=:-\gamma\Delta_{\check\mu}$ where $\check\mu_{a|s}$ is the policy in (\ref{eq: Policy}) corresponding to $\check\Lambda_{\beta}(s,a)$. Now from $\gamma\Delta_{\hat\mu}\leq[T\hat\Lambda_{\beta}-T\check\Lambda_{\beta}](s,a)\leq \gamma\Delta_{\check\mu}$ we conclude that $|[T\hat{\Lambda}_{\beta}-T\check{\Lambda}_{\beta}](s,a)|\leq \gamma\Delta_{\bar\mu}(s,a)$ where $\Delta_{\bar\mu}(s,a)=\max\{|\Delta_{\hat\mu}(s,a)|,|\Delta_{\check\mu}(s,a)|\}$ and we have $|[T\hat{\Lambda}_{\beta}-T\check{\Lambda}_{\beta}](s,a)|$
\begin{align}
&\text{\small$\leq \gamma\sum_{s',a'}p_{ss'}^a\bar{\mu}_{a'|s'}|\hat\Lambda_{\beta}(s',a')-\check\Lambda_{\beta}(s',a')|$}\label{eq: AppC1}\\
&\text{\small$\leq \gamma\sum_{s',a'}p_{ss'}^a\xi_{s'}\bar{\mu}_{a'|s'}\|\hat\Lambda_{\beta}-\check\Lambda_{\beta}\|_{\xi}$}\label{eq: AppC3}
\end{align}
where $\|\Lambda_{\beta}\|_{\xi}=\max_{s,a}\frac{\Lambda_{\beta}(s,a)}{\xi_s}$ and $\xi\in\mathbb{R}^{\mathcal{S}}$ is as given in Lemma \ref{lem: alpha_xi}. Further, from the same Lemma we obtain
\begin{align}
\text{\small $|[T\hat{\Lambda}_{\beta}-T\check{\Lambda}_{\beta}](s,a)|\leq \gamma\lambda\xi_{s}\sum_{a'\in\mathcal{A}}\bar{\mu}_{a'|s'}\|\hat{\Lambda}_{\beta}-\check{\Lambda}_{\beta}\|_{\xi}$}\label{eq: AppC4}\\
\text{\small $\Rightarrow \|T\hat{\Lambda}_{\beta}-T\check{\Lambda}_{\beta}\|_{\xi}\leq \gamma\lambda\|\hat{\Lambda}_{\beta}-\check{\Lambda}_{\beta}\|_{\xi} \text{ with }\gamma\lambda<1$}\label{eq: AppC5}
\end{align}

\noindent\appendixx{\label{app: InfiniteSE}\bf Proof of Theorem \ref{thm: BellmanTrue2}:} The proof follows the similar idea as the proof for Theorem \ref{thm: HJBNonTrival} in Appendix \ref{app: AppDerivation} and thus, we do not explain it in detail except the following Lemma that illustrates the algebraic structure of the discounted Shannon entropy $H_d^{\mu}(\cdot)$ in (\ref{eq: DiscountShannonEnt}) which is different from that in Lemma \ref{lem: entropyRel} and also required in our proof of the said theorem.
\begin{lemma}\label{lem: entropyRel2}
The discounted Shannon entropy $H_d^{\mu}(\cdot)$ corresponding to the MDP in Section \ref{sec: InfiMDP} satisfies the algebraic term $\alpha\sum_{s'}p_{ss'}^a H^{\mu}_d(s') = \sum_{s'}p_{ss'}^a\log p_{ss'}^a+\log \alpha\mu(a|s) + \lambda_s+1$.
\end{lemma}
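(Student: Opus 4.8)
The plan is to mirror the proof of Lemma~\ref{lem: entropyRel} almost line for line, the only structural change being that the continuation term in the Bellman recursion for the discounted entropy carries the factor $\alpha$. First I would peel off the $t=0$ term in the definition~(\ref{eq: DiscountShannonEnt}) and use the tower property of conditional expectation, exactly as in Appendix~\ref{app: AppDerivation}; after the index shift $t=t'+1$ the geometric factor $\alpha$ survives on the tail, giving the recursion $H_d^{\mu}(s)=\sum_{a,s'}\mu_{a|s}p_{ss'}^{a}\big[-\log\mu_{a|s}-\log p_{ss'}^{a}+\alpha H_d^{\mu}(s')\big]$. This is the discounted analogue of the recursion used for $H^{\mu}$, and it is the sole place where $\alpha$ enters.

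Next, following the proof of Lemma~\ref{lem: entropyRel}, I would add the null term $-\lambda_{s'}\big(\sum_{a}\mu_{a|s'}-1\big)$ to the right-hand side of this recursion, with $\lambda_{s'}$ an undetermined constant accounting for the normalization $\sum_a\mu_{a|s'}=1$, and then differentiate with respect to $\mu_{a|s}$. This yields a self-referential relation $\frac{\partial H_d^{\mu}(s')}{\partial\mu_{a|s}}=\delta_{ss'}\big(\rho_d(s,a)-\lambda_s\big)+\alpha\sum_{a',s''}\mu_{a'|s'}p_{s's''}^{a'}\frac{\partial H_d^{\mu}(s'')}{\partial\mu_{a|s}}$, where $\rho_d(s,a):=-\sum_{s'}p_{ss'}^{a}\big(\log p_{ss'}^{a}-\alpha H_d^{\mu}(s')\big)-\log\mu_{a|s}-1$. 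Multiplying this relation by a strictly positive state weight and summing over $s'\in\mathcal{S}$ so that the self-referential derivative terms collapse, one is left with a scalar equation that forces $\rho_d(s,a)$ to depend on $s$ alone; identifying that state function with $\lambda_s$ and substituting the definition of $\rho_d$ then gives $\alpha\sum_{s'}p_{ss'}^{a}H_d^{\mu}(s')=\sum_{s'}p_{ss'}^{a}\log p_{ss'}^{a}+\log\mu_{a|s}+\lambda_s+1$, which is the claim once $\log\mu_{a|s}+\lambda_s$ is rewritten as $\log\alpha\mu(a|s)+(\lambda_s-\log\alpha)$ and $\lambda_s$ relabelled. This last step is exactly why the $\alpha$ inside $\log\alpha\mu(a|s)$ is harmless: $\lambda_s$ is free, and the identity is only ever used up to a policy-independent additive constant in the proof of Theorem~\ref{thm: BellmanTrue2}.

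The main obstacle, and the only point where the argument genuinely departs from the undiscounted case, is making the summation step collapse cleanly in the presence of the factor $\alpha$. In Lemma~\ref{lem: entropyRel} the one-step measure $p_{\mu}(s')=\sum_{s}p_{\mu}(s'|s)$ does the job; here I would instead weight by the $\alpha$-discounted state-occupancy measure, i.e.\ the solution $w_{\mu}$ of $w_{\mu}=\mathbf{1}+\alpha P_{\mu}^{\top}w_{\mu}$ with $P_{\mu}(s',s'')=\sum_{a'}\mu_{a'|s'}p_{s's''}^{a'}$, which is well defined and strictly positive because $\alpha\in(0,1)$ makes $I-\alpha P_{\mu}^{\top}$ invertible; this is precisely the weight that makes the $\alpha$-propagated derivative terms telescope against the undiscounted sum. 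The remaining care is to confirm that after telescoping the residual forces $\rho_d(s,a)$ to be a genuine function of $s$ alone (rather than agreeing only up to a state-dependent slack) and that the occupancy weight is strictly positive so one may divide by it; both follow as in Appendix~\ref{app: AppDerivation}, with $\alpha\in(0,1)$ now supplying the convergence of the relevant Neumann series that properness supplied there.
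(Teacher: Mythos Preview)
Your route is genuinely different from the paper's. Rather than rerunning the Lemma~\ref{lem: entropyRel} argument on the $\alpha$-discounted recursion with a modified weight, the paper removes the discount factor by an augmented-MDP reduction: it adjoins an absorbing state $s_e$ and action $a_e$, sets the augmented policy to $\zeta_{a|s}=\alpha\mu_{a|s}$ on the original actions and $\zeta_{a_e|s}=1-\alpha$, and observes that $T^{\mu}:=\alpha H_d^{\mu}$ then satisfies an \emph{undiscounted} entropy recursion of the Lemma~\ref{lem: entropyRel} form with $\eta_{a|s}=\alpha\mu_{a|s}$ playing the role of the policy. From there the Lemma~\ref{lem: entropyRel} steps apply verbatim with the same weight, and the term $\log(\alpha\mu_{a|s})$ in the statement arises structurally as $\log\eta_{a|s}$ rather than through your after-the-fact relabelling of $\lambda_s$. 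This is the standard ``discount equals survival probability'' device; what it buys is literal reuse of the undiscounted argument and its weight, since the augmented MDP again has a genuine absorbing state.

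Your direct approach is reasonable in spirit, but the specific weight you propose does not make the derivative terms collapse in the way you need. With $w_{\mu}=(I-\alpha P_{\mu}^{\top})^{-1}\mathbf{1}$ one has $w_{\mu}-\alpha P_{\mu}^{\top}w_{\mu}=\mathbf{1}$, so weighting and summing leaves $\sum_{s'}\partial H_d^{\mu}(s')/\partial\mu_{a|s}=w_{\mu}(s)\bigl(\rho_d(s,a)-\lambda_s\bigr)$; the left side still depends on $a$, and you cannot yet conclude that $\rho_d(s,a)$ is $a$-independent. Exact cancellation would require a positive left eigenvector of $\alpha P_{\mu}$ with eigenvalue one, which does not exist for $\alpha<1$. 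The paper's augmentation sidesteps precisely this obstruction by placing the problem back in the absorbing-state setting where the original Lemma~\ref{lem: entropyRel} weight is available unchanged.
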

\begin{proof}
Define a new MDP that augments the action and state spaces ($\mathcal{A},\mathcal{S}$) of the original MDP with an additional action $a_e$ and state $s_e$, respectively, and derives its state-transition probability $\{q_{ss'}^a\}$ and policy $\{\zeta_{a|s}\}$ from original MDP as
\begin{footnotesize}
\begin{align*}\label{eq: augmentedMDP}
q_{ss'}^a=
\begin{cases}
p_{ss'}^a & \forall s,s'\in\mathcal{S},a\in\mathcal{A}\\
1 & \text{if } s',a=s_e,a_e\\
1 & \text{if } s'=s=s_e\\
0 & \text{otherwise}
\end{cases}
\zeta_{a|s}=
\begin{cases}
\alpha\mu_{a|s} & \forall (s,a)\in(\mathcal{S},\mathcal{A})\\
1-\alpha & \text{if } a = a_e, s\in \mathcal{S}\\
0 & \text{if } a\in\mathcal{A}, s=s_e\\
1 & \text{if } a = a_e, s = s_e
\end{cases}
\end{align*}
\end{footnotesize}
\noindent Next, we define $T^{\mu}:=\alpha H^{\mu}_d$ that satisfies $\text{\small
$T^{\mu}(s') = \sum_{a's''}\eta_{a'|s'}p_{s's''}^{a'}\big[-\log p_{s's''}^a-\log \eta_{a'|s'} + T^{\mu}(s'')\big]$}$ derived using (\ref{eq: DiscountShannonEnt}) where $\eta_{a'|s'}=\alpha\mu_{a'|s'}$. The subsequent steps of the proof are same as the proof of Lemma \ref{lem: entropyRel}. 
\end{proof}

\noindent\appendixx{\label{app: Q_learnConv}\em Proof of Proposition \ref{pro: Proposition1}: }The proof in this section is analogous to the proof of Proposition 5.5 in \cite{bertsekas1996neuro}. Let $\bar{T}$ be the map in (\ref{eq: Q_map2}). The stochastic iterative updates in (\ref{eq: Q_upd1}) can be re-written as $\scriptstyle\bar{\Psi}_{t+1}(x_t,u_t)=(1-\nu_t(x_t,u_t))\bar{\Psi}_t(x_t,u_t)+\nu_t(x_t,u_t)\big([\bar{T}\bar{\Psi}_t](x_t,u_t)
+ w_t(x_t,u_t)\big)$ where $\scriptstyle w_t(x_t,u_t) = c_{x_tx_{t+1}}^{u_t}-
\frac{\gamma^2}{\beta}\log\sum_{a}\exp(-\frac{\beta}{\gamma} \bar{\Psi}_t(s_{t+1},a)) - \bar{T}\bar{\Psi}_t(x_t,u_t)$. Let $\mathcal{F}_t$ represent the history of the stochastic updates, i.e.,
$\scriptstyle\mathcal{F}_t=\{\bar{\Psi}_0,\hdots,\bar{\Psi}_t,w_0,\hdots,w_{t-1},\nu_0,\hdots,\nu_t\},$ then $\scriptstyle\mathbb{E}[w_t(x_t,u_t)|\mathcal{F}_t]=0$ and $\scriptstyle\mathbb{E}[w_t^2(x_t,u_t)|\mathcal{F}_t]\leq K(1+\max_{s,a}\bar{\Psi}_t^2(s,a))$,
where $K$ is a constant. These expressions satisfy the conditions on the expected value and the variance of $w_t(x_t,u_t)$ that along with the contraction property of $\bar{T}$ guarantees the convergence of the stochastic updates (\ref{eq: Q_upd1}) as illustrated in the Proposition 4.4 in \cite{bertsekas1996neuro}.

\noindent{\bf Proof of Theorem \ref{thm: ParaDerivatives}: }We show that the map $T_1$ in (\ref{eq: V_zeta_j}) is a contraction map. For any $K_{\zeta_s}^{\beta}$ and $\bar{K}_{\zeta_s}^{\beta}$ we obtain that $|[T_1K_{\zeta_{s}}^{\beta}-T_1\bar{K}_{\zeta_{s}}^{\beta}](s')|\leq \gamma\sum_{a,s''}p_{s's''}^{a}\mu_{a|s'}|K_{\zeta_{s}}^{\beta}(s'',a)-\bar{K}_{\zeta_{s}}^{\beta}(s'',a)|$. Note that this inequality is similar to the one in (\ref{eq: AppC1}); thus, we follow the exact same steps from (\ref{eq: AppC1}) to (\ref{eq: AppC5}) to show that $\|T_1K_{\zeta_s}^{\beta}-T_1\bar{K}_{\zeta_s}^{\beta}\|_{\xi}\leq \gamma\lambda\|K_{\zeta_s}^{\beta}-\bar{K}_{\zeta_s}^{\beta}\|_{\xi}$ and $\gamma\lambda<1$.

\noindent{\bf Proof of Proposition \ref{pro: Proposition2}: }The proof in this section is similar to the proof of Proposition \ref{pro: Proposition1} in Appendix \ref{app: Q_learnConv}. Additional conditions on the boundedness of the derivatives $\big|\frac{\partial c_{ss'}^a}{\partial \zeta_l}\big|$ and $\big|\frac{\partial c_{ss'}^a}{\partial \eta_k}\big|$ are required to bound the variance $\mathbb{E}[w_t^2|\mathcal{F}_t]$.
\end{appendices}

\ifCLASSOPTIONcaptionsoff
  \newpage
\fi



\bibliographystyle{IEEEtran}
\bibliography{IEEEabrv}

\begin{thebibliography}{10}
\providecommand{\url}[1]{#1}
\csname url@samestyle\endcsname
\providecommand{\newblock}{\relax}
\providecommand{\bibinfo}[2]{#2}
\providecommand{\BIBentrySTDinterwordspacing}{\spaceskip=0pt\relax}
\providecommand{\BIBentryALTinterwordstretchfactor}{4}
\providecommand{\BIBentryALTinterwordspacing}{\spaceskip=\fontdimen2\font plus
\BIBentryALTinterwordstretchfactor\fontdimen3\font minus
  \fontdimen4\font\relax}
\providecommand{\BIBforeignlanguage}[2]{{%
\expandafter\ifx\csname l@#1\endcsname\relax
\typeout{** WARNING: IEEEtran.bst: No hyphenation pattern has been}%
\typeout{** loaded for the language `#1'. Using the pattern for}%
\typeout{** the default language instead.}%
\else
\language=\csname l@#1\endcsname
\fi
#2}}
\providecommand{\BIBdecl}{\relax}
\BIBdecl

\bibitem{feinberg2012handbook}
E.~A. Feinberg and A.~Shwartz, \emph{Handbook of Markov decision processes:
  methods and applications}.\hskip 1em plus 0.5em minus 0.4em\relax Springer
  Science \& Business Media, 2012, vol.~40.

\bibitem{bertsekas1996neuro}
D.~P. Bertsekas and J.~N. Tsitsiklis, \emph{Neuro-dynamic programming}.\hskip
  1em plus 0.5em minus 0.4em\relax Athena Scientific Belmont, MA, 1996, vol.~5.

\bibitem{hordijk1979linear}
A.~Hordijk and L.~Kallenberg, ``Linear programming and markov decision
  chains,'' \emph{Management Science}, vol.~25, no.~4, pp. 352--362, 1979.

\bibitem{abbasi2014linear}
Y.~Abbasi-Yadkori, P.~L. Bartlett, and A.~Malek, ``Linear programming for
  large-scale markov decision problems,'' in \emph{JMLR Workshop and Conference
  Proceedings}, no.~32.\hskip 1em plus 0.5em minus 0.4em\relax MIT Press, 2014,
  pp. 496--504.

\bibitem{watkins1992q}
C.~J. Watkins and P.~Dayan, ``Q-learning,'' \emph{Machine learning}, vol.~8,
  no. 3-4, pp. 279--292, 1992.

\bibitem{jaynes1957information}
E.~T. Jaynes, ``Information theory and statistical mechanics,'' \emph{Physical
  review}, vol. 106, no.~4, p. 620, 1957.

\bibitem{rose1991deterministic}
K.~Rose, ``Deterministic annealing, clustering, and optimization,'' Ph.D.
  dissertation, California Institute of Technology, 1991.

\bibitem{sharma2008scalable}
P.~Sharma, S.~Salapaka, and C.~Beck, ``A scalable approach to combinatorial
  library design for drug discovery,'' \emph{Journal of chemical information
  and modeling}, vol.~48, no.~1, pp. 27--41, 2008.

\bibitem{yu2013maximal}
J.-G. Yu, J.~Zhao, J.~Tian, and Y.~Tan, ``Maximal entropy random walk for
  region-based visual saliency,'' \emph{IEEE transactions on cybernetics},
  vol.~44, no.~9, pp. 1661--1672, 2013.

\bibitem{xu2014aggregation}
Y.~Xu, S.~M. Salapaka, and C.~L. Beck, ``Aggregation of graph models and markov
  chains by deterministic annealing,'' \emph{IEEE Transactions on Automatic
  Control}, vol.~59, no.~10, pp. 2807--2812, 2014.

\bibitem{chen2005protein}
L.~Chen, T.~Zhou, and Y.~Tang, ``Protein structure alignment by deterministic
  annealing,'' \emph{Bioinformatics}, vol.~21, no.~1, pp. 51--62, 2005.

\bibitem{ziebart2008maximum}
B.~D. Ziebart, A.~L. Maas, J.~A. Bagnell, and A.~K. Dey, ``Maximum entropy
  inverse reinforcement learning.'' in \emph{Aaai}, vol.~8.\hskip 1em plus
  0.5em minus 0.4em\relax Chicago, IL, USA, 2008, pp. 1433--1438.

\bibitem{hasselt2010double}
H.~V. Hasselt, ``Double q-learning,'' in \emph{Advances in Neural Information
  Processing Systems}, 2010, pp. 2613--2621.

\bibitem{fox2015taming}
R.~Fox, A.~Pakman, and N.~Tishby, ``Taming the noise in reinforcement learning
  via soft updates,'' \emph{arXiv preprint arXiv:1512.08562}, 2015.

\bibitem{grau2018soft}
J.~Grau-Moya, F.~Leibfried, and P.~Vrancx, ``Soft q-learning with
  mutual-information regularization,'' 2018.

\bibitem{peters2010relative}
J.~Peters, K.~Mulling, and Y.~Altun, ``Relative entropy policy search,'' in
  \emph{Twenty-Fourth AAAI Conference on Artificial Intelligence}, 2010.

\bibitem{neu2017unified}
G.~Neu, A.~Jonsson, and V.~G{\'o}mez, ``A unified view of entropy-regularized
  markov decision processes,'' \emph{arXiv preprint arXiv:1705.07798}, 2017.

\bibitem{asadi2017alternative}
K.~Asadi and M.~L. Littman, ``An alternative softmax operator for reinforcement
  learning,'' in \emph{Proceedings of the 34th International Conference on
  Machine Learning-Volume 70}.\hskip 1em plus 0.5em minus 0.4em\relax JMLR.
  org, 2017, pp. 243--252.

\bibitem{nachum2017bridging}
O.~Nachum, M.~Norouzi, K.~Xu, and D.~Schuurmans, ``Bridging the gap between
  value and policy based reinforcement learning,'' in \emph{Advances in Neural
  Information Processing Systems}, 2017, pp. 2775--2785.

\bibitem{dai2017sbeed}
B.~Dai, A.~Shaw, L.~Li, L.~Xiao, N.~He, Z.~Liu, J.~Chen, and L.~Song, ``Sbeed:
  Convergent reinforcement learning with nonlinear function approximation,''
  \emph{arXiv preprint arXiv:1712.10285}, 2017.

\bibitem{schulman2015trust}
J.~Schulman, S.~Levine, P.~Abbeel, M.~Jordan, and P.~Moritz, ``Trust region
  policy optimization,'' in \emph{International conference on machine
  learning}, 2015, pp. 1889--1897.

\bibitem{haarnoja2018soft}
T.~Haarnoja, A.~Zhou, P.~Abbeel, and S.~Levine, ``Soft actor-critic: Off-policy
  maximum entropy deep reinforcement learning with a stochastic actor,''
  \emph{arXiv preprint arXiv:1801.01290}, 2018.

\bibitem{aguilar2015location}
A.~Aguilar-Garcia, S.~Fortes, M.~Molina-Garc{\'\i}a, J.~Calle-S{\'a}nchez,
  J.~I. Alonso, A.~Garrido, A.~Fern{\'a}ndez-Dur{\'a}n, and R.~Barco,
  ``Location-aware self-organizing methods in femtocell networks,''
  \emph{Computer Networks}, vol.~93, pp. 125--140, 2015.

\bibitem{siddique2015wireless}
U.~Siddique, H.~Tabassum, E.~Hossain, and D.~I. Kim, ``Wireless backhauling of
  5g small cells: Challenges and solution approaches,'' \emph{IEEE Wireless
  Communications}, vol.~22, no.~5, pp. 22--31, 2015.

\bibitem{manganini2015policy}
G.~Manganini, M.~Pirotta, M.~Restelli, L.~Piroddi, and M.~Prandini, ``Policy
  search for the optimal control of markov decision processes: A novel
  particle-based iterative scheme,'' \emph{IEEE transactions on cybernetics},
  vol.~46, no.~11, pp. 2643--2655, 2015.

\bibitem{9096570}
A.~{Srivastava} and S.~M. {Salapaka}, ``Simultaneous facility location and path
  optimization in static and dynamic networks,'' \emph{IEEE Transactions on
  Control of Network Systems}, pp. 1--1, 2020.

\bibitem{mahajan2009planar}
M.~Mahajan, P.~Nimbhorkar, and K.~Varadarajan, ``The planar k-means problem is
  np-hard,'' in \emph{International Workshop on Algorithms and
  Computation}.\hskip 1em plus 0.5em minus 0.4em\relax Springer, 2009, pp.
  274--285.

\bibitem{abin2016querying}
A.~A. Abin, ``Querying beneficial constraints before clustering using facility
  location analysis,'' \emph{IEEE transactions on cybernetics}, vol.~48, no.~1,
  pp. 312--323, 2016.

\bibitem{huang2017locally}
D.~Huang, C.-D. Wang, and J.-H. Lai, ``Locally weighted ensemble clustering,''
  \emph{IEEE transactions on cybernetics}, vol.~48, no.~5, pp. 1460--1473,
  2017.

\bibitem{shi2019soft}
W.~Shi, S.~Song, and C.~Wu, ``Soft policy gradient method for maximum entropy
  deep reinforcement learning,'' \emph{arXiv preprint arXiv:1909.03198}, 2019.

\bibitem{xiang2019task}
G.~Xiang and J.~Su, ``Task-oriented deep reinforcement learning for robotic
  skill acquisition and control,'' \emph{IEEE transactions on cybernetics},
  2019.

\bibitem{xia2015parameterized}
L.~Xia and Q.-S. Jia, ``Parameterized markov decision process and its
  application to service rate control,'' \emph{Automatica}, vol.~54, pp.
  29--35, 2015.

\bibitem{hausknecht2015deep}
M.~Hausknecht and P.~Stone, ``Deep reinforcement learning in parameterized
  action space,'' \emph{arXiv preprint arXiv:1511.04143}, 2015.

\bibitem{masson2016reinforcement}
W.~Masson, P.~Ranchod, and G.~Konidaris, ``Reinforcement learning with
  parameterized actions,'' in \emph{Thirtieth AAAI Conference on Artificial
  Intelligence}, 2016.

\bibitem{wei2018hierarchical}
E.~Wei, D.~Wicke, and S.~Luke, ``Hierarchical approaches for reinforcement
  learning in parameterized action space,'' in \emph{2018 AAAI Spring Symposium
  Series}, 2018.

\bibitem{xiong2018parametrized}
J.~Xiong, Q.~Wang, Z.~Yang, P.~Sun, L.~Han, Y.~Zheng, H.~Fu, T.~Zhang, J.~Liu,
  and H.~Liu, ``Parametrized deep q-networks learning: Reinforcement learning
  with discrete-continuous hybrid action space,'' \emph{arXiv preprint
  arXiv:1810.06394}, 2018.

\bibitem{narayanan2017event}
V.~Narayanan and S.~Jagannathan, ``Event-triggered distributed control of
  nonlinear interconnected systems using online reinforcement learning with
  exploration,'' \emph{IEEE transactions on cybernetics}, vol.~48, no.~9, pp.
  2510--2519, 2017.

\bibitem{ccilden2014toward}
E.~{\c{C}}ilden and F.~Polat, ``Toward generalization of automated temporal
  abstraction to partially observable reinforcement learning,'' \emph{IEEE
  transactions on cybernetics}, vol.~45, no.~8, pp. 1414--1425, 2014.

\bibitem{jaynes2003probability}
E.~T. Jaynes, \emph{Probability theory: The logic of science}.\hskip 1em plus
  0.5em minus 0.4em\relax Cambridge university press, 2003.

\bibitem{biondi2014maximizing}
F.~Biondi, A.~Legay, B.~F. Nielsen, and A.~Wkasowski, ``Maximizing entropy over
  markov processes,'' \emph{Journal of Logical and Algebraic Methods in
  Programming}, vol.~83, no. 5-6, pp. 384--399, 2014.

\bibitem{savas2018entropy}
Y.~Savas, M.~Ornik, M.~Cubuktepe, and U.~Topcu, ``Entropy maximization for
  constrained markov decision processes,'' in \emph{2018 56th Annual Allerton
  Conference on Communication, Control, and Computing (Allerton)}.\hskip 1em
  plus 0.5em minus 0.4em\relax IEEE, 2018, pp. 911--918.

\bibitem{ross2011bayesian}
S.~Ross, J.~Pineau, B.~Chaib-draa, and P.~Kreitmann, ``A bayesian approach for
  learning and planning in partially observable markov decision processes,''
  \emph{Journal of Machine Learning Research}, vol.~12, no. May, pp.
  1729--1770, 2011.

\bibitem{hansen2006robust}
L.~P. Hansen, T.~J. Sargent, G.~Turmuhambetova, and N.~Williams, ``Robust
  control and model misspecification,'' \emph{Journal of Economic Theory}, vol.
  128, no.~1, pp. 45--90, 2006.

\bibitem{zhou2017infinite}
Z.~Zhou, M.~Bloem, and N.~Bambos, ``Infinite time horizon maximum causal
  entropy inverse reinforcement learning,'' \emph{IEEE Transactions on
  Automatic Control}, vol.~63, no.~9, pp. 2787--2802, 2017.

\bibitem{rawlik2010approximate}
K.~Rawlik, M.~Toussaint, and S.~Vijayakumar, ``Approximate inference and
  stochastic optimal control,'' \emph{arXiv preprint arXiv:1009.3958}, 2010.

\bibitem{ghavamzadeh2011speedy}
M.~Ghavamzadeh, H.~J. Kappen, M.~G. Azar, and R.~Munos, ``Speedy q-learning,''
  in \emph{Advances in neural information processing systems}, 2011, pp.
  2411--2419.

\bibitem{bellemare2016increasing}
M.~G. Bellemare, G.~Ostrovski, A.~Guez, P.~S. Thomas, and R.~Munos,
  ``Increasing the action gap: New operators for reinforcement learning,'' in
  \emph{Thirtieth AAAI Conference on Artificial Intelligence}, 2016.

\bibitem{zheng2018drn}
G.~Zheng, F.~Zhang, Z.~Zheng, Y.~Xiang, N.~J. Yuan, X.~Xie, and Z.~Li, ``Drn: A
  deep reinforcement learning framework for news recommendation,'' in
  \emph{Proceedings of the 2018 World Wide Web Conference}, 2018, pp. 167--176.

\bibitem{knox2012reinforcement}
W.~B. Knox and P.~Stone, ``Reinforcement learning from human reward:
  Discounting in episodic tasks,'' in \emph{2012 IEEE RO-MAN: The 21st IEEE
  International Symposium on Robot and Human Interactive Communication}.\hskip
  1em plus 0.5em minus 0.4em\relax IEEE, 2012, pp. 878--885.

\bibitem{kobayashi2007mathematics}
K.~Kobayashi \emph{et~al.}, \emph{Mathematics of information and coding}.\hskip
  1em plus 0.5em minus 0.4em\relax American Mathematical Soc., 2007, vol. 203.

\bibitem{sharma2011entropy}
P.~Sharma, S.~M. Salapaka, and C.~L. Beck, ``Entropy-based framework for
  dynamic coverage and clustering problems,'' \emph{IEEE Transactions on
  Automatic Control}, vol.~57, no.~1, pp. 135--150, 2011.

\end{thebibliography}

\begin{IEEEbiography}
[{\includegraphics[width=1in,height=1.25in,clip,keepaspectratio]{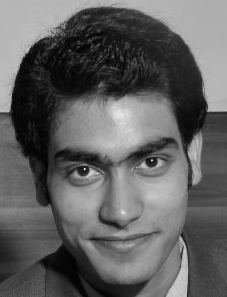}}]{Amber Srivastava}
received the B.Tech degree in Mechanical Engineering from the Indian Institute of Technology, Kanpur, in 2014 after which he held employment with an FMCG in India for an year (2014-2015) as an Assistant Manager. He obtained his Masters in Mathermatics from University of Illinois at Urbana Champaign (UIUC) in 2020. Currently he is a PhD student in the Mechanical Science and Engineering department of UIUC. His areas of interest are optimization, learning and controls.\vspace{-0.5cm}
\end{IEEEbiography}
\begin{IEEEbiography}
[{\includegraphics[width=1in,height=1.25in,clip,keepaspectratio]{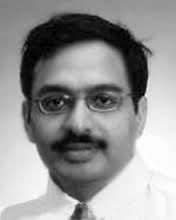}}]{Srinivasa M. Salapaka}
received the B. Tech from the IIT Madras, in 1995 and MS and PhD degrees in Mechanical from the University of California, Santa Barbara, in 1997 and 2002, respectively. From 2002 to 2004, he was a postdoctoral associate at Massachusetts Institute of Technology. Since 2004, he has been a faculty in the Mechanical Science and Engineering Department, University of Illinois, Urbana-Champaign. His areas of current research are controls for nanotechnology, combinatorial optimization, and power electronics.
\end{IEEEbiography}

\pagebreak
\onecolumn

\section*{\Large \textbf{Supplementary Material for CYB-E-2020-05-1237}}
\vspace{0.5cm}

\section{Link to Source Code}\label{sec: linkToSource}
Please find the sample codes for the model-free RL, and parameterized MDP and RL setting in either of the links below:
\begin{enumerate}
\item \url{https://drive.google.com/drive/folders/1fVA3nM1Oh6drQpi3H50ZPTDppK8Tv5bX?usp=sharing}
\item \url{https://uofi.box.com/s/twbo129bmst0e24ykgr1rhwwc8kyhe1o}
\end{enumerate}

\section{Simulations on the Double Chain Environment}\label{sec: DoubleChainSM}

In this section we demonstrate our Algorithm \ref{alg: Algorithm1} to determine the control policy $\mu^*$ for the finite and infinite Shannon entropy variants of the double chain (DC) environments in Figure \ref{fig: DoubleChainEnvs}. 

\begin{figure}[h]
    \centering
    \includegraphics[width=0.45\columnwidth]{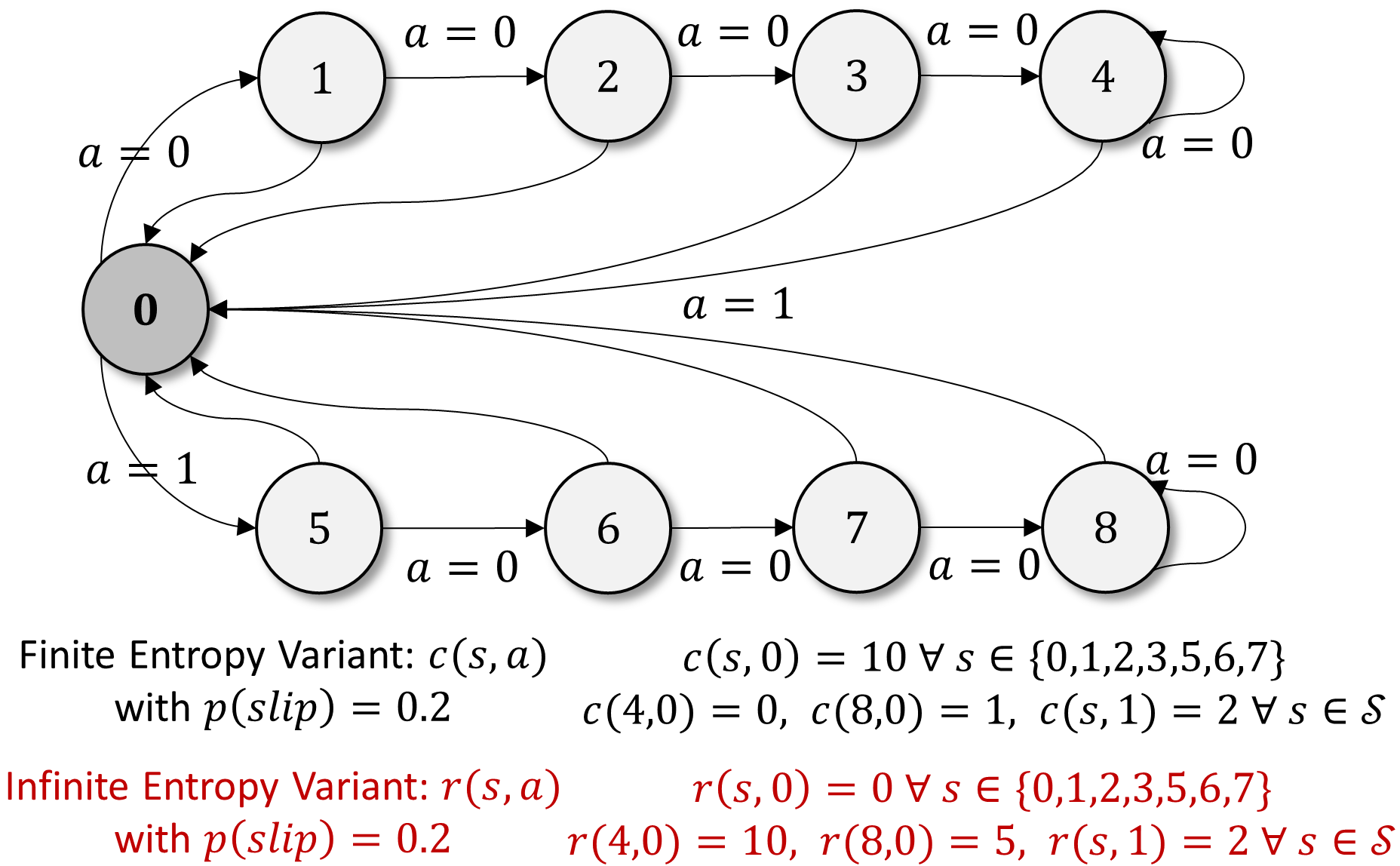}
    \caption{Finite and infinite entropy variants (respectively, cost and reward variants) of Double Chain (DC) environment. State space $\mathcal{S}=\{0,1,\hdots,8\}$ and action space $\mathcal{A}=\{0,1\}$. Cost $c(s,a)$ for version (a) is denoted in black and reward $r(s,a)$ for version (b) is in red. The agent slips to the state $s'=0$ with probability $0.2$ every time the action $a=0$ is taken at the state $s\in\mathcal{S}$\textbackslash$\{4\}$ in the finite entropy variant (and $s\in\mathcal{S}$ for the infinite entropy version (b)). The state $s=4$ is the cost-free termination state for version (a).}
    \label{fig: DoubleChainEnvs}
\end{figure}

{\em Faster Convergence to Optimal $J^*$: } Figures \ref{fig: CD_Simula}(a1)-(a4) (finite entropy variant of DC) and Figures \ref{fig: CD_Simula} (d1)-(d4) (infinite entropy variant of DC) illustrate the faster convergence of our MEP-based Algorithm \ref{alg: Algorithm1} for different discount factor $\gamma$ values. Here the learning error is given by $\Delta V$ at each episode between the value function $V_{\beta}^{\mu}$ corresponding to policy $\mu=\mu(ep)$ in the episode $ep$ and the optimal value function $J^*$; that is, 
\begin{align}
\Delta V(ep) = \frac{1}{N}\sum_{i=1}^N\sum_{s\in\mathcal{S}}\big|V_{\beta,i}^{\mu(ep)}(s)-J^*(s)\big|,
\end{align}
where $N$ denotes the total experimental runs and $i$ indexes the value function $V_{\beta,i}^{\mu}$ for each run. As observed in Figures \ref{fig: CD_Simula}(a1) and \ref{fig: CD_Simula}(a3), and Figures \ref{fig: CD_Simula}(d1) and \ref{fig: CD_Simula}(d3), our Algorithm \ref{alg: Algorithm1} converges even faster as the discount factor $\gamma$ decreases. We characterize the faster convergence rates also in terms of the convergence time - more precisely the  {\em critical} episode $\bar{E}_{pr}$ beyond which learning error $\Delta V$ is within $5\%$ of the optimal $J^*$ (see  Figures \ref{fig: CD_Simula}(a5) and \ref{fig: CD_Simula}(d5)). As is observed in the figures, the performance of our (MEP-based) algorithms gets even better with decreasing $\gamma$ values as the stochastic policy $\mu_{\beta}^*$ optimizes the amount of exploration required along the paths based on $\gamma$ values, whereas the other algorithms show deterioration in performance as $\gamma$ decreases.

\begin{figure}
    \centering
    \includegraphics[width=0.78\columnwidth]{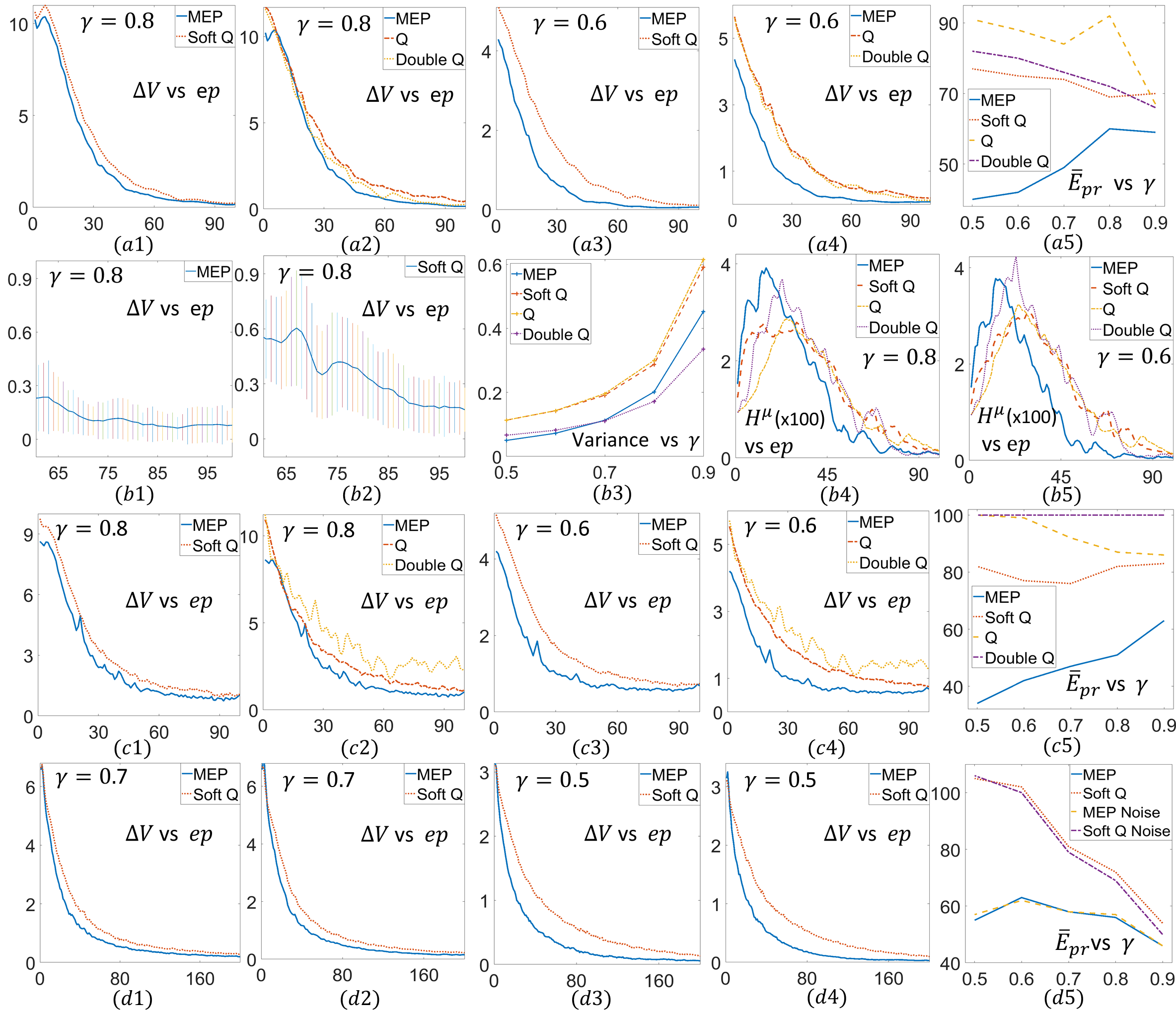}
    \caption{Performance of MEP-based algorithm: Illustrations on Double Chain Environment in Figure \ref{fig: DoubleChainEnvs}. (a1)-(a4) Finite Entropy variant:  Illustrates faster convergence of Algorithm \ref{alg: Algorithm1} (MEP) at different $\gamma$ values. (a5) Demonstrates faster and increasing rates of convergence of Algorithm \ref{alg: Algorithm1} (MEP) with decreasing $\gamma$ values. (b1)-(b3) Illustrates higher variance of Soft Q in comparison to MEP. (b4)-(b5) MEP exhibits larger Shannon Entropy (equivalently, exploration) during early episodes and smaller Shannon Entropy (equivalently, more exploitation) in later episodes; together these are responsible for faster convergence of MEP as illustrated in Fig. \ref{fig: CD_Simula}(a1)-(a5). (c1)-(c5) Cost Version (with added Gaussian noise) : Similar observations as in (a1)-(a5) with significantly higher instability in learning with Double Q algorithm. (d1)-(d5) {\em Infinite Entropy Variant} : Demonstrates faster convergence of Algorithm \ref{alg: Algorithm1} (MEP) to $J^*$ without noise in (d1),(d3) and with added noise in (d2),(d4). (d5) Illustrates the consistent faster convergence rates of MEP across different $\gamma$ values.}
    \label{fig: CD_Simula}
\end{figure}

{\em Smaller Variance: }Figures \ref{fig: CD_Simula}(b1)-(b3) compare the variance observed in the learning process during the last $40$ epsiodes of the $N$ experimental runs. As demonstrated, the Soft Q (Fig. \ref{fig: CD_Simula}(b2)) algorithm exhibit higher variances when compared to to the MEP-based (Fig. \ref{fig: CD_Simula}(b1)). The Figure \ref{fig: CD_Simula}(b3) plots the {\em average} variances of all the algorithms in the last $40$ episodes of the learning process with respect to different values of $\gamma$. As illustrated, Soft Q and Q learning exhibit higher variances across all $\gamma$ values in comparison to our and Double Q algorithms. Between our and Double Q, the variance in our algorithm is smaller at lower values of $\gamma$ and vice-versa.

{\em Inherently better exploration and exploitation: } The Shannon entropy $H^{\mu}$ corresponding to our algorithm is higher during the initial episodes indicating a better exploration under the learned policy $\mu$, when compared to other algorithms as seen in Figures \ref{fig: CD_Simula}(b4)-(b5). Additionally, it exhibits smaller $H^{\mu}$ in the later episodes indicating a more exploitative nature of the learned policy $\mu$. Unbiased policies resulting from our algorithm along with enhanced exploration-exploitation trade-off results into the faster convergence rates, and smaller variance as  discussed above.

{\em Robustness to noise in data: }Figures \ref{fig: CD_Simula}(c1)-(c5) demonstrate robustness to noisy environments; here the instantaneous cost $c(s,a)$ in the finite horizon variant of DC is noisy. For the purpose of simulations, we add Gaussian noise $\mathcal{N}(0,\sigma^2)$ with $\sigma=5$ when $a=1,s\in\mathcal{S}$, $a=0,s=8$, otherwise $\sigma=10$. Similar to our observations and conclusions in Figures \ref{fig: CD_Simula}(a1)-(a4), Figures \ref{fig: CD_Simula}(d1)-(d2) ($\gamma=0.8$) and Figures \ref{fig: CD_Simula}(d3)-(d4) ($\gamma=0.6$) illustrate faster convergence of our MEP-based algorithm. Also, similar to the Figure \ref{fig: CD_Simula}(a5), Figure \ref{fig: CD_Simula}(d5) exhibits higher convergence rates (i.e. lower $\bar{E}_{pr}$) $\forall$ $\gamma$ values in comparison to benchmark algorithms. In the infinite entropy variant of DC, results of similar nature are obtained upon adding the noise $\mathcal{N}(0,\sigma^2)$ as illustrated in Figure \ref{fig: CD_Simula}(d2) and \ref{fig: CD_Simula}(d4).

\section{Comparison with MIRL \cite{grau2018soft} and REPS \cite{peters2010relative}}\label{sec: MIRLvsREPS}

In this section we demonstrate the benefit of considering distributions over the {\em paths of an MDP} over considering distribution over the actions. Figure \ref{fig:CompWithMIRL}(a1)-(a4) compare the mutual information regularization (MIRL) algorithm presented in \cite{grau2018soft} (where mutual information is regularized with same discount factor as instantaneous cost) with the mutual information of distribution over the paths of MDP (our MEP-based idea that results into {\em no} discount factor for the regularizer term). Note the faster convergence plots $\frac{\Delta V}{J^*}$ versus $ep$ plots in Figure \ref{fig:CompWithMIRL}(a1)-a(3). In the Figure \ref{fig:CompWithMIRL}(a4) we demonstrate the faster convergence rate ($\bar{E}_{pr}$ versus $\gamma$) obtained when considering distribution over the paths of the MDP.
\begin{figure}
    \centering
    \includegraphics[width=0.7\columnwidth]{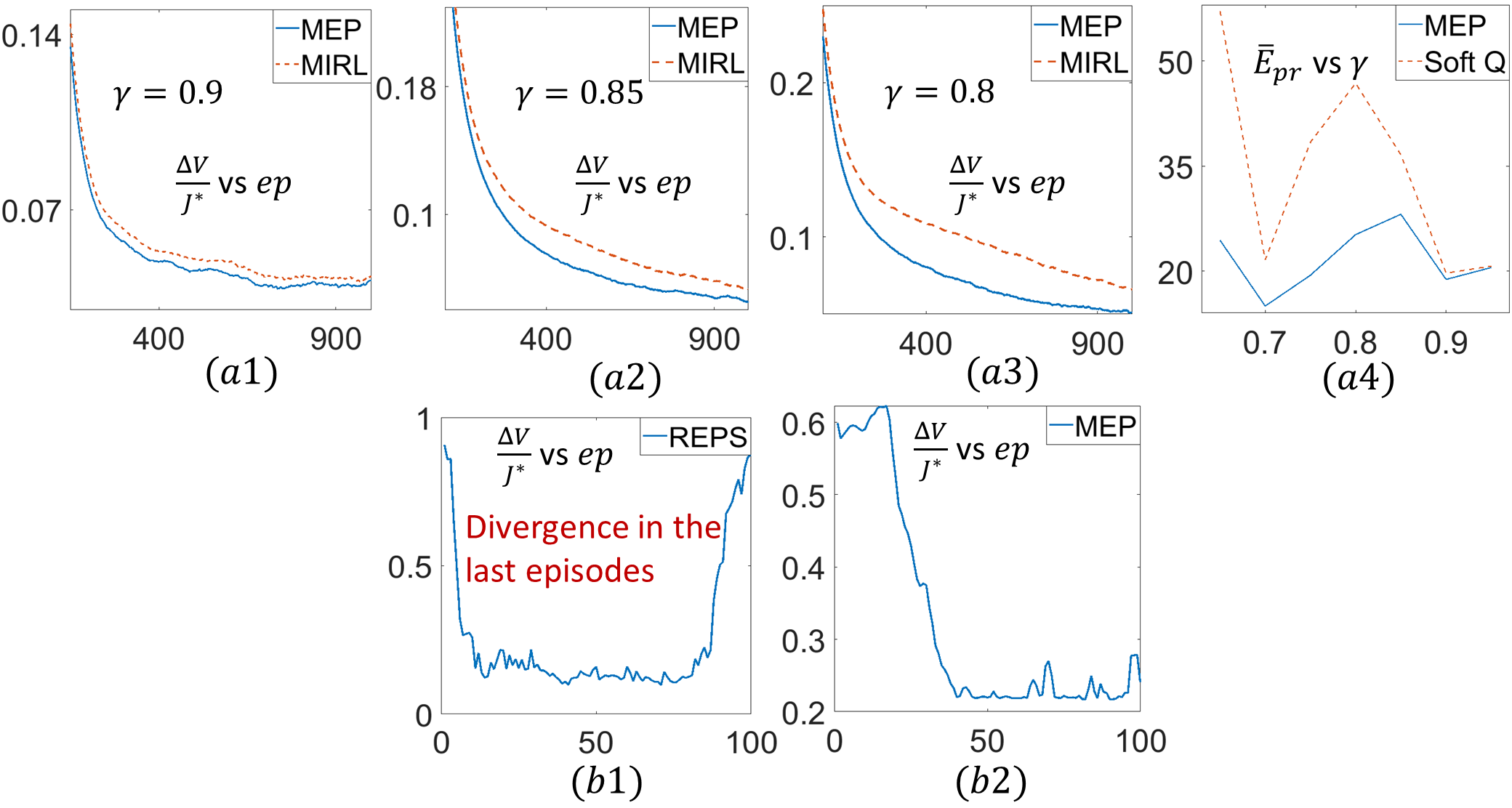}
    \caption{(b1)-(b2) Demonstrating the effect of noisy environment on the Relative Entropy Policy Search (REPS) method in \cite{peters2010relative}. For the same number of interactions with the noisy Gridworld environment, same amount of greedy exploration the REPS method diverges towards the end episode of the learning process; whereas, the MEP-based algorithm does not diverge. Similar results are seen on comparing REPS with entropy regularized (Soft Q) learning too.}
    \label{fig:CompWithMIRL}
\end{figure}

In Figure \ref{fig:CompWithMIRL}(b1)-(b2) we compare the performance of the relative entropy policy search algorithm (REPS) \cite{peters2010relative} to our MEP-based algorithm in the case of noisy environment. For the purpose of simulation we consider the infinite entropy variant of the double chain environment in Figure \ref{fig: DoubleChainEnvs}. We add Gaussian noise $\mathcal{N}(0,\sigma^2)$ with $\sigma=10$ when $s=4$, $a=0$, $\sigma=5$ when $s=8$, $a=0$, $\sigma=2$ when $s\neq 0$, $a=1$, $\sigma=1$ else. We ensure fairness in comparison by allowing equal number of interactions with the environment for both the algorithms. As is illustrated in Figure \ref{fig:CompWithMIRL}(b1) the algorithm in \cite{peters2010relative} diverges during the last stages of learning owing to the noise in the environment. However, for the same environment our MEP-based algorithm performs better.

\section{Design of Self Organizing Networks}\label{Sec: SelfOrgNetwork}

Here, we consider the scenario of self-organizing network that are responsible to provide good wifi access at all the relevant locations in a hotel lobby where the number of routers and the location of the modem are pre-specified. Figure \ref{fig:SelfOrgNetwork}(a) illustrates the floor plan of a hotel lobby with areas such as reception, manager's office, and cyber space clearly marked. We distribute user nodes over the provided floor plan based on the number of wifi users in each of these areas, and assume the modem to be located near the reception area. Figure \ref{fig:SelfOrgNetwork}(b) demonstrates the user node and modem location on a 2D graph. Considering the objective is to allocate $6$ routers and design communication routes from modem to each user via the routers that maximizes the cumulative signal strength at each user node we design this network both in the model-based scenario as well as model-free setting. For the purpose of simulation we assume that the signal strength is inversely proportional to the square of euclidean distance ({\em Qin, Qiaofeng, et al. "SDN controller placement with delay-overhead balancing in wireless edge networks." IEEE Transactions on Network and Service Management 15.4 (2018): 1446-1459.}). Figure \ref{fig:SelfOrgNetwork}(c) represents the allocated routers and the communication routes in the network for the model-based scenario. Figure \ref{fig:SelfOrgNetwork}(d) represents the allocated routers and the communication routes in the case of model-free setting.

\begin{figure}
    \centering
    \includegraphics[width=0.6\columnwidth]{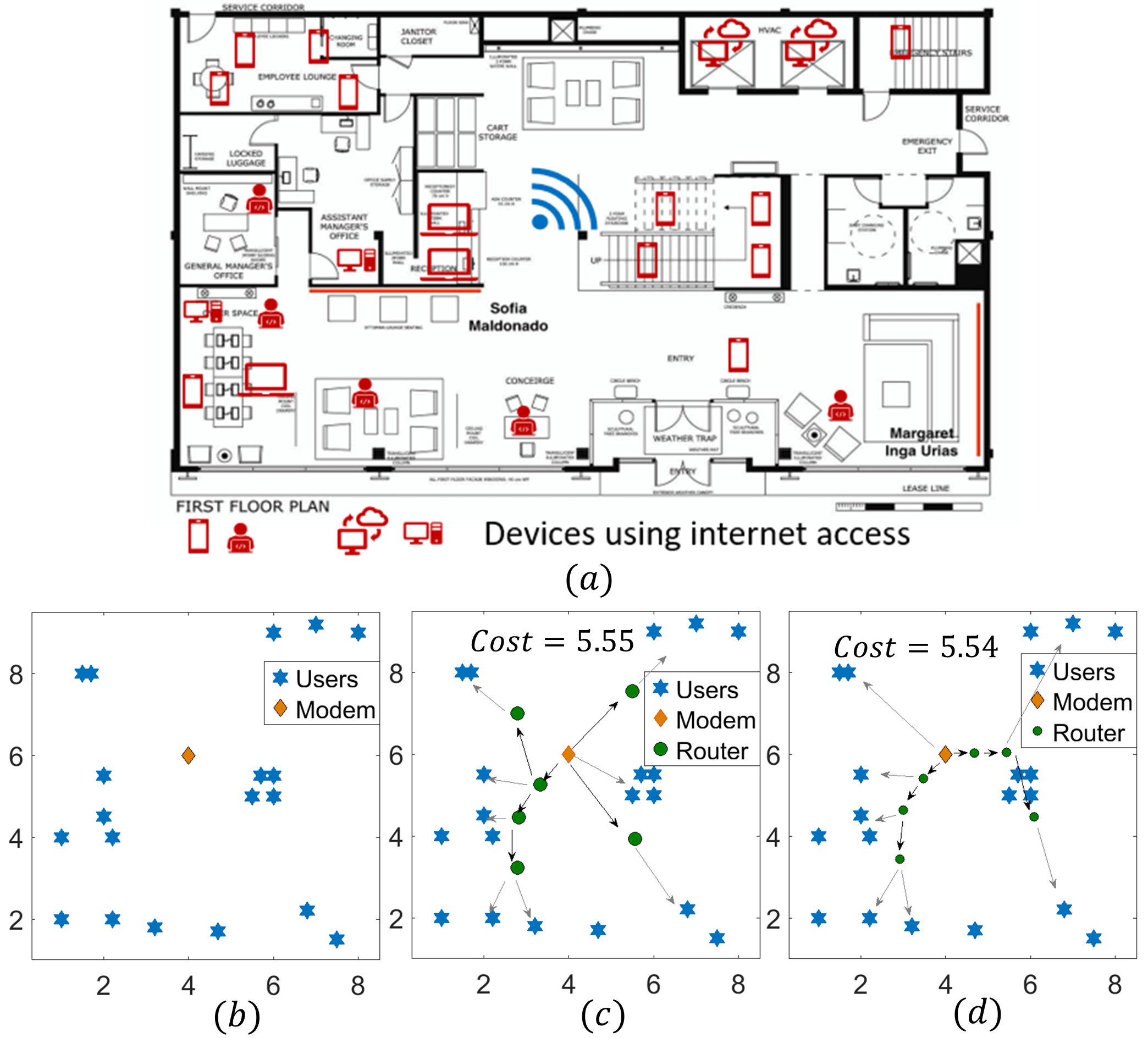}
    \caption{Design of Self Organizing Networks. (a) Floor plan of a hotel lobby along. Areas requiring internet access are marked via appropriate symbols. (b) 2D Schematic of the floor plan with user and modem location marked. (c) Network design with router location and routing. Cost function is square euclidean distance. (d) Network design with router location (parameter) and routing (control policy) in the model-free RL setting.}
    \label{fig:SelfOrgNetwork}
\end{figure}

\section{\textbf{Choice of annealing parameters $\sigma$, $\tau$}} \label{sec: AnnealParaDetails}

{\em Algorithm \ref{alg: Algorithm1}: } We follow a linear schedule $\beta_k = \sigma k$ ($\sigma>0$) in our simulations on model-free RL setting as suggested in the entropy regularized (soft Q) benchmark algorithm in \cite{fox2015taming}. The idea is to anneal the parameter $\beta$ from a small value (at which the policy $\mu_{\beta}^*$ in (\ref{eq: Policy}) is explorative) to a large value (where the policy becomes exploitative). As suggested in \cite{fox2015taming} the parameter $\sigma$ can be obtained by performing initial runs (for a small number of iteration) with different values of $\sigma$, and picking the one that results into lower value function corresponding to the learned policy. This identified value of $\sigma$ for a particular domain can then be re-used in similar domains without the need of performing any initial runs.

{\em Algorithm \ref{alg: Algorithm2} and \ref{alg: Algorithm3}: }The parameter $\tau$ is referred to as the annealing rate and is chosen to be greater than $1$. The resulting schedule $\beta_{k+1}=\tau\beta_k$ gemoterically anneals the Lagrange parameter $\beta$ from a small value $\beta_{\min}\approx 0$ to a large value $\beta_{\max}$ at which the control policy $\mu_{\beta}^*$ in (\ref{eq: Policy}) converges to $0$ or $1$. In the case of parameterized MDPs, this facilitates a homotopy from the convex function $H^{\mu}(s)$ to the (possibly) non-convex $J^{\mu}_{\zeta\eta}(s)$ in (\ref{eq: ParValFunc}), and prevents from getting stuck in poor local minima. For the purpose of simulations in Figure \ref{fig: 5G_SC} we consider the value $\tau\in(1.01, 1.04)$.

A practical method to determine appropriate $\tau$ can be as follows. Start with a large (for instance $\tau\approx 1.5$) estimate for $\tau$. If the parameter values obtained in the initial iterations of the algorithm {\em oscillate} a lot then decrease $\tau$. Choose the $\tau$ value where the observed parameter values stop oscillating for the initial iterations of the algorithm. This practical method is rooted in the phenomenon of {\em phase transition} that our Algorithms \ref{alg: Algorithm2} and \ref{alg: Algorithm3} undergo. We illustrate this phenomenon further below. 

The Algorithms \ref{alg: Algorithm2} and \ref{alg: Algorithm3} address the class of parameterized MDPs that simultaneously solve for an unknown parameter and the control policy. These problems are akin to the Facility Location Problem (FLP) addressed in \cite{rose1991deterministic}. In particular, \cite{rose1991deterministic} develops a Maximum Entropy Principle framework to determine the location of the facilities (parameter), and associate each user node to the closest facility (control policy). In the resulting algorithm (popularly known as Deterministic Annealing (DA)) it is observed that the solution changes significantly only at certain {\em critical values} of $\beta=\beta_{cr}$ that correspond to the instances of phase transition. At other values of $\beta$, the solution does not change much \cite{sharma2011entropy}. It has been observed that a geometric law $\beta_{k+1}=\tau\beta_k$ with $\tau=1+\epsilon>1$ to anneal $\beta$ suffices to capture the changes in the solution that occur during the phase transition. Thus, with reference to the method for choosing a value of $\tau$, if the initial iterations result in high variation in parameter values then possibly some phase transitions are getting skipped and the user needs to anneal slower (i.e., reduce $\tau$) to capture all the phase transitions appropriately.

\section{\textbf{List of Symbols}}\label{LOS}
\begin{center}
\begin{tabular}{c c | c c}
  $x_t$ & state at time $t$ & $u_t$ & action at time $t$\\
  $\mathcal{S}$ & State Space & $\mathcal{A}$ & Action Space\\
  $\mu$ & control policy & $p(s'|s,a)$ & state transition probability\\
  $\delta$ & cost-free termination state &$\gamma$ & discount factor \\
  $J^{\mu}(s)$ & value function under policy $\mu$ & $\mu^*$ & optimal control policy\\
  $\omega$ & path of the MDP & $p_{\mu}(\omega|s)$ & ditribution over the paths\\
  $\Gamma$ & Set of proper policies & $H^{\mu}(s)$ & Shannon Entropy of the distribution $\{p_{\mu}(\omega|s)\}$\\
  $\beta$ & annealing (Lagrange) parameter & $J_0$ & constant value\\
  $V_{\beta}^{\mu}(s)$ & Lagrangian or free-energy & $c_{x_tx_{t+1}}^{u_t}$ & short for $c(x_t,u_t,x_{t+1})$\\
  $\mu_{u_t|x_t}$ & short for $\mu(u_t|x_t)$ & $p_{x_tx_{t+1}}^{u_t}$ & short for $p(x_{t+1}|x_t,u_t)$\\
  $\bar{c}_{ss'}^a$ & short for $c(s,a,s')+\frac{\gamma}{\beta}\log p(s'|s,a)$ & $\lambda_s$ & Lagrange parameter in Lemma 2\\
  $\mu_{\beta}^*(a|s)$ & optimal policy under MEP & $\Lambda_{\beta}(s,a)$ & state-action value function\\
  $V_{\beta}^*$ & optimal value function & $\xi$ & defines weighted norm for contraction mapping\\
  $\|\cdot\|_{\xi}$ & weighted norm & $\alpha$ & contraction constant\\
  $\Psi_{t+1}$ & estimate of $\Lambda_{\beta}$ at time $t$ &  $\nu_t(x_t,u_t)$ & learning rate at time $t$\\
  $\beta_{\min}$ & minimum value of $\beta$ & $\beta_{\max}$ & maximum value of $\beta$\\
  $N$ & number of episodes & $\tau$ & annealing rate for $\beta$\\
  $H_d^{\mu}(s)$ & discounted Shannon entropy & $\alpha^t$ & discount at $t$ for $H_d^{\mu}$\\
  $V_{\beta,I}^{\mu}(s)$ & free-energy for infinite entropy case of MDP & $\hat{c}_{x_tx_{t+1}}^{u_t}$ & short for $c_{x_tx_{t+1}}^{u_t}+\frac{\gamma^t}{\beta\alpha^t}\log p_{x_tx_{t+1}}^{u_t}$\\
  $\mu_{\beta,I}^*$ & optimal policy for infinite entropy MDP & $\Phi_{\beta}(s,a)$ & state-action value function\\
  $V_{\beta,I}^*$ & optimal free-energy function & $\zeta=\{\zeta_s\}$ & set of unknown state parameters\\ 
  $\eta=\{\eta_a\}$ & set of unknown action parameters & $J_{\zeta\eta}^{\mu}$ & value function parameterized MDPs \\
\end{tabular}
\end{center}
\begin{center}
\begin{tabular}{c c|c c}
$x_t(\zeta)$& state at time $t$ with parameter $\zeta_{x_t}$ & $u_t(\eta)$ & action at time $t$ with parameter $\eta_{u_t}$\\
  $\mu_{\beta,\zeta\eta}^*$ & optimal control policy & $V_{\beta,\zeta\eta}^*$ & optimal value function\\
  $G_{\zeta_s}^{\beta}(s')$ & derivative $\partial V_{\beta,\zeta\eta}^*(s')/\partial \zeta_s$ & $G_{\eta_a}^{\beta}(s')$ & derivative $\partial V_{\beta,\zeta\eta}^*(s')/\partial \eta_a$\\
  $K_{\zeta_s}^{\beta}(s',a')$ & derivatives $G_{\zeta_s}^{\beta}(s')=\sum_{a'}\mu_{a'|s'}K_{\zeta_s}^{\beta}(s',a')$ & $L_{\eta_a}^{\beta}(s',a')$ & derivatives $G_{\eta_a}^{\beta}(s')=\sum_{a'}\mu_{a'|s'}L_{\eta_a}^{\beta}(s',a')$\\
  $K_{\zeta_s}^{t+1}(x_t,u_t)$ & learned estimate of $K_{\zeta_s}$ at $t$ & $B,C$ & finite constants
\end{tabular}
\end{center}

\end{document}